\newtheorem{theorem}{Theorem}[section]
\newtheorem{definition}{Definition}[section]
\newtheorem{lemma}{Lemma}[section]
\renewcommand{\le}{~\leq~}
\renewcommand{\ge}{~\geq~}
\def\reals{{\mathcal R}}
\newcommand{\K}{\mathcal{K}}
\newcommand{\R}{\mathcal{R}}
\newcommand{\ignore}[1]{}
\def\reals{{\mathbb R}}
\def\bold0{\mathbf{0}}
\newcommand\E{\mbox{\bf E}}
 \newcommand{\tg}{\tilde{g}}
\newcommand{\sumtt}{\sum_{t=0}^{T-1}}
\renewcommand{\O}{\mathcal{O}}
\newcommand{\tO}{\tilde{\mathcal{O}}}
\renewcommand{\le}{\leq}
\newcommand{\eref}{\eqref}
\newcommand{\rA}{\rm{(A)}}
\newcommand{\rB}{\rm{(B)}}
\newcommand{\rC}{\rm{(C)}}
\newcommand{\rD}{\rm{(D)}}
\DeclareMathOperator*{\argmin}{arg\,min}
\newtheorem*{lemma*}{Lemma}
\title{Online Adaptive Methods, Universality and Acceleration}%
\author{%
Kfir Y. Levy\footnote{ETH Zurich; \texttt{yehuda.levy@inf.ethz.ch}.}
\and
Alp Yurtsever\footnote{Laboratory for Information and Inference Systems (LIONS), EPFL ; \texttt{alp.yurtsever@epfl.ch}.}
\and 
Volkan Cevher\footnote{Laboratory for Information and Inference Systems (LIONS), EPFL ; \texttt{volkan.cevher@epfl.ch}.}
}
\begin{document}

\maketitle

\begin{abstract}
  We present a novel method for convex unconstrained optimization that, \emph{without any modifications}, ensures: \textbf{(i)} accelerated convergence rate for smooth objectives, \textbf{(ii)} standard convergence rate in the general (non-smooth) setting, and \textbf{(iii)}   standard convergence rate in the stochastic optimization setting.  To the best of our knowledge, this is the first method that \emph{simultaneously} applies to all of the above settings.  
  
At the heart of our method is an adaptive learning rate rule that employs importance weights, in the spirit of adaptive online learning algorithms  \citep{duchi2011adaptive,levy2017online},  combined with an update  that linearly couples two sequences, in the spirit of~\citep{AllenOrecchia2017}.
An empirical examination of our method demonstrates its applicability to the above mentioned scenarios and corroborates our theoretical findings.
\end{abstract}


\section{Introduction}
The accelerated gradient method of \citet{nesterov1983method} is one of the cornerstones of modern optimization.
Due to its appeal as a computationally efficient and fast method, it has found use in numerous  applications including: 
imaging~\citep{chambolle2011first}, compressed sensing~\citep{foucart2013mathematical}, and deep learning~\citep{sutskever2013importance}, amongst other.

Despite these merits, accelerated methods are less prevalent in Machine Learning  due to two major issues: \textbf{(i)} acceleration is inappropriate for handling noisy feedback,  and  \textbf{(ii)} acceleration requires the knowledge of the objective's smoothness. While each of these issues was separately resolved  in \citep{lan2012optimal,hu2009accelerated,xiao2010dual}, and respectively in \citep{nesterov2015universal}; it was unknown whether there exists an accelerated method that addresses both issues. In this work we propose  such a method.

Concretely,   \citet{nesterov2015universal} devises a method that  obtains an accelerated convergence rate of  $\O(1/T^2)$ for smooth convex objectives, and a standard rate of $\O(1/\sqrt{T})$ for non-smooth convex objectives, over $T$ iterations. This is done without any prior knowledge of the smoothness parameter, and is therefore referred to as a \emph{universal}\footnote{Following Nesterov's paper \citep{nesterov2015universal}, we say that an algorithm is \emph{universal} if it does not require to know in advance whether the objective is smooth or not. Note that universality does not mean a parameter free algorithm. Specifically, Nesterov's universal methods~\citep{nesterov2015universal} as well as ours are not parameter free.}
method.
Nonetheless, this method uses a line search technique in every round, and is therefore inappropriate for handling noisy feedback.
On the other hand,    \citet{lan2012optimal},  \citet{hu2009accelerated},  and  \citet{xiao2010dual},
devise  accelerated methods that are able to handle noisy feedback and obtain a convergence rate of $\O({1}/{T^2}+ {\sigma}/{\sqrt{T}})$, where $\sigma$ is the variance of the gradients.
However, these methods are not universal since  they require the knowledge of both $\sigma$ and of the smoothness.

%
%
Conversely,  adaptive first order methods  are very popular in Machine Learning, 
with\linebreak AdaGrad,~\citep{duchi2011adaptive}, being the most prominent method among this class.
AdaGrad is an online learning algorithm which adapts its learning rate using the feedback (gradients) received through the optimization process, and is known to successfully handle  noisy feedback. 
This renders AdaGrad as the method of choice in various learning applications.
Note however, that AdaGrad  (probably) can  not ensure acceleration. 
Moreover, it was so far  unknown whether AdaGrad is  able to exploit smoothness  in order to converge faster.

In this work we investigate unconstrained convex optimization. We suggest \textmd{AcceleGrad} (Alg.~\ref{algorithm:UniAccel}), a novel \emph{universal} method which employs an accelerated-gradient-like update rule together with an adaptive learning rate \`a la AdaGrad. Our contributions,

\begin{itemize}
\item We show that \textmd{AcceleGrad} obtains an accelerated  rate of $\O(1/T^2)$ in the smooth case and $\tO(1/\sqrt{T})$ in the general case, without any prior information of the objective's smoothness.
\item We show that \emph{without any modifications}, \textmd{AcceleGrad} ensures a convergence rate of 
$\tO(1/\sqrt{T})$ in the general stochastic convex case.
\item We also present a new result regarding the AdaGrad algorithm. 
We show that in the case of stochastic optimization with a smooth expected loss, AdaGrad ensures an 
$\O(1/T+\sigma/\sqrt{T})$ convergence rate, where $\sigma$ is the variance of the gradients. AdaGrad does not require a knowledge of the smoothness, hence this result  establishes the universality of AdaGrad (though without acceleration).
\end{itemize}

On the technical side our algorithm emoploys three simultaneous mechanisms:
learning rate adaptation in conjunction with importance weighting, in the spirit of adaptive online learning algorithms \citep{duchi2011adaptive,levy2017online}, combined with
an update rule that linearly couples two  sequences, in the spirit of ~\citep{AllenOrecchia2017}.

This paper is organized as follows. In Section~\ref{sec:Settings} we present our setup and review relevant background.
Our results and analysis for the offline setting are presented in Section~\ref{sec:Offline}, and Section~\ref{sec:Stoch} presents our results for the stochastic setting. In Section~\ref{sec:Exps} we present our empirical study, and Section~\ref{sec:Conclusion} concludes.
\vspace{-10pt}
\paragraph{Related Work:}
In his pioneering work,  \citet{nesterov1983method}, establishes an accelerated rate  for smooth convex optimization. This was later generalized in, \citep{nesterov2003introductory, beck2009fast}, to allow for general metrics and line search.

In recent years there has been a renewed interest in accelerated methods, 
with efforts being made to understand acceleration as well as to extend it beyond the standard offline optimization setting.

An extension of acceleration to handle stochastic feedback was developed in, 
\citep{lan2012optimal,hu2009accelerated,xiao2010dual,cohen2018acceleration}.
Acceleration for modern variance reduction optimization methods is explored in, \citep{shalev2014accelerated,Allenzhu2017-katyusha}, and generic templates to accelerating variance reduction algorithms are developed in, \citep{lin2015universal,frostig2015regularizing}. 
\citet{scieur2016regularized}, derives a scheme that enables hindsight acceleration of non-accelerated methods. 
In \citep{yurtsever2015universal}, the authors devise a universal accelerated method for primal dual problems.
 And
the connection between acceleration and ODEs is investigated in, \citep{su14,wibisono2016variational,flammarion2015averaging,lessard2016analysis,aujol2017optimal,attouch2015fast}.
Universal accelerated schemes are explored in \cite{nesterov2015universal,lan2015bundle,neumaier2016osga}, yet these works do not apply to the stochastic setting.
Alternative accelerated methods and interpretations are explored in, \citep{arjevani2016lower,bubeck2015geometric,diakonikolas2017accelerated}.

Curiously,  \citet{AllenOrecchia2017}, interpret acceleration as a linear coupling between gradient descent and mirror descent, our work builds on their ideas.
Our method also relies on ideas from  \citep{levy2017online}, where universal (non-accelerated) procedures are derived through a conversion scheme of online learning algorithms.


\section{Setting and Preliminaries}\label{sec:Settings}

We discuss the optimization of a convex function $f: \reals^d \mapsto \reals$.  Our goal is to (approximately) solve the following unconstrained optimization problem,
 $$
 \min_{x\in \reals^d}f(x)~.
 $$
We focus  on first order methods, i.e., methods  that only require gradient information, and consider both
smooth and non-smooth objectives. The former is  defined below,
\begin{definition}[$\beta$-smoothness] A function $f:\reals^d\mapsto \reals$ is $\beta$-smooth if,
\begin{align*}
&f(y) \leq f(x) + \nabla f(x) \cdot (y-x) + \frac{\beta}{2}\|x - y\|^2 ;\quad \forall x,y \in \reals^d 
\end{align*}
\end{definition}
It is well known that with the knowledge of the smoothness parameter, $\beta$, one may obtain fast convergence rates by an appropriate adaptation of the update rule. 
In this work we do not assume any such knowledge; instead we assume to be given a  bound on the distance between some initial point, $x_0$, and a global minimizer of the objective.

This is formalized  as follows: we are given  a compact convex set $\K$ that contains a global minimum of $f$, i.e., $\exists z\in\K$ such that $z\in\argmin_{x\in\reals^d}f(x)$. 
Thus, for any initial point, $x_0\in\K$, its distance from the global optimum is bounded by the diameter of the set, $D:= \max_{x,y\in\K}\|x-y\|$.
Note that we allow to choose points outside $\K$.
We also assume that the objective $f$ is $G$-Lipschitz, which translates to a bound of $G$ on the magnitudes of the (sub)-gradients.

An access to the exact gradients of the objective is not always possible. And in many scenarios we may only  access  an oracle 
which  provides noisy and unbiased gradient estimates.  This \emph{Stochatic Optimization} setting is prevalent in Machine Learning, and we discuss it more formally in Section~\ref{sec:Stoch}.

\paragraph{The AdaGrad Algorithm:}
\begin{algorithm}[t]
\caption{Adaptive  Gradient Method ($\text{AdaGrad}$) }
\label{algorithm:AdaGrad}
\begin{algorithmic}
\STATE \textbf{Input}: \#Iterations $T$, $x_1\in \K$, set $\K$ 
\FOR{$t=1 \ldots T$ }
\STATE {Calculate:} $g_t= \nabla f(x_t)$, and update,  $\eta_t = D \left(2 \sum_{\tau=1}^t \|g_\tau\|^2 \right)^{-1/2}$
\STATE {Update:}
$$x_{t+1}= \Pi_{\K}\left( x_{t}-\eta_t {g}_{t}\right)$$
\ENDFOR
\STATE {Output:}
$\bar{x}_T~=~ \frac{1}{T}\sum_{t=1}^{T}  x_{t}$
\end{algorithmic}
\end{algorithm}
The adaptive method presented in this paper is inspired  by AdaGrad (Alg.~\ref{algorithm:AdaGrad}), a well known online optimization method which employs an adaptive learning rate.
The following theorem states $\text{AdaGrad}$'s guarantees\footnote{Actually AdaGrad is well known to ensure regret guarantees in the online setting. For concreteness, Thm.~\ref{theorem:AdaGrad} provides error guarantees in the offline setting.}
,~\citep{duchi2011adaptive},
\begin{theorem}\label{theorem:AdaGrad}
Let $\K$ be a convex set with diameter $D$. Let $f$ be a convex  function. Then Algorithm~\ref{algorithm:AdaGrad} guarantees the following error;
\begin{align*}
f(\bar{x}_T) - \min_{x\in\K} f(x) \le {\sqrt{2D^2\sum_{t=1}^T \|g_t\|^2}}/{T}~.
\end{align*}
\end{theorem}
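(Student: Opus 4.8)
The plan is to run the standard online-to-batch / regret argument for projected (sub)gradient descent with a time-varying step size, then bound the resulting regret by the claimed expression using the specific AdaGrad step size $\eta_t = D\bigl(2\sum_{\tau=1}^t\|g_\tau\|^2\bigr)^{-1/2}$. First, fix an arbitrary comparator $z\in\K$ (eventually a minimizer of $f$ over $\K$) and expand the squared distance $\|x_{t+1}-z\|^2$. Using the nonexpansiveness of the Euclidean projection $\Pi_\K$ and the update rule $x_{t+1}=\Pi_\K(x_t-\eta_t g_t)$, one gets $\|x_{t+1}-z\|^2 \le \|x_t-z\|^2 - 2\eta_t g_t\cdot(x_t-z) + \eta_t^2\|g_t\|^2$. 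Rearranging yields the per-step bound
\begin{align*}
g_t\cdot(x_t-z) \le \frac{1}{2\eta_t}\bigl(\|x_t-z\|^2-\|x_{t+1}-z\|^2\bigr) + \frac{\eta_t}{2}\|g_t\|^2~.
\end{align*}

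Next I would sum over $t=1,\dots,T$. By convexity, $f(x_t)-f(z)\le g_t\cdot(x_t-z)$, so $\sum_{t=1}^T\bigl(f(x_t)-f(z)\bigr)$ is bounded by the sum of the right-hand side. The second group of terms sums to $\frac12\sum_{t=1}^T\eta_t\|g_t\|^2$; plugging in $\eta_t$ and using the elementary inequality $\sum_{t=1}^T \|g_t\|^2/\sqrt{\sum_{\tau\le t}\|g_\tau\|^2} \le 2\sqrt{\sum_{t=1}^T\|g_t\|^2}$ bounds this by $\tfrac{D}{2\sqrt2}\cdot 2\sqrt{\sum_{t=1}^T\|g_t\|^2} = \tfrac{D}{\sqrt2}\sqrt{\sum_{t=1}^T\|g_t\|^2}$. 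For the telescoping-like first group, since $\eta_t$ is nonincreasing I would use Abel summation: $\sum_{t=1}^T\frac{1}{2\eta_t}(\|x_t-z\|^2-\|x_{t+1}-z\|^2) \le \frac{1}{2\eta_1}\|x_1-z\|^2 + \sum_{t=2}^T\bigl(\frac{1}{2\eta_t}-\frac{1}{2\eta_{t-1}}\bigr)\|x_t-z\|^2$, then bound every $\|x_t-z\|^2\le D^2$ (valid since $x_t,z\in\K$), so this collapses to $\frac{D^2}{2\eta_T}$. Substituting $\eta_T = D\bigl(2\sum_{t=1}^T\|g_t\|^2\bigr)^{-1/2}$ gives $\frac{D}{2\sqrt2}\sqrt{\sum_{t=1}^T\|g_t\|^2}\cdot\sqrt2 = \frac{D}{\sqrt2}\sqrt{\sum_{t=1}^T\|g_t\|^2}$ — wait, more carefully $\frac{D^2}{2\eta_T} = \frac{D}{2}\sqrt{2\sum_t\|g_t\|^2} = \frac{D}{\sqrt2}\sqrt{\sum_t\|g_t\|^2}$.

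Adding the two contributions gives $\sum_{t=1}^T\bigl(f(x_t)-f(z)\bigr) \le \sqrt{2}\,D\sqrt{\sum_{t=1}^T\|g_t\|^2} = \sqrt{2D^2\sum_{t=1}^T\|g_t\|^2}$. Finally, by Jensen's inequality applied to the convex $f$ and the average iterate $\bar x_T = \frac1T\sum_{t=1}^T x_t$, we have $f(\bar x_T)-f(z) \le \frac1T\sum_{t=1}^T\bigl(f(x_t)-f(z)\bigr)$, and choosing $z\in\argmin_{x\in\K}f(x)$ yields the stated bound $f(\bar x_T)-\min_{x\in\K}f(x)\le \sqrt{2D^2\sum_{t=1}^T\|g_t\|^2}/T$.

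The only genuinely delicate point — and the step I would be most careful about — is the handling of the first (telescoping) group of terms: because $\eta_t$ varies with $t$, it does not telescope cleanly, and one must use monotonicity of $\eta_t$ together with the uniform diameter bound $\|x_t-z\|\le D$ to control the accumulated "mismatch" terms $\bigl(\frac{1}{\eta_t}-\frac{1}{\eta_{t-1}}\bigr)\|x_t-z\|^2$; this is exactly where the compactness of $\K$ is used and where a careless argument would lose a logarithmic factor or an unbounded term. The step-size sum inequality $\sum_t a_t/\sqrt{\sum_{\tau\le t}a_\tau}\le 2\sqrt{\sum_t a_t}$ is standard (it follows from $\sqrt{s}-\sqrt{s-a}\ge a/(2\sqrt s)$ with $s=\sum_{\tau\le t}a_\tau$, $a=a_t$) and I would invoke it without re-deriving it.
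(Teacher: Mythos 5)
Your proof is correct, and it follows exactly the standard argument behind this theorem: the paper itself does not re-prove Theorem~\ref{theorem:AdaGrad} but imports it from \citet{duchi2011adaptive}, and your derivation (projection nonexpansiveness, Abel summation with the monotone step size and the diameter bound giving $D^2/2\eta_T$, the inequality $\sum_t a_t/\sqrt{\sum_{\tau\le t}a_\tau}\le 2\sqrt{\sum_t a_t}$ — which is the right-hand side of the paper's Lemma~\ref{lem:SqrtSumReversed} — and a final Jensen step) reproduces that analysis with the constants matching the stated bound $\sqrt{2D^2\sum_{t=1}^T\|g_t\|^2}/T$. No gaps.
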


\paragraph{Notation:}   Euclidean norm is denoted by $\| \cdot\|$. Given a compact convex set $\K$ we
denote by  $\Pi_\K(\cdot)$  the projection onto the $\K$, i.e.  $\forall x\in \reals^d$,\;
$
\Pi_\K(x) = \argmin_{y\in\K}\|y-x\|^2~.
$


\section{Offline Setting}
\label{sec:Offline}

This section discusses the offline optimization setting where we have an access to the exact gradients of the objective.
We present our  method  in Algorithm~\ref{algorithm:UniAccel}, and   substantiate its universality by providing $O(1/T^2)$ rate in the smooth case (Thm.~\ref{thm:Main}), and  a rate of $O(\sqrt{\log T/T})$ in the general convex case (Thm.~\ref{thm:MainNonSmooth}).
The analysis for the smooth case appears in Section~\ref{sec:AnalysisSmooth} and we defer the proof of the non-smooth case to the Appendix.
 
 \begin{algorithm}[t]
\caption{Accelerated Adaptive Gradient Method (\textmd{AcceleGrad}) }
\label{algorithm:UniAccel}
\begin{algorithmic}
\STATE \textbf{Input}: \#Iterations $T$, $x_0\in \K$, diameter $D$, weights $\{\alpha_t\}_{t\in[T]}$, learning rate $\{\eta_t\}_{t\in[T]}$
\STATE {Set}: $y_0 = z_0 = x_0 $
\FOR{$t=0 \ldots T$ }
\STATE {Set} $\tau_t = {1}/{\alpha_t}$ 
\STATE {Update:}
\begin{align*}
 x_{t+1} &=  \tau_t z_t + (1-\tau_t) y_t~ , \quad \text{and define } \;g_t := \nabla f(x_{t+1}) \\    
z_{t+1} &= \Pi_{\K}\left(z_t - \alpha_t  \eta_t g_t \right) \\
y_{t+1} & =  x_{t+1} -\eta_t g_t
\end{align*}
\ENDFOR
\STATE {Output:}
$\bar{y}_T\propto \sum_{t=0}^{T-1} \alpha_t y_{t+1}$
\end{algorithmic}
\end{algorithm}

 \textmd{AcceleGrad} is summarized in Algorithm~\ref{algorithm:UniAccel}.
 Inspired by, \citep{AllenOrecchia2017}, our method linearly couples between two sequences $\{z_t\}_t, \{y_t\}_t$ into a sequence $\{x_{t+1}\}_t$. 
 Using  the gradient 
 , $g_t=\nabla f(x_{t+1})$, these sequences are then updated with the same learning rate,  $\eta_t$, yet with different \emph{reference points} and \emph{gradient magnitudes}. Concretely,  $y_{t+1}$ takes a gradient step starting at $x_{t+1}$. Conversely, for 
 $z_{t+1}$ we scale the gradient  by a factor of $\alpha_t$ and then 
 take a projected gradient step starting at $z_{t}$.
 Our method finally outputs a weighted average of the $\{y_{t+1}\}_t$ sequence.
 
 Our algorithm coincides with the method of~\citep{AllenOrecchia2017} upon taking \linebreak$\eta_t = 1/\beta$ and outputting the last iterate, ${y}_T$, rather then a weighted average;
 yet this method is not universal.  Below we present our \emph{$\beta$-independent} choice of learning rate and weights,
 \begin{equation} \label{eq:WeightsLearningRate}
  \eta_t = \frac{2D}{\left(G^2+\sum_{\tau=0}^t  \alpha_\tau^2  \|g_\tau\|^2 \right)^{1/2}}
\qquad \&\qquad
\alpha_t=
\begin{cases}
	1 	&\quad \text{$0\leq t \leq 2$ } \\ 
	\frac{1}{4}(t+1)            	&\quad \text{$t\geq 3$}\\ 
\end{cases}
\end{equation}
The learning rate that we suggest adapts similarly to AdaGrad.  Differently from AdaGrad we  consider the importance weights, $\alpha_t$, inside the learning rate rule;  an idea that we borrow from  \citep{levy2017online}. The weights that we employ are increasing with $t$, which in turn emphasizes recent queries.

Next we state the guarantees of \textmd{AcceleGrad} for the smooth and non-smooth cases,
\begin{theorem}\label{thm:Main}
Assume that $f$ is convex and $\beta$-smooth. Let $\K$ be a convex set with bounded diameter $D$, and  assume there exists a global minimizer for $f$ in $\K$.
Then  Algorithm~\ref{algorithm:UniAccel} with weights and learning rate as in Equation~\eqref{eq:WeightsLearningRate} ensures,
 \begin{align*}
 f(  \bar{y}_T)  - \min_{x\in \reals^d}f(x)
&\leq 
\O\left( \frac{DG + \beta D^2\log(\beta D/G)}{T^2} \right)
\end{align*} 
\end{theorem}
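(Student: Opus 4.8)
The plan is to reduce, by Jensen's inequality, to bounding the weighted sum $R_T:=\sum_{t=0}^{T-1}\alpha_t\bigl(f(y_{t+1})-f(u)\bigr)$, where $u\in\K$ is a global minimizer of $f$. Since the output is $\bar y_T=\tfrac1{W_T}\sum_{t=0}^{T-1}\alpha_t y_{t+1}$ with $W_T:=\sum_{t=0}^{T-1}\alpha_t\ge T^2/8$, convexity gives $f(\bar y_T)-\min f\le R_T/W_T$, so it suffices to prove $R_T=\O\bigl(DG+\beta D^2\log(\beta D/G)\bigr)$, i.e.\ a bound that is \emph{independent of} $T$. Throughout I will write $g_t=\nabla f(x_{t+1})$ and abbreviate $S_t:=G^2+\sum_{\tau=0}^{t}\alpha_\tau^2\|g_\tau\|^2$ (so $\eta_t=2D/\sqrt{S_t}$ is nonincreasing), with $S_{-1}:=G^2$.

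First I would establish a one-step estimate-sequence inequality. Combining (a) convexity at $x_{t+1}$; (b) the linear-coupling identity $x_{t+1}-z_t=(\alpha_t-1)(y_t-x_{t+1})$, which follows from $x_{t+1}=\tfrac1{\alpha_t}z_t+(1-\tfrac1{\alpha_t})y_t$; (c) convexity again, $\langle g_t,y_t-x_{t+1}\rangle\le f(y_t)-f(x_{t+1})$ (legitimate since $\alpha_t(\alpha_t-1)\ge0$); (d) the projected mirror-step bound $\alpha_t\langle g_t,z_t-u\rangle\le\tfrac1{2\eta_t}(\|z_t-u\|^2-\|z_{t+1}-u\|^2)+\tfrac{\alpha_t^2\eta_t}{2}\|g_t\|^2$; and (e) $\beta$-smoothness along the step $y_{t+1}=x_{t+1}-\eta_t g_t$, i.e.\ $f(x_{t+1})\ge f(y_{t+1})+\eta_t\|g_t\|^2\bigl(1-\tfrac{\beta\eta_t}{2}\bigr)$, I expect to reach
\[
\alpha_t^2\bigl(f(y_{t+1})-f(u)\bigr)\le \alpha_t(\alpha_t-1)\bigl(f(y_t)-f(u)\bigr)+\frac{\|z_t-u\|^2-\|z_{t+1}-u\|^2}{2\eta_t}+\frac{\alpha_t^2\eta_t(\beta\eta_t-1)}{2}\|g_t\|^2 .
\]
Summing over $t=0,\dots,T-1$ and telescoping the function-value terms works because the weights satisfy $\alpha_{t-1}^2-\alpha_t(\alpha_t-1)\ge\tfrac12\alpha_{t-1}\ge0$ for all $t\ge1$ (a short check from $\alpha_t=\tfrac14(t+1)$ for $t\ge3$, with $\alpha_0=\alpha_1=\alpha_2=\alpha_3=1$ so the $t=0$ boundary term vanishes); since $f(y_t)-f(u)\ge0$ this collapses the $f(y_t)$ terms and leaves $\tfrac12 R_T$ on the left, giving $\tfrac12 R_T\le A+C$ with $A:=\sum_t\tfrac1{2\eta_t}(\|z_t-u\|^2-\|z_{t+1}-u\|^2)$ and $C:=\sum_t\tfrac{\alpha_t^2\eta_t(\beta\eta_t-1)}{2}\|g_t\|^2$.

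The main work — and the step I expect to be the genuine obstacle — is to show $A+C=\O\bigl(DG+\beta D^2\log(\beta D/G)\bigr)$ even though $S_T$, which controls $1/\eta_T$, need \emph{not} be bounded in $T$. For $A$, Abel summation with $\|z_t-u\|\le D$ and monotonicity of $1/\eta_t$ gives $A\le\tfrac{D^2}{2\eta_{T-1}}=\tfrac D4\sqrt{S_{T-1}}$. For $C$, write $C=\tfrac\beta2\sum_t\alpha_t^2\eta_t^2\|g_t\|^2-\tfrac12\sum_t\alpha_t^2\eta_t\|g_t\|^2$ and split the first sum at the last index $\hat t$ with $\eta_{\hat t}\ge\tfrac1{2\beta}$: for $t\le\hat t$ one has $S_t\le16\beta^2D^2$, so $\tfrac\beta2\sum_{t\le\hat t}\alpha_t^2\eta_t^2\|g_t\|^2=2\beta D^2\sum_{t\le\hat t}\tfrac{S_t-S_{t-1}}{S_t}\le2\beta D^2\log\tfrac{S_{\hat t}}{S_{-1}}=\O\bigl(\beta D^2\log(\beta D/G)\bigr)$ by the standard logarithmic estimate $\tfrac{S_t-S_{t-1}}{S_t}\le\log\tfrac{S_t}{S_{t-1}}$; for $t>\hat t$ one has $\beta\eta_t<\tfrac12$, hence $\tfrac\beta2\alpha_t^2\eta_t^2\|g_t\|^2<\tfrac14\alpha_t^2\eta_t\|g_t\|^2$. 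Together these give $C\le\O\bigl(\beta D^2\log(\beta D/G)\bigr)-\tfrac14\sum_t\alpha_t^2\eta_t\|g_t\|^2$. Finally, using $\eta_t=2D/\sqrt{S_t}$ and $\tfrac{S_t-S_{t-1}}{\sqrt{S_t}}\ge\sqrt{S_t}-\sqrt{S_{t-1}}$ one has $\tfrac14\sum_t\alpha_t^2\eta_t\|g_t\|^2\ge\tfrac D2\bigl(\sqrt{S_{T-1}}-G\bigr)$, so the $\sqrt{S_{T-1}}$ contributions of $A$ and of this residual cancel: $A+C\le\tfrac D4\sqrt{S_{T-1}}-\tfrac D2\sqrt{S_{T-1}}+\tfrac{DG}{2}+\O\bigl(\beta D^2\log(\beta D/G)\bigr)\le\tfrac{DG}{2}+\O\bigl(\beta D^2\log(\beta D/G)\bigr)$. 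Then $R_T\le2(A+C)$ and $f(\bar y_T)-\min f\le R_T/W_T$ with $W_T\ge T^2/8$ yield the claim.

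The delicate points I will have to get right are: the exact telescoping inequality for $\{\alpha_t\}$, including the small indices $t\le3$ where $\alpha_t=1$; choosing the split threshold for $C$ strictly below $1/\beta$ (here $1/(2\beta)$) so that a genuinely negative multiple of $\sum_t\alpha_t^2\eta_t\|g_t\|^2$ survives to cancel $A$ — a naive split at $1/\beta$ only produces $-\tfrac12\sum_{t\le\hat t}(\cdot)$, which does not control the growing term — and checking that the head of the split yields $\log(\beta D/G)$ rather than $\log T$; and handling the degenerate cases $\hat t=-1$ (the head is empty) and $\hat t=T-1$ (then $S_{T-1}\le16\beta^2D^2$ and $A=\O(\beta D^2)$ directly, no cancellation needed). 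The non-smooth case (Thm.~\ref{thm:MainNonSmooth}) should follow the same template with step (e) dropped, which is why the $\sqrt{S_{T-1}}$ term there is \emph{not} cancelled and a $\tilde\O(1/\sqrt T)$ rate results.
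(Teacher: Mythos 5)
Your proposal is correct and takes essentially the same route as the paper's proof: Jensen's inequality, a one-step bound that is exactly Lemma~\ref{lem:LemGen} rearranged (your $\alpha_t^2\delta_{t+1}\le\alpha_t(\alpha_t-1)\delta_t+\cdots$ form), the same weight property as Lemma~\ref{lem:Alphas1}, the same split at the last index with $\eta_t\ge 1/(2\beta)$, the same logarithmic bound for the head, and the same cancellation of the $\tfrac{D}{4}\sqrt{G^2+\sum_t\alpha_t^2\|g_t\|^2}$ term against the negative residual $-\tfrac14\sum_t\eta_t\alpha_t^2\|g_t\|^2$. The only differences are cosmetic: you obtain the one-step bound from the standard projection inequality plus a descent-style use of smoothness instead of the paper's intermediate point $v=\tau_t z_{t+1}+(1-\tau_t)y_t$, and you use the telescoped elementary estimates $\tfrac{S_t-S_{t-1}}{\sqrt{S_t}}\ge\sqrt{S_t}-\sqrt{S_{t-1}}$ and $\tfrac{S_t-S_{t-1}}{S_t}\le\log(S_t/S_{t-1})$ in place of Lemmas~\ref{lem:SqrtSumReversed} and~\ref{lem:Log_sum}.
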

\textbf{Remark:}
Actually, in the smooth case we do not need a bound on the Lipschitz continuity, i.e., $G$ is only required in case that the objective is non-smooth. Concretely, if we know that $f$ is smooth then we may use  $\eta_t = {2D}{\left(\sum_{\tau=0}^t  \alpha_\tau^2  \|g_\tau\|^2 \right)^{-1/2}}$, which yields a rate of
$\O\left( \frac{\beta D^2\log(\beta D/\|g_0\|)}{T^2} \right)$.

Next we show that the exactly same algorithm provides guarantees in the general convex case
(see proof in Appendix~\ref{app:MainNonSmooth}),
\begin{theorem}\label{thm:MainNonSmooth}
Assume that $f$ is convex and $G$-Lipschitz. Let $\K$ be a convex set with bounded diameter $D$, and  assume there exists a global minimizer for $f$ in $\K$.
Then  Algorithm~\ref{algorithm:UniAccel} with weights and learning rate as in Equation~\eqref{eq:WeightsLearningRate} ensures,
  \begin{align*}
 f(  \bar{y}_T)  - \min_{x\in \reals^d}f(x)
&\leq 
\O\left( {GD}\sqrt{\log T}/{\sqrt{T}} 
 \right)
\end{align*} 
\end{theorem}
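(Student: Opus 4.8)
The plan is to run the same potential-function argument as in the smooth case (Theorem~\ref{thm:Main}), but feeding in the crude bound $\|g_t\|\le G$ wherever smoothness would otherwise be used. First I would set up the per-step inequality coming from the coupled update: using convexity of $f$ at $x_{t+1}$ and the mirror-descent-style analysis of the $z_t$ step against an arbitrary comparator $z=\argmin f\in\K$, one gets an inequality of the shape
\begin{align*}
\alpha_t \inner{g_t, x_{t+1}-z} \le \frac{1}{2\eta_t}\left(\|z_t-z\|^2-\|z_{t+1}-z\|^2\right) + \frac{\alpha_t^2\eta_t}{2}\|g_t\|^2 .
\end{align*}
Then I would split $\inner{g_t, x_{t+1}-z} = \inner{g_t, x_{t+1}-y_{t+1}} + \inner{g_t, y_{t+1}-z}$, bound the first term using $y_{t+1}=x_{t+1}-\eta_t g_t$ (it contributes $-\eta_t\|g_t\|^2$, or in the smooth case the descent lemma gives $f(x_{t+1})-f(y_{t+1})$ up to this term), and use convexity on the second: $\inner{g_t,y_{t+1}-z}\ge f(y_{t+1})-f(z)$ — wait, here is exactly where the smooth and non-smooth proofs diverge, since without smoothness we cannot convert $\inner{g_t,x_{t+1}-y_{t+1}}$ into a genuine function-value decrease. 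Instead I would just absorb $\inner{g_t,x_{t+1}-y_{t+1}} = -\eta_t\|g_t\|^2$ and carry the term $\alpha_t\eta_t\|g_t\|^2 \le \alpha_t\eta_t G^2$ as an additive error.

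Next I would telescope over $t=0,\dots,T-1$. The mirror-descent terms telescope after an Abel summation that uses $1/\eta_t$ increasing; since $\|z_t-z\|\le D$, this contributes $O(D^2/\eta_T)$. Using the weight recursion $\alpha_t^2 \approx \alpha_{t}$-type relation chosen in~\eqref{eq:WeightsLearningRate} (the same identity $\alpha_{t-1}^2 = \sum_{\tau\le t-1}\alpha_\tau$ up to constants that makes the $y$-average telescope in Allen-Zhu–Orecchia's coupling), the function-value terms collapse to $\big(\sum_{t}\alpha_t\big)\big(f(\bar y_T)-f(z)\big)$ on the left, where $\bar y_T = \big(\sum_t\alpha_t\big)^{-1}\sum_t \alpha_t y_{t+1}$. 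The leftover error terms are $\sum_{t=0}^{T-1}\alpha_t\eta_t\|g_t\|^2$, which I would control via the standard AdaGrad summation lemma: with $\eta_t = 2D\big(G^2+\sum_{\tau\le t}\alpha_\tau^2\|g_\tau\|^2\big)^{-1/2}$ one has $\sum_t \alpha_t^2\|g_t\|^2\eta_t \le 4D\big(G^2+\sum_t\alpha_t^2\|g_t\|^2\big)^{1/2}$; but note the error term here has only one factor of $\alpha_t$, not $\alpha_t^2$, so I instead bound $\alpha_t\eta_t\|g_t\|^2 \le \alpha_t\eta_t G^2$ and then use $\eta_t\le 2D/G$ together with $\sum_{t=0}^{T-1}\alpha_t = \Theta(T^2)$, or more carefully sum $\sum_t \alpha_t\eta_t G^2$ with the explicit $\eta_t$ decay — this is the step that produces the $\sqrt{\log T}$ factor, exactly as the AdaGrad-with-importance-weights analysis of~\citep{levy2017online} does, because $\sum_t \alpha_t\eta_t\|g_t\|^2$ behaves like $\sum_t \alpha_t^2\|g_t\|^2\eta_t$ divided by $\min_t\alpha_t$-type quantities and the harmonic-like tail gives a logarithm.

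Putting it together, dividing by $\sum_t\alpha_t = \Theta(T^2)$ gives
\begin{align*}
f(\bar y_T)-\min_{x}f(x) \le \frac{O(D^2/\eta_T) + O\big(GD\,\sqrt{T\log T}\big)}{\Theta(T^2)},
\end{align*}
and since $1/\eta_T \le \big(G^2 + G^2\sum_\tau\alpha_\tau^2\big)^{1/2}/(2D) = O(G\,T^{3/2}/D)$ using $\|g_t\|\le G$ and $\sum_\tau\alpha_\tau^2 = \Theta(T^3)$, the first term is $O(GDT^{3/2})/\Theta(T^2) = O(GD/\sqrt{T})$ and the second is $O(GD\sqrt{\log T}/\sqrt{T})$, which dominates. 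I would be careful about the $O(\log T)$: it enters because the learning rate $\eta_t$ is itself decreasing in $t$, so $\sum_t \alpha_t\eta_t\|g_t\|^2$ is not a clean telescoping quantity and one pays a $\log$ for the worst-case distribution of gradient norms across rounds; isolating and bounding this sum cleanly — rather than naively using $\eta_t\le\eta_0$ which would lose a polynomial factor — is the main technical obstacle, and it is the reason the theorem has $\tilde O(1/\sqrt T)$ rather than $O(1/\sqrt T)$.
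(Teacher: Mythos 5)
There is a genuine gap, and it sits exactly at the coupling step that your write-up glosses over. Your opening per-step inequality bounds $\alpha_t\inner{g_t,x_{t+1}-z}$ by the mirror-descent terms alone, but the projected step in Algorithm~\ref{algorithm:UniAccel} is taken at $z_t$, so what the mirror-descent analysis actually controls is $\alpha_t\inner{g_t,z_t-z}$. The leftover piece $\alpha_t\inner{g_t,x_{t+1}-z_t}$ cannot be dropped or absorbed as a small error: using $x_{t+1}=\tau_t z_t+(1-\tau_t)y_t$ one has $x_{t+1}-z_t=(\alpha_t-1)(y_t-x_{t+1})$, so this term carries an extra factor of order $\alpha_t$ and is in the worst case of size $\alpha_t^2 G\|y_t-x_{t+1}\|$, far too large to ignore. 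The paper's proof converts it, via the gradient inequality and $G$-Lipschitzness, into $(\alpha_t^2-\alpha_t)\bigl(f(y_t)-f(y_{t+1})\bigr)+\alpha_t^2\eta_t G\|g_t\|$ (Lemma~\ref{lem:LemGenNonSmooth}), and then absorbs the telescoping-like sum of the first part into $\tfrac12\sum_t\alpha_t\bigl(f(y_{t+1})-f(z)\bigr)$ using the weight property $(\alpha_t^2-\alpha_t)-(\alpha_{t-1}^2-\alpha_{t-1})\le\alpha_{t-1}/2$ (Lemma~\ref{lem:Alphas1}). You invoke the Allen-Zhu--Orecchia telescoping in words, but your inequality never produces the function-difference terms it is supposed to telescope, so the argument as written does not close. (Minor additional slips: $\inner{g_t,x_{t+1}-y_{t+1}}=+\eta_t\|g_t\|^2$, not $-\eta_t\|g_t\|^2$, and convexity at $x_{t+1}$ does not give $\inner{g_t,y_{t+1}-z}\ge f(y_{t+1})-f(z)$; one must pay the Lipschitz correction $G\eta_t\|g_t\|$ explicitly.)

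This structural omission also corrupts your error accounting. Because the coupling multiplies the Lipschitz correction by an extra $\alpha_t$, the dominant error is $\sum_t\eta_t\alpha_t^2\|g_t\|G$, not $\sum_t\alpha_t\eta_t\|g_t\|^2$; the standard AdaGrad summation lemma does not tame it (the numerator has $\alpha_t^2\|g_t\|$ while the denominator has $\bigl(G^2+\sum_\tau\alpha_\tau^2\|g_\tau\|^2\bigr)^{1/2}$), and the paper needs a dedicated estimate, Lemma~\ref{lem:SqrtSum3}, showing $\sum_t\alpha_t b_t\bigl(1+\sum_{\tau\le t}\alpha_\tau^2 b_\tau^2\bigr)^{-1/2}\le 5\sqrt{\log T}\sqrt{T}$ with $b_t=\|g_t\|/G$, which yields the bound $O\bigl(DG\sqrt{\log T}\,T^{3/2}\bigr)$ and hence, after dividing by $\sum_t\alpha_t\ge T^2/32$, the claimed $O\bigl(GD\sqrt{\log T}/\sqrt{T}\bigr)$; this is where the logarithm really comes from. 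Your own arithmetic is also internally inconsistent: an error of $O\bigl(GD\sqrt{T\log T}\bigr)$ divided by $\Theta(T^2)$ would give $O\bigl(GD\sqrt{\log T}/T^{3/2}\bigr)$, not $O\bigl(GD\sqrt{\log T}/\sqrt{T}\bigr)$, which signals that the true dominant term (the $\alpha_t^2$-weighted one) is missing from your ledger. The treatment of term $\rA$ via $1/\eta_{T-1}=O(GT^{3/2}/D)$ is fine and matches the paper.
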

\textbf{Remark:}
For non-smooth objectives, we can  modify \textmd{AcceleGrad} and provide guarantees for the \emph{constrained} setting.
Concretely, using Alg.~\ref{algorithm:UniAccel} with a projection step for the $y_t$'s, i.e., \linebreak  $y_{t+1} = \Pi_\K(x_{t+1}-\eta_t g_t)$, then we can bound its error by $ f(  \bar{y}_T)  - \min_{x\in \K}f(x)\leq\O\left( {GD}\sqrt{\log T}/{\sqrt{T}}\right)$. This holds even in the case where minimizer over $\K$ is not a global one.

%

\subsection{Analysis of the Smooth Case} \label{sec:AnalysisSmooth}
Here we provide a proof sketch for Theorem~\ref{thm:Main}
(the full proof is deferred to Appendix~\ref{app:Main}) .
For brevity, we will use $z\in\K$ to denote a \emph{global mimimizer} of $f$ which belongs to $\K$.

Recall that  Algorithm~\ref{algorithm:UniAccel} outputs a weighted average of the queries. Consequently,  we may  employ Jensen's inequality to bound its error as follow,
\begin{align} \label{eq:JensenProofSketch}
f(\bar{y}_T) - f(z)
&\leq 
\frac{1}{\sumtt \alpha_t}\sum_{t=0}^{T-1} {\alpha_t}\left(  f(y_{t+1})-  f(z)\right) ~.
\end{align} 
Combining this with  $\sumtt \alpha_t \geq \Omega(T^2)$, implies that  in order to substantiate the proof it is sufficient to show that, $\sum_{t=0}^{T-1} {\alpha_t}\left(  f(y_{t+1})-  f(z)\right)$, is bounded by a constant. This is the bulk of the analysis.

We start with the following lemma which provides us with a bound on $\alpha_t \left(  f(y_{t+1})-f(z)\right) $,
\begin{lemma}\label{lem:LemGen}
Assume that $f$ is convex and $\beta$-smooth. Then for any
 sequence of non-negative weights $\{\alpha_t\}_{t\geq0}$, and learning rates $\{\eta_t\}_{t\geq0}$, Algorithm~\ref{algorithm:UniAccel} ensures the following to hold,
\begin{align*}
\alpha_t (f(y_{t+1})- f(z)) 
& \leq
 (\alpha_t^2-\alpha_t) ( f(y_{t}) - f(y_{t+1}) )
 + \frac{\alpha_t^2}{2}\left( \beta - \frac{1}{\eta_t} \right)\| y_{t+1} - x_{t+1}\|^2 \\
&\quad    + \frac{1}{2\eta_t}\left( \|z_t-z\|^2 -\|z_{t+1}-z\|^2 \right)   
\end{align*} 
\end{lemma}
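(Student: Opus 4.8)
I would prove this as a weighted version of the linear-coupling estimate of \citet{AllenOrecchia2017}. Write $g_t=\nabla f(x_{t+1})$ and keep the defining relations in mind: $y_{t+1}-x_{t+1}=-\eta_t g_t$ (so $\|y_{t+1}-x_{t+1}\|=\eta_t\|g_t\|$), $x_{t+1}=\tau_t z_t+(1-\tau_t)y_t$ with $\tau_t=1/\alpha_t$, and $z_{t+1}=\Pi_\K(z_t-\alpha_t\eta_t g_t)$; I will also use $\alpha_t\geq 1$ (i.e. $\tau_t\in(0,1]$), which holds for the weights in \eqref{eq:WeightsLearningRate}. The starting point is the split
\[
\alpha_t\big(f(y_{t+1})-f(z)\big)=\alpha_t\big(f(y_{t+1})-f(x_{t+1})\big)+\alpha_t\big(f(x_{t+1})-f(z)\big),
\]
and I would bound each piece.

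\emph{Smoothness/descent.} Applying $\beta$-smoothness at $x_{t+1}$ to the point $y_{t+1}=x_{t+1}-\eta_t g_t$ and using $\|y_{t+1}-x_{t+1}\|=\eta_t\|g_t\|$ gives the descent bound $f(y_{t+1})-f(x_{t+1})\leq(\tfrac{\beta}{2}-\tfrac{1}{\eta_t})\|y_{t+1}-x_{t+1}\|^2$, which I will invoke twice. \emph{Coupling and convexity.} By convexity $f(x_{t+1})-f(z)\leq g_t\cdot(x_{t+1}-z)$; expanding $x_{t+1}-z=\tau_t(z_t-z)+(1-\tau_t)(y_t-z)$, substituting $y_t-z=(y_t-x_{t+1})+(x_{t+1}-z)$ in the second term and solving for $g_t\cdot(x_{t+1}-z)$ (using $\tfrac{1-\tau_t}{\tau_t}=\alpha_t-1$) yields the key identity $g_t\cdot(x_{t+1}-z)=g_t\cdot(z_t-z)+(\alpha_t-1)g_t\cdot(y_t-x_{t+1})$. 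Since $\alpha_t\geq1$, convexity also gives $(\alpha_t-1)g_t\cdot(y_t-x_{t+1})\leq(\alpha_t-1)(f(y_t)-f(x_{t+1}))$, so after multiplying by $\alpha_t$,
\[
\alpha_t\big(f(x_{t+1})-f(z)\big)\leq\alpha_t\,g_t\cdot(z_t-z)+(\alpha_t^2-\alpha_t)\big(f(y_t)-f(x_{t+1})\big).
\]
\emph{Mirror-descent step on $z_t$.} By nonexpansiveness of $\Pi_\K$ and $z\in\K$, $\|z_{t+1}-z\|^2\leq\|z_t-\alpha_t\eta_t g_t-z\|^2$, which after expanding, dividing by $2\eta_t$, and using $\eta_t^2\|g_t\|^2=\|y_{t+1}-x_{t+1}\|^2$ gives $\alpha_t g_t\cdot(z_t-z)\leq\tfrac{1}{2\eta_t}(\|z_t-z\|^2-\|z_{t+1}-z\|^2)+\tfrac{\alpha_t^2}{2\eta_t}\|y_{t+1}-x_{t+1}\|^2$.

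\emph{Assembly.} Combine the above, write $f(y_t)-f(x_{t+1})=(f(y_t)-f(y_{t+1}))+(f(y_{t+1})-f(x_{t+1}))$ and bound the last bracket by the descent inequality (valid since $\alpha_t^2-\alpha_t\geq0$), and also bound the leading $\alpha_t(f(y_{t+1})-f(x_{t+1}))$ by the descent inequality. The three terms proportional to $\|y_{t+1}-x_{t+1}\|^2$ then carry coefficients $\alpha_t(\tfrac{\beta}{2}-\tfrac1{\eta_t})$, $(\alpha_t^2-\alpha_t)(\tfrac{\beta}{2}-\tfrac1{\eta_t})$, and $\tfrac{\alpha_t^2}{2\eta_t}$, which sum exactly to $\tfrac{\alpha_t^2}{2}(\beta-\tfrac1{\eta_t})$; the $z$-terms telescope to $\tfrac{1}{2\eta_t}(\|z_t-z\|^2-\|z_{t+1}-z\|^2)$, and what remains is $(\alpha_t^2-\alpha_t)(f(y_t)-f(y_{t+1}))$, which is the claimed inequality. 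The one delicate point is the coupling step: the substitution for $y_t-z$ reintroduces $g_t\cdot(x_{t+1}-z)$ on the right-hand side, so the identity must be obtained by isolating that quantity, and the convexity inequality has to be applied in the direction that uses $\alpha_t\geq1$; everything else, including checking that the $\|y_{t+1}-x_{t+1}\|^2$ coefficients collapse to $\tfrac{\alpha_t^2}{2}(\beta-\tfrac1{\eta_t})$, is routine arithmetic.
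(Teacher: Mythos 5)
Your proposal is correct and follows essentially the same route as the paper's proof: the same convexity/coupling identity $g_t\cdot(x_{t+1}-z)=g_t\cdot(z_t-z)+(\alpha_t-1)\,g_t\cdot(y_t-x_{t+1})$, the same smoothness bound at $x_{t+1}$, and a projected-step inequality of identical strength, assembled so that the $\|y_{t+1}-x_{t+1}\|^2$ coefficients collapse to $\frac{\alpha_t^2}{2}\bigl(\beta-\frac{1}{\eta_t}\bigr)$. The only real difference is cosmetic: you bound $\alpha_t g_t\cdot(z_t-z)$ via nonexpansiveness of $\Pi_\K$ and completing the square, whereas the paper uses the first-order optimality condition of the $z$-update together with the auxiliary point $v=\tau_t z_{t+1}+(1-\tau_t)y_t$ and the variational characterization of $y_{t+1}$; both yield exactly the same term $\frac{\alpha_t^2}{2\eta_t}\|y_{t+1}-x_{t+1}\|^2$, and your explicit use of $\alpha_t\ge 1$ is precisely what the paper's argument implicitly requires.
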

Interestingly,  choosing $\eta_t \leq 1/\beta$, implies that the above term, $\frac{\alpha_t^2}{2}\left( \beta - \frac{1}{\eta_t} \right)\| y_{t+1} - x_{t+1}\|^2$, does not contribute to the sum. We can show that this choice facilitates a concise analysis establishing an error of $\O(\beta D^2/T^2)$ for $\bar{y}_T$\footnote{While we do not spell out this analysis, it is a simplified version of our proof for Thm.~\ref{thm:Main}.}.

Note however that  our learning rate does not depend on $\beta$, and therefore the mentioned term is not necessarily negative. This issue is one of the main challenges in our analysis.
Next we provide a proof sketch of Theorem~\ref{thm:Main}. The full proof is deferred to Appendix~\ref{app:Main}.
\begin{proof}[Proof Sketch of  Theorem~\ref{thm:Main}]
Lemma~\ref{lem:LemGen} enables to decompose $\sum_{t=0}^{T-1}\alpha_t (f(y_{t+1})- f(z))$,
\begin{align} \label{eq:LemmaBoundSketch}
\sum_{t=0}^{T-1}\alpha_t (f(y_{t+1})- f(z)) 
& \leq
\underset{\rA}{\underbrace{ \sum_{t=0}^{T-1}\frac{1}{2\eta_t}\left( \|z_t-z\|^2 -\|z_{t+1}-z\|^2 \right)   }} \nonumber\\
&\quad +
 \underset{\rB}{\underbrace{ \sum_{t=0}^{T-1}(\alpha_t^2-\alpha_t) ( f(y_{t}) - f(y_{t+1}) ) }}
 + 
  \underset{\rC}{\underbrace{  \sum_{t=0}^{T-1}\frac{\alpha_t^2}{2}\left( \beta - \frac{1}{\eta_t} \right)\| y_{t+1} - x_{t+1}\|^2  }}
 \end{align} 
Next we separately bound each of the above terms.
  
 \paragraph{(a) Bounding  $\rA$ :}
 Using the fact that $\{ 1/\eta_t\}_{t\in[T]}$ is monotonically increasing allows to  show,
\begin{align}\label{eq:etaSmoothSketch}
\sum_{t=0}^{T-1} \frac{1}{2\eta_t}\left( \|z_t-z\|^2 -\|z_{t+1}-z\|^2 \right)  
~\leq~
\frac{1}{2}\sum_{t=1}^{T-1} \|z_t-z\|^2\left(\frac{1}{\eta_t} -\frac{1}{\eta_{t-1}}  \right) +\frac{\|z_0-z\|^2}{2\eta_0} 
~\leq~
\frac{D^2}{2\eta_{T-1}}
\end{align}
where we used  $\|z_t-z\|\leq D$. 

\paragraph{(b) Bounding  $\rB$ :}
We will require the following property of the weights that we choose (Eq.~\eqref{eq:WeightsLearningRate}),
\begin{align}\label{eq:WeightsSketch}
(\alpha_t^2-\alpha_t) - (\alpha_{t-1}^2-\alpha_{t-1})\leq \alpha_{t-1}/2
\end{align}
 
 Now recall that $z:=\argmin_{x\in\reals^d}f(x)$, and 
let us denote the sub-optimality of $y_t$ by $\delta_t$, i.e. $\delta_t = f(y_t) - f(z)$. Noting that $\delta_t\geq 0$ we may show the following,
 \begin{align} \label{eq:xDiffSketch}
\sum_{t=0}^{T-1} (\alpha_t^2-\alpha_t)\left( f(y_{t}) - f(y_{t+1})  \right)  
&~=~
\sum_{t=0}^{T-1} (\alpha_t^2-\alpha_t)\left( \delta_t - \delta_{t+1}  \right)  \nonumber\\
&~\leq~
\sum_{t=1}^{T-1} ( (\alpha_t^2-\alpha_t) - (\alpha_{t-1}^2-\alpha_{t-1}))  \delta_t  \nonumber\\
&~\leq~
\frac{1}{2}\sum_{t=0}^{T-1} \alpha_t \left(f(y_{t+1})-f(z) \right)
\end{align}
Where the last inequality uses Equation~\eqref{eq:WeightsSketch} (see full proof for the complete derivation). 
 \paragraph{(c) Bounding  $\rC$ :}
Let us denote
$
\tau_\star := \max\left\{t\in\{0,\ldots,T-1\}:  2\beta \geq 1/\eta_t \right\}~.
$
We may now split the term $\rC$ according to $\tau_\star$,
\begin{align}\label{eq:KeySmoothSketch}
\rC
&=
\sum_{t=0}^{\tau_\star}\frac{\alpha_t^2}{2}\left( \beta - \frac{1}{\eta_t} \right)\| y_{t+1} - x_{t+1}\|^2 
+\sum_{t=\tau_\star+1}^{T-1}\frac{\alpha_t^2}{2}\left( \beta - \frac{1}{\eta_t} \right)\| y_{t+1} - x_{t+1}\|^2  \nonumber \\
&\leq
\frac{\beta}{2}\sum_{t=0}^{\tau_\star} \alpha_t^2\| y_{t+1} - x_{t+1}\|^2
 -\frac{1}{4}\sum_{t=\tau_\star+1}^{T-1} \frac{\alpha_t^2}{\eta_t}\| y_{t+1} - x_{t+1}\|^2  \nonumber \\
 &=
 \frac{\beta}{2}\sum_{t=0}^{\tau_\star}\eta_t^2 \alpha_t^2\|g_t \|^2
 -\frac{1}{4}\sum_{t=\tau_\star+1}^{T-1} \eta_t \alpha_t^2\|g_t \|^2  
\end{align}
where in the second line we use $2\beta\leq \frac{1}{\eta_t}$ which holds for $t> \tau_\star$, implying that $\beta-\frac{1}{\eta_t}\leq -\frac{1}{2\eta_t}$;
in the last line we use $\|y_{t+1}-x_{t+1}\| = \eta_t\|g_t\|$.

\paragraph{Final Bound :}
Combining the bounds in Equations~\eqref{eq:etaSmoothSketch},\eqref{eq:xDiffSketch},\eqref{eq:KeySmoothSketch} into Eq.~\eqref{eq:LemmaBoundSketch},  and re-arranging gives,
 \begin{align}\label{eq:AlmostFinalSmoothSketch}
\frac{1}{2}\sum_{t=0}^{T-1}\alpha_t (f(y_{t+1})- f(z))  
& ~\leq~
\underset{(*)}{\underbrace{
 \frac{D^2}{2\eta_{T-1}} 
 -\frac{1}{4}\sum_{t=\tau_\star+1}^{T-1} \eta_t \alpha_t^2\|g_t \|^2 }}
 + 
\underset{(**)}{\underbrace{
 \frac{\beta}{2}\sum_{t=0}^{\tau_\star}\eta_t^2 \alpha_t^2\|g_t \|^2
 }}
 \end{align} 
We are now in the intricate part of the proof where we need to show that the above is bounded by a constant. As we show next this crucially depends on our choice of the learning rate. To simplify the proof sketch we assume to be using , $\eta_t = 2D\left( \sum_{\tau=0}^{t} \alpha_\tau^2\|g_\tau\|^2\right)^{-1/2}$, i.e. taking $G=0$ in the learning rate. We will require the following lemma before we go on,
 \begin{lemma*}
For any non-negative numbers $a_1,\ldots, a_n$ the following holds:
\begin{equation*}
\sqrt{\sum_{i=1}^n a_i} \leq \sum_{i=1}^n \frac{a_i}{\sqrt{\sum_{j=1}^i a_j}} \leq 2\sqrt{\sum_{i=1}^n a_i}~.
\end{equation*}
\end{lemma*}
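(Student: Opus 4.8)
\textbf{Proof plan for the final statement (the auxiliary Lemma on $\sqrt{\sum a_i}$).}

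The plan is to prove both inequalities for the sum $\sum_{i=1}^n a_i/\sqrt{\sum_{j=1}^i a_j}$ by induction on $n$, which is the cleanest route; I will assume throughout that $\sum_{j=1}^i a_j>0$ for all $i$ in the range (terms with denominator zero have numerator zero and can be dropped, or one adds a dummy positive term and takes a limit). Write $S_i:=\sum_{j=1}^i a_j$.

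\textbf{Lower bound.} First I would handle $\sqrt{S_n}\le \sum_{i=1}^n a_i/\sqrt{S_i}$. The base case $n=1$ is an equality. For the inductive step, by the induction hypothesis $\sum_{i=1}^{n-1} a_i/\sqrt{S_i}\ge\sqrt{S_{n-1}}$, so it suffices to show $\sqrt{S_{n-1}}+a_n/\sqrt{S_n}\ge\sqrt{S_n}$, i.e. $a_n/\sqrt{S_n}\ge\sqrt{S_n}-\sqrt{S_{n-1}}$. Multiplying the right side by the conjugate, $\sqrt{S_n}-\sqrt{S_{n-1}}=(S_n-S_{n-1})/(\sqrt{S_n}+\sqrt{S_{n-1}})=a_n/(\sqrt{S_n}+\sqrt{S_{n-1}})\le a_n/\sqrt{S_n}$, which is exactly what is needed.

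\textbf{Upper bound.} For $\sum_{i=1}^n a_i/\sqrt{S_i}\le 2\sqrt{S_n}$, again induct on $n$; the base case is $a_1/\sqrt{a_1}=\sqrt{a_1}\le 2\sqrt{a_1}$. For the step, using the induction hypothesis it suffices to show $2\sqrt{S_{n-1}}+a_n/\sqrt{S_n}\le 2\sqrt{S_n}$, i.e. $a_n/\sqrt{S_n}\le 2(\sqrt{S_n}-\sqrt{S_{n-1}})$. Using the conjugate identity again, $2(\sqrt{S_n}-\sqrt{S_{n-1}})=2a_n/(\sqrt{S_n}+\sqrt{S_{n-1}})\ge 2a_n/(2\sqrt{S_n})=a_n/\sqrt{S_n}$, completing the induction. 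Alternatively one can give a one-line integral-comparison argument: $a_i/\sqrt{S_i}=(S_i-S_{i-1})/\sqrt{S_i}\le\int_{S_{i-1}}^{S_i}\frac{dt}{\sqrt t}=2(\sqrt{S_i}-\sqrt{S_{i-1}})$, and telescoping gives the bound; I would mention this as the conceptual reason both bounds hold (the sum is a Riemann-sum sandwich for $\int_0^{S_n} t^{-1/2}dt=2\sqrt{S_n}$).

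\textbf{Main obstacle.} There is no real obstacle here — this is a standard telescoping/conjugation estimate. The only point requiring a word of care is the degenerate case where some prefix sum $S_i$ vanishes (so the term is $0/0$); I would dispose of it at the outset by noting that then $a_1=\dots=a_i=0$ and those terms contribute nothing, so we may assume without loss of generality that all $a_i>0$, or equivalently restrict the sum to indices with $S_i>0$.
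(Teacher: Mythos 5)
Your proof is correct and follows essentially the same route as the paper's: induction on $n$, reducing each bound to the single-step inequality comparing $a_n/\sqrt{S_n}$ with $\sqrt{S_n}-\sqrt{S_{n-1}}$ (the paper verifies the lower-bound step by squaring and the upper-bound step by a monotonicity-in-$a_n$ argument, whereas you use the conjugate identity $\sqrt{S_n}-\sqrt{S_{n-1}}=a_n/(\sqrt{S_n}+\sqrt{S_{n-1}})$ for both, which is a harmless variation). Your handling of the degenerate case $S_i=0$ and the integral-comparison remark are fine additions but not needed.
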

Equipped with the above lemma and using $\eta_t$ explicitly enables to  bound $(*)$, 
\begin{align*}
 (*) &~= ~
 \frac{D}{4}\left(\sum_{t=0}^{T-1}  \alpha_t^2  \|g_t\|^2 \right)^{1/2} -\frac{D}{2}\sum_{t=\tau_\star+1}^{T-1}  \frac{ \alpha_t^2\|g_t\|^2}{\left(\sum_{\tau=0}^{t}  \alpha_\tau^2  \|g_\tau\|^2 \right)^{1/2}} \nonumber \\
 &~\leq~
 \frac{D}{4}\sumtt \frac{ \alpha_t^2\|g_t\|^2}{\left(\sum_{\tau=0}^{t}  \alpha_\tau^2  \|g_\tau\|^2 \right)^{1/2}}
-\frac{D}{2}\sum_{t=\tau_\star+1}^{T-1}  \frac{ \alpha_t^2\|g_t\|^2}{\left(\sum_{\tau=0}^{t}  \alpha_\tau^2  \|g_\tau\|^2 \right)^{1/2}} \nonumber \\
 &~\leq~
\frac{D}{4}\sum_{t=0}^{\tau_\star} \frac{ \alpha_t^2\|g_t\|^2}{\left(\sum_{\tau=0}^{t}  \alpha_\tau^2  \|g_\tau\|^2 \right)^{1/2}}\nonumber \\
&~\leq~
\frac{D}{2}\left(\sum_{\tau=0}^{\tau_\star}  \alpha_\tau^2  \|g_\tau\|^2 \right)^{1/2}\nonumber \\
&~=~ \frac{D^2}{\eta_{\tau_\star}}~\leq~ 2\beta D^2
 \end{align*}
 where in the last inequality we have used the definition of $\tau_\star$ which implies that $1/\eta_{\tau_\star}\leq 2\beta$.

Using similar argumentation  allows to bound the term $(**)$ by $\O(\beta D^2\log\left(\beta D/\|g_0\|\right))$. 
Plugging these bounds back into Eq.~\eqref{eq:AlmostFinalSmoothSketch} we get,
$$
\sum_{t=0}^{T-1}\alpha_t (f(y_{t+1})- f(z)) \leq \O(\beta D^2\log\left(\beta D/\|g_0\|\right))~.
$$
Combining this with Eq.~\eqref{eq:JensenProofSketch} and noting that $\sumtt \alpha_t \geq T^2/32$, concludes the proof.
\end{proof}

\newpage
\section{Stochastic Setting}
\label{sec:Stoch}
This section discusses the stochastic optimization setup which is prevalent in Machine Learning scenarios.
We formally describe this setup and prove that Algorithm~\ref{algorithm:UniAccel}, \emph{without any modification}, is ensured to converge in this setting (Thm.~\ref{thm:MainNonSmoothStoch}).
Conversely, the universal gradient methods presented in \citep{nesterov2015universal} rely on a line search procedure, which requires exact gradients and function values, and are therefore inappropriate for stochastic optimization. 

As a related result we show that the AdaGrad algorithm (Alg.~\ref{algorithm:AdaGrad}) is universal and is able to exploit small  variance in order to ensure fast rates in the case of stochastic optimization with smooth expected loss (Thm.~\ref{thm:AdaGradStoch}).
We emphasize that AdaGrad does not require the smoothness nor a bound on the  variance. Conversely, previous works with this type  of guarantees, \cite{xiao2010dual,lan2012optimal}, require the knowledge of \emph{both of these parameters}. 

\vspace{5pt}
\textbf{Setup:}
We consider the problem of minimizing a convex function $f:\reals^d \mapsto\reals$.
We assume that optimization lasts for $T$ rounds; on each round $t=1,\ldots,T$, we may query a point $x_t\in\reals^d$, and receive a \emph{feedback}.
After the last round,  we choose $\bar{x}_T\in\reals^d$, and our performance measure is  the expected excess loss, defined as,
$$\E[f(\bar{x}_T)]- \min_{x\in\reals^d}f(x)~.$$
Here we assume that our feedback is a first order noisy oracle such that upon   querying this oracle with a point $x$, we receive a bounded 
 and unbiased gradient estimate, $\tg$, such  
 \begin{align} \label{eq:NoisyOracle}
 \E[\tg \vert x] = \nabla f(x);\quad \&\quad  \|\tg\|\leq G
 \end{align}
We also assume  that  the internal coin tosses (randomizations) of the oracle are independent. It is well known that variants of Stochastic Gradient Descent (SGD) are ensured to output an estimate $\bar{x}_T$ such that the excess loss is bounded by $O(1/\sqrt{T})$ for the setups of  stochastic convex optimization,~\cite{nemirovskii1983problem}.
Similarly to the offline setting we assume to be given a set $\K$ with bounded diameter $D$, such that there exists a global optimum of  $f$ in $\K$. \\

The next theorem substantiates the guarantees of Algorithm~\ref{algorithm:UniAccel} in the stochastic case,

\begin{theorem}\label{thm:MainNonSmoothStoch}
Assume that $f$ is convex and $G$-Lipschitz. Let $\K$ be a convex set with bounded diameter $D$, and  assume there exists a global minimizer for $f$ in $\K$.
Assume that we invoke  Algorithm~\ref{algorithm:UniAccel} but provide it with noisy gradient estimates (see Eq.~\eqref{eq:NoisyOracle}) rather then the exact ones.
Then  Algorithm~\ref{algorithm:UniAccel} with weights and learning rate as in Equation~\eqref{eq:WeightsLearningRate} ensures,
  \begin{align*}
\E[ f(  \bar{y}_T)]  - \min_{x\in \reals^d}f(x)
&\leq 
\O\left( {GD}\sqrt{\log T}/{\sqrt{T}} 
 \right)
\end{align*} 
\end{theorem}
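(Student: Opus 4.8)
The plan is to run the same machinery as in the non-smooth offline proof (Theorem~\ref{thm:MainNonSmooth}), but tracking conditional expectations and a martingale term that arises because the algorithm's iterates now depend on the random gradients. First I would revisit the analogue of Lemma~\ref{lem:LemGen}: with a noisy gradient $\tg_t = \nabla f(x_{t+1}) + \xi_t$ (where $\E[\xi_t\mid x_{t+1}]=0$), the convexity step $f(y_{t+1}) - f(z) \le \tg_t\cdot(\text{something}) + \ldots$ must be replaced by a version where $\nabla f(x_{t+1})$ is written as $\tg_t - \xi_t$, so an extra inner-product term involving $\xi_t$ appears. The key structural observation is that in the telescoping identity for the $z_t$-sequence, $z_t$ is measurable with respect to the gradients $\tg_0,\ldots,\tg_{t-1}$ and hence independent of $\xi_t$, so the cross term $\alpha_t\eta_t\,\xi_t\cdot(z_t - z)$ has zero conditional expectation; the term $\alpha_t\eta_t\,\xi_t\cdot(x_{t+1}-z_t)$ can be absorbed using a Young/AM-GM inequality against the $\|y_{t+1}-x_{t+1}\|^2$ terms that are already controlled, at the cost of a constant factor and a contribution of order $\eta_t\alpha_t^2\|\xi_t\|^2 \le \eta_t\alpha_t^2 G^2$, which is exactly of the same shape as the terms already appearing in the analysis.

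Next I would carry through the bounds on the three terms (A), (B), (C) exactly as in the non-smooth analysis: (A) telescopes and is bounded by $D^2/(2\eta_{T-1})$ using monotonicity of $1/\eta_t$; (B) is bounded by $\frac12\sum\alpha_t(f(y_{t+1})-f(z))$ using the weight inequality~\eqref{eq:WeightsSketch} — but here I must be careful that $\delta_t = f(y_t)-f(z)\ge 0$ still holds pointwise (it does, since $z$ is a global minimizer), so no expectation is needed for this step; (C), since we do not assume smoothness, is simply bounded by $\frac{\beta}{2}\sum\alpha_t^2\|y_{t+1}-x_{t+1}\|^2$ with $\beta$ effectively playing no role — actually in the non-smooth case one bounds $(\beta - 1/\eta_t)\le -1/(2\eta_t)$ for all $t$ once $\eta_t$ is small, or more directly absorbs $\|y_{t+1}-x_{t+1}\|^2 = \eta_t^2\|\tg_t\|^2 \le \eta_t^2 G^2$ against $\alpha_t^2$ and sums using the $\sqrt{\cdot}$-Lemma. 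Plugging $\eta_{T-1} = 2D/(G^2 + \sum_{\tau}\alpha_\tau^2\|\tg_\tau\|^2)^{1/2}$ and using $\|\tg_\tau\|\le G$ together with $\sum_{\tau=0}^{T-1}\alpha_\tau^2 = \O(T^3)$ gives $1/\eta_{T-1} = \O(GT^{3/2}/D)$, hence the right-hand side after rearranging is $\O(GD\,\sqrt{\log T}\cdot T^{3/2})$ (the $\sqrt{\log T}$ coming from the $\sqrt{\cdot}$-Lemma applied to the $\alpha_t^2\|g_t\|^2$ sums just as in the offline proof).

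Then I would take expectations of the resulting inequality
\[
\tfrac12\,\E\!\sum_{t=0}^{T-1}\alpha_t\bigl(f(y_{t+1})-f(z)\bigr) \;\le\; \E\Bigl[\tfrac{D^2}{2\eta_{T-1}}\Bigr] + \O\!\Bigl(\E\!\sum_{t=0}^{T-1}\eta_t\alpha_t^2\|\tg_t\|^2\Bigr) + \sum_{t=0}^{T-1}\E\bigl[\alpha_t\eta_t\,\xi_t\cdot(z_t-z)\bigr],
\]
where the last sum vanishes term-by-term by the tower property and the independence of $\xi_t$ from $(z_t, z, \eta_t, \alpha_t)$ — note $\eta_t$ depends only on $\tg_0,\ldots,\tg_{t-1}$ in the $G=0$ variant, but with $G$ inside it still depends only on past gradients, so this is fine; if one prefers the $\eta_t$ that includes $\|\tg_t\|^2$ one instead bounds $\eta_t \le \eta_{t-1}$ and uses that to decouple. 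Finally, applying Jensen's inequality as in~\eqref{eq:JensenProofSketch} (now in expectation) and dividing by $\sum_{t=0}^{T-1}\alpha_t \ge T^2/32$ yields $\E[f(\bar y_T)] - f(z) \le \O(GD\sqrt{\log T}\cdot T^{3/2}/T^2) = \O(GD\sqrt{\log T}/\sqrt{T})$.

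The main obstacle I anticipate is handling the noise cross-terms cleanly: the term $\alpha_t\eta_t\,\xi_t\cdot(x_{t+1}-z_t)$ is \emph{not} mean-zero in a usable way because $x_{t+1}-z_t = (\tau_t-1)(z_t - y_t)$ is measurable w.r.t. the past and independent of $\xi_t$ — so actually this one is also mean-zero, good; the genuinely delicate point is that $y_{t+1} = x_{t+1} - \eta_t\tg_t$ so $\|y_{t+1}-x_{t+1}\|^2 = \eta_t^2\|\tg_t\|^2$ still holds exactly, but when I want to relate $\nabla f(x_{t+1})$ to $\tg_t$ inside the convexity inequality I pick up $-\xi_t\cdot(\text{reference point})$ with reference points that are all past-measurable, so all cross-terms are in fact mean-zero and no Young's inequality is even needed — the only real cost of the noise is replacing $\|\nabla f(x_{t+1})\|\le G$ by $\|\tg_t\|\le G$, which is harmless. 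So the proof is essentially the offline non-smooth proof with expectations inserted; the one thing to get exactly right is the measurability bookkeeping (defining the filtration $\Fcal_t = \sigma(\tg_0,\ldots,\tg_{t-1})$ and checking $x_{t+1}, z_t, y_t, \eta_t, \alpha_t$ are all $\Fcal_{t}$- or $\Fcal_{t+1}$-measurable as appropriate) so that each cross-term vanishes under $\E[\,\cdot\mid\Fcal_t]$.
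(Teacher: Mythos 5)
Your proposal follows essentially the same route as the paper's proof: rerun the offline non-smooth analysis (the analogue of Lemma~\ref{lem:LemGenNonSmooth} together with the bounds on the terms $\rA$--$\rD$) with the noisy gradients in place of the exact ones, isolate a single extra cross term involving the noise and $z_t-z$, argue that it vanishes in expectation because $z_t$ is measurable with respect to the past gradients, and finish with Jensen's inequality and $\sum_t\alpha_t\geq T^2/32$. This is exactly Lemma~\ref{lem:LemGenStochastic} plus the expectation step in Appendix~\ref{app:ProofMainNonSmoothStoch}.

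One bookkeeping point needs correcting, and it is precisely the measurability issue you flag at the end as "the one thing to get exactly right." The martingale term should be $\alpha_t(g_t-\tg_t)\cdot(z_t-z)$ with \emph{no} factor of $\eta_t$: it arises by splitting the convexity bound as $\alpha_t g_t\cdot(x_{t+1}-z)=\alpha_t\tg_t\cdot(z_t-z)+\alpha_t g_t\cdot(x_{t+1}-z_t)+\alpha_t(g_t-\tg_t)\cdot(z_t-z)$ \emph{before} invoking the $z$-update, so the step size never multiplies the noise. Your version attaches $\eta_t$ to the cross term and justifies the zero conditional expectation by claiming that $\eta_t$ depends only on $\tg_0,\ldots,\tg_{t-1}$; this is false for the learning rate of Eq.~\eqref{eq:WeightsLearningRate} (with or without the $G^2$ term), since the sum inside $\eta_t$ runs up to $\tau=t$ and hence includes $\|\tg_t\|^2$. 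The fallback of bounding $\eta_t\le\eta_{t-1}$ does not repair this, because the cross term is signed and cannot be monotonically replaced. Once the cross term is written without the step size (deterministic $\alpha_t$, past-measurable $z_t$, fixed $z$), unbiasedness of the oracle gives $\E[\alpha_t(g_t-\tg_t)\cdot(z_t-z)]=0$, and the remainder of your argument, namely $\|\tg_t\|\le G$, the pointwise bound $O\left(GD\sqrt{\log T}\,T^{3/2}\right)$ on the weighted sum, and the final Jensen step, coincides with the paper's.
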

The analysis of Theorem~\ref{thm:MainNonSmoothStoch} goes along similar lines to the proof of its offline counterpart (i.e., Thm.~\ref{thm:MainNonSmooth}). 
The full proof is deferred to Appendix~\ref{app:ProofMainNonSmoothStoch}.

It is well known that AdaGrad  (Alg.~\ref{algorithm:AdaGrad}) enjoys the standard rate of $\O(GD/\sqrt{T})$ in the stochastic setting. The next lemma demonstrates that: \textbf{(i)}  AdaGrad is universal, and \textbf{(ii)}  AdaGrad implicitly make use of smoothness and small  variance in the stochastic setting.

 
\begin{theorem}\label{thm:AdaGradStoch}
Assume that $f$ is convex and $\beta$-smooth. Let $\K$ be a convex set with bounded diameter $D$, and  assume there exists a global minimizer for $f$ in $\K$.
Assume that we invoke AdaGrad (Alg.~\ref{algorithm:AdaGrad}) but provide it with noisy gradient estimates (see Eq.~\eqref{eq:NoisyOracle}) rather then the exact ones.
Then,
  \begin{align*}
\E[ f(  \bar{x}_T)]  - \min_{x\in \reals^d}f(x)
&\leq 
\O\left( \frac{\beta D^2}{T} +\frac{\sigma D}{\sqrt{T}}
 \right)
\end{align*} 
where $\sigma^2$  bounds  the variance of the gradients, i.e.,
$
\forall x\in \reals^d;\;\E\left[\|\tg - \nabla f(x)\|^2\vert x\right]\leq \sigma^2~.
$
\end{theorem}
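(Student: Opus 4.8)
The strategy is to re-run the AdaGrad analysis behind Theorem~\ref{theorem:AdaGrad} while keeping the (stochastic) gradient norms explicit, pass to expectations, and then invoke the self-bounding property of $\beta$-smooth convex functions, $\|\nabla f(x)\|^2\le 2\beta\bigl(f(x)-\min_y f(y)\bigr)$, to convert the resulting regret-type bound into a quadratic inequality in the cumulative suboptimality $S:=\E\sum_{t=1}^T\bigl(f(x_t)-f(z)\bigr)$, where $z\in\K$ denotes a global minimizer of $f$ lying in $\K$.

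\emph{Step 1 (per-step inequality).} For the update $x_{t+1}=\Pi_\K(x_t-\eta_t\tg_t)$, non-expansiveness of $\Pi_\K$ gives, for every $z\in\K$, the standard bound $\langle \tg_t,\,x_t-z\rangle \le \tfrac{1}{2\eta_t}\bigl(\|x_t-z\|^2-\|x_{t+1}-z\|^2\bigr)+\tfrac{\eta_t}{2}\|\tg_t\|^2$. \emph{Step 2 (summation).} Summing over $t=1,\dots,T$, bounding the telescoping part by $D^2/(2\eta_T)$ through Abel summation together with monotonicity of $1/\eta_t$ and $\|x_t-z\|\le D$, and bounding $\sum_t \tfrac{\eta_t}{2}\|\tg_t\|^2$ with the elementary lemma $\sum_i a_i/\sqrt{\sum_{j\le i}a_j}\le 2\sqrt{\sum_i a_i}$, yields $\sum_{t=1}^T\langle \tg_t,\,x_t-z\rangle \le \sqrt{2D^2\sum_{t=1}^T\|\tg_t\|^2}$, i.e.\ exactly the quantity controlled by Theorem~\ref{theorem:AdaGrad}, now with noisy gradients.

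\emph{Step 3 (expectation, noise cancellation).} Write $\tg_t=\nabla f(x_t)+\xi_t$ with $\E[\xi_t\mid x_t]=0$. Convexity gives $f(x_t)-f(z)\le\langle\nabla f(x_t),x_t-z\rangle=\langle\tg_t,x_t-z\rangle-\langle\xi_t,x_t-z\rangle$; since $x_t$ is a deterministic function of $\xi_1,\dots,\xi_{t-1}$ (the oracle's randomizations being independent) and $z$ is fixed, $\E\langle\xi_t,x_t-z\rangle=0$. Hence, using Step 2 and Jensen's inequality (concavity of $\sqrt{\cdot}$), $S\le\sqrt{2D^2\sum_{t=1}^T\E\|\tg_t\|^2}$. \emph{Step 4 (smoothness and the quadratic).} The bias--variance split gives $\E\|\tg_t\|^2=\E\|\nabla f(x_t)\|^2+\E\|\xi_t\|^2\le 2\beta\,\E[f(x_t)-f(z)]+\sigma^2$, where the first term uses the self-bounding inequality (legitimate precisely because $z$ is a \emph{global} minimizer). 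Therefore $S\le\sqrt{4\beta D^2 S+2D^2T\sigma^2}$, i.e.\ $S^2\le 4\beta D^2 S+2D^2T\sigma^2$; resolving this quadratic (e.g.\ either $S\le 8\beta D^2$ or $S^2\le S^2/2+2D^2T\sigma^2$) gives $S=\O\bigl(\beta D^2+\sigma D\sqrt T\bigr)$. Dividing by $T$ and applying Jensen once more to $\bar x_T=\tfrac1T\sum_{t=1}^T x_t$ yields $\E[f(\bar x_T)]-\min_x f(x)\le\O\bigl(\beta D^2/T+\sigma D/\sqrt T\bigr)$.

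\textbf{Main obstacle.} The one non-routine point is that the AdaGrad bound is \emph{self-referential}: its right-hand side contains $\sum_t\|\tg_t\|^2$, which under smoothness is itself controlled by the cumulative suboptimality $S$ that we are trying to bound. The self-bounding property plus solving the induced quadratic is exactly where both smoothness and the variance-dependent ($\sigma$) improvement enter the rate; dropping smoothness and using the crude bound $\|\tg_t\|^2\le G^2$ recovers the usual $\O(GD/\sqrt T)$ guarantee. A secondary, routine matter is the martingale bookkeeping that makes the noise inner products vanish in expectation, together with the standard care for the degenerate case in which early gradient estimates vanish so that $\eta_t$ remains well defined.
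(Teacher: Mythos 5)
Your proposal is correct and follows essentially the same route as the paper's proof: the AdaGrad regret/error bound applied pathwise to the noisy gradients, passing to expectation via the martingale property of the noise and Jensen, invoking the self-bounding inequality $\|\nabla f(x)\|^2\le 2\beta(f(x)-f(x^*))$ (the paper's Lemma~\ref{lemma:GradIneqSmooth}), and resolving the resulting self-referential inequality in the cumulative suboptimality. The only (cosmetic) difference is that you split $\E\|\tg_t\|^2$ by exact orthogonality of the noise, whereas the paper uses the triangle inequality $\|\tg_t\|\le\|g_t\|+\|\tg_t-g_t\|$ with $(a+b)^2\le 2a^2+2b^2$ and $\sqrt{a+b}\le\sqrt a+\sqrt b$; both yield the same quadratic-type bound and the same rate.
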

Next we provide a proof  of the above theorem,
\begin{proof}[Proof  of Theorem~\ref{thm:AdaGradStoch}]
Lets us denote by $\tg_t$ the noisy gradients received by AdaGrad upon querying $x_{t}$. In this case, by applying the regret guarantees of AdaGrad, \cite{duchi2011adaptive}, in conjunction to standard online to batch conversion technique, \cite{cesa2004generalization}, implies,
\begin{align}\label{eq:regret}
\sum_{t=1}^T\E \left( f(x_t) - \min_{x\in\K} f(x)\right) \leq \E \sqrt{2D^2\sum_{t=1}^T\|\tg_t\|^2}
\end{align}
Now lets us denote by $g_t$ the exact gradient at $x_t$, and decompose: $\|\tg_t\|\leq \|g_t\|+ \|\tg_t-g_t\|$. This gives,
\begin{align*}
 \sqrt{\sum_{t=1}^T\|\tg_t\|^2} 
 ~\leq~
 \sqrt{2\sum_{t=1}^T\|g_t\|^2 + 2\sum_{t=1}^T\|\tg_t-g_t\|^2 } 
 ~\leq~
 \sqrt{2\sum_{t=1}^T\|g_t\|^2} +\sqrt{ 2\sum_{t=1}^T\|\tg_t-g_t\|^2 }~.
\end{align*}
where the first inequality uses $(a+b)^2\leq 2a^2+2b^2$, and the second inequality uses $(a+b)^{1/2}\leq a^{1/2}+b^{1/2}$ for non-negative $a,b\in \reals$.
Combining the above with Eq.~\eqref{eq:regret} and applying Jensen's inequality with respect to the function $H(u) = \sqrt{u}$, gives,
\begin{align}\label{eq:regret3}
\sum_{t=1}^T\E \left( f(x_t) - \min_{x\in\K} f(x)\right)& \leq  2\sqrt{D^2\sum_{t=1}^T\E\|g_t\|^2}+2\sqrt{D^2\sum_{t=1}^T\E\|\tg_t-g_t\|^2} \nonumber\\
&\leq
2\sqrt{2\beta D^2\sum_{t=1}^T\E \left( f(x_t) - \min_{x\in\K} f(x)\right)}+2\sqrt{\sigma^2 D^2T} 
\end{align}
 the last line uses the lemma below, which holds since we assume $\K$ contains a global minimum.
\begin{lemma} \label{lemma:GradIneqSmooth}
Let $F:\reals^d \mapsto \reals$ be a $\beta$-smooth function, and let $x^* =\argmin_{x\in \reals^d}F(x)$, then,
$$ \| \nabla F(x)\|^2 \le 2\beta \left( F(x) - F(x^*)\right), \quad \forall x\in \reals^d~.$$
\end{lemma}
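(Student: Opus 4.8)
The statement is the standard ``gradient domination from above'' consequence of smoothness, so the plan is the classical descent-point argument. First I would invoke the definition of $\beta$-smoothness applied at the point $x$ with the specific choice $y = x - \frac{1}{\beta}\nabla F(x)$, i.e.
\begin{align*}
F\Bigl(x - \tfrac{1}{\beta}\nabla F(x)\Bigr) \le F(x) + \nabla F(x)\cdot\Bigl(-\tfrac{1}{\beta}\nabla F(x)\Bigr) + \frac{\beta}{2}\Bigl\|\tfrac{1}{\beta}\nabla F(x)\Bigr\|^2 = F(x) - \frac{1}{2\beta}\|\nabla F(x)\|^2 .
\end{align*}

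Next I would use the fact that $x^\star$ is a global minimizer, so $F(x^\star) \le F\bigl(x - \tfrac{1}{\beta}\nabla F(x)\bigr)$, which chained with the display above yields $F(x^\star) \le F(x) - \frac{1}{2\beta}\|\nabla F(x)\|^2$. Rearranging gives exactly $\|\nabla F(x)\|^2 \le 2\beta\bigl(F(x) - F(x^\star)\bigr)$, and since $x$ was arbitrary this holds for all $x \in \reals^d$.

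There is essentially no obstacle here: the only ``choice'' is picking the step $\frac{1}{\beta}\nabla F(x)$, which exactly minimizes the quadratic upper bound furnished by smoothness, and everything else is a one-line rearrangement. The one point worth stating explicitly in the write-up is that the argument genuinely uses that $F$ is minimized over all of $\reals^d$ (an unconstrained minimizer), which is consistent with the hypothesis $x^\star = \argmin_{x\in\reals^d} F(x)$; this is also why, in the application to Theorem~\ref{thm:AdaGradStoch}, one needs $\K$ to contain a global minimizer rather than merely a constrained one.
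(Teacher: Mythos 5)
Your proof is correct and is essentially identical to the paper's: the paper also applies the smoothness inequality with the step $u=-\frac{1}{\beta}\nabla F(x)$ and then uses $F(x^*)\le F(x+u)$ (global minimality) to rearrange into the claimed bound. No gaps.
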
 
Eq.~\eqref{eq:regret} enables to show, $\sum_{t=1}^T\E \left( f(x_t) - \min_{x\in\K} f(x)\right)\leq 4\beta D^2 + 2\sigma D\sqrt{T}$. Combining this together with 
the definition of $\bar{x}_T$ and Jensen's inequality   concludes the proof.
\end{proof}


\section{Experiments}
\label{sec:Exps}
In this section we compare \textmd{AcceleGrad} against AdaGrad (Alg.~\ref{algorithm:AdaGrad}) and universal gradient methods \citep{nesterov2015universal}, focusing on the effect of tuning parameters and the level of adaptivity. 

We consider smooth ($p=2$) and non-smooth ($p=1$) regression problems of the form
 $$
 \min_{x\in \reals^d} F(x)~: =~\| Ax - b\|_p^p ~.
 $$
We synthetically generate matrix $A \in \reals^{n \times d}$ and a point of interest $x^\natural \in \reals^d$ randomly, with entries independently drawn from standard Gaussian distribution. Then, we generate $b = Ax^\natural + \omega$, with Gaussian noise, $w\sim \mathcal{N}(0,\sigma^2)$ and $\sigma^2 = 10^{-2}$. 
We fix $n = 2000$ and $d = 500$.

Figure~\ref{fig:l1l2min} presents the results for the offline optimization setting, where we provide the exact gradients of $F$. All methods are initialized at the origin, and we choose $\mathcal{K}$ as the $\ell_2$ norm ball of diameter $D$.

\begin{figure}[t!]
\includegraphics[width=\textwidth]{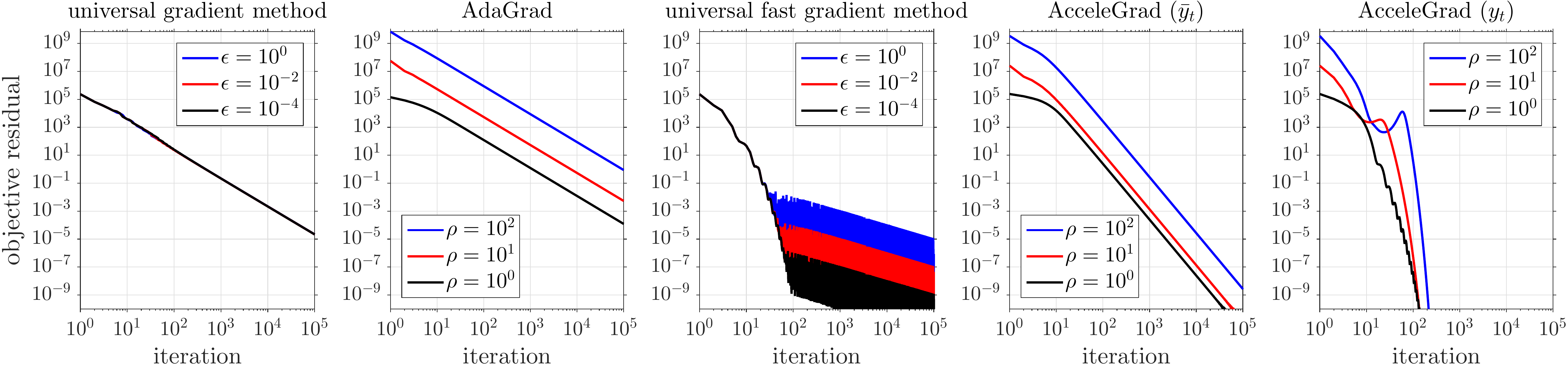} \\ 
\\
\vspace{-1.7em}
\includegraphics[width=\textwidth]{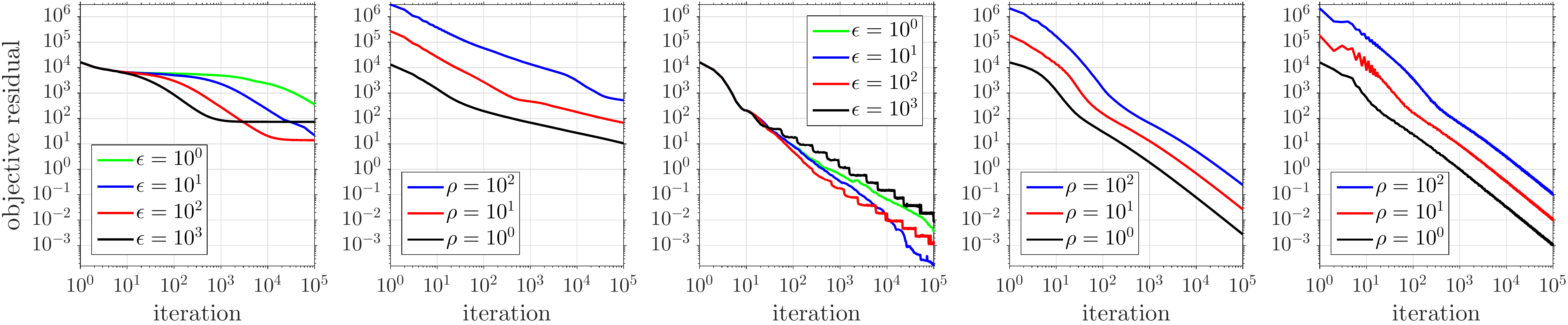}
\centering
\vspace{-3mm}
\caption{Comparison of universal methods at a smooth \textit{(top)} and a non-smooth \textit{(bottom)} problem.}
\label{fig:l1l2min}
\end{figure}

Universal gradient methods are based on an inexact line-search technique that requires an input parameter $\epsilon$. Moreover, these methods have convergence guarantees only up to $\frac{\epsilon}{2}$-suboptimality. 
For smooth problems, these methods perform better with smaller $\epsilon$. 
In stark contrast, for the non-smooth problems, small $\epsilon$ causes late adaptation, and large $\epsilon$ ends up with early saturation. Tuning is a major problem for these methods, since it requires rough knowledge of the optimal value.

Universal gradient method (also the fast version) provably requires two line-search iterations on average at each outer iteration. 
Consequently, it performs two data pass at each iteration (four for the fast version), while AdaGrad and \textmd{AcceleGrad} require only a single data pass. 

The parameter  $\rho$ denotes the ratio between $D/2$ and the distance between initial point and the solution. 
Parameter $D$ plays a major role on the step-size of AdaGrad and \textmd{AcceleGrad}.
Overestimating $D$ causes an overshoot in the first iterations. 
\textmd{AcceleGrad} consistently overperforms AdaGrad in the deterministic setting. 
As a final note, it needs to be mentioned that the iterates $y_t$ of AcceleGrad empirically converge faster than the averaged sequence $\bar{y}_T$. Note that for \textmd{AcceleGrad} we always take $G=0$, i.e., use $\eta_t = 2D\left(\sum_{\tau=0}^t \alpha_\tau^2 \|g_\tau\|^2\right)^{-1/2}$.

We also study the stochastic setup (Appendix~\ref{app:Numerics}), where we provide noisy gradients of $F$ based on minibatches.
As expected, universal line search methods \emph{fail} in this case, while \textmd{AcceleGrad} converges and performs similarly to AdaGrad.

\textbf{Large batches:} In Appendix~\ref{app:ExpLargeBatch} we show results on a real dataset which demonstrate the appeal of \textmd{AcceleGrad}  in the large-minibatch regime. We show that with the increase of batch size  the performance of \textmd{AcceleGrad} verses the number of gradient calculations does not degrade and might even \emph{improve}.
This is beneficial when we like to parallelize a stochastic optimization problem.
 Conversely, for AdaGrad we see  a clear degradation of the performance as we increase the batch size.

\vspace{-3pt}
\section{Conclusion and Future Work}
\label{sec:Conclusion}
\vspace{-2pt}
We have presented a novel universal method that may exploit smoothness in order to accelerate  while still being able to successfully handle noisy feedback.
Our current analysis only applies to unconstrained optimization problems.
Extending our work to the constrained setting is a natural future direction.
Another direction is to implicitly adapt the parameter $D$, this might be possible using ideas in the spirit of scale-free online algorithms, \cite{orabona2015scale,cutkosky2018black}.

\section*{Acknowledgement}
The authors would like to thank  Zalán Borsos for his insightful comments on the manuscript.

This project has received funding from the European Research Council (ERC) under the European Union's Horizon 2020 research and innovation programme (grant agreement no $725594$ - time-data). 
K.Y.L. is supported by the ETH Zurich Postdoctoral Fellowship and Marie Curie Actions for People COFUND program.

\bibliographystyle{abbrvnat}
\bibliography{bib}

\newpage

\appendix

\section{Proofs for the Smooth Case (Thm.~\ref{thm:Main})}
\label{app:Main}
Here we provide the complete proof  of Theorem~\ref{thm:Main}, and of the related lemmas.
For brevity, we will use $z\in\K$ to denote a \emph{global mimimizer} of $f$ which belongs to $\K$. 

Recall that  Algorithm~\ref{algorithm:UniAccel} outputs a weighted average of the queries. Consequently,  we may  employ Jensen's inequality to bound its error as follow,
\begin{align} \label{eq:JensenProof}
f(\bar{y}_T) - f(z)
&\leq 
\frac{1}{\sumtt \alpha_t}\sum_{t=0}^{T-1} {\alpha_t}\left(  f(y_{t+1})-  f(z)\right) ~.
\end{align} 
Combining this with  $\sumtt \alpha_t \geq \Omega(T^2)$, implies that  in order to substantiate the proof it is sufficient to show that, $\sum_{t=0}^{T-1} {\alpha_t}\left(  f(y_{t+1})-  f(z)\right)$, is bounded by a constant. This is the bulk of the analysis.

We start by recalling Lemma~\ref{lem:LemGen}  which provides us with abound on $\alpha_t \left(  f(y_{t+1})-f(z)\right) $,
\begin{lemma*}[Lemma~\ref{lem:LemGen}]
Assume that $f$ is convex and $\beta$-smooth. Then for any
 sequence of non-negative weights $\{\alpha_t\}_{t\geq0}$, and learning rates $\{\eta_t\}_{t\geq0}$, Algorithm~\ref{algorithm:UniAccel} ensures the following to hold,
\begin{align*}
\alpha_t (f(y_{t+1})- f(z)) 
& \leq
 (\alpha_t^2-\alpha_t) ( f(y_{t}) - f(y_{t+1}) )
 + \frac{\alpha_t^2}{2}\left( \beta - \frac{1}{\eta_t} \right)\| y_{t+1} - x_{t+1}\|^2 \\
&\quad    + \frac{1}{2\eta_t}\left( \|z_t-z\|^2 -\|z_{t+1}-z\|^2 \right)   
\end{align*} 
\end{lemma*}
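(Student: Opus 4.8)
The plan is to start from $\beta$-smoothness applied at the points $x_{t+1}$ and $y_{t+1}$, then convert the resulting gradient terms into a telescoping "mirror-descent" expression involving the $z_t$ sequence, using the specific coupling $x_{t+1} = \tau_t z_t + (1-\tau_t) y_t$ with $\tau_t = 1/\alpha_t$.

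First I would write down the smoothness upper bound
$f(y_{t+1}) \le f(x_{t+1}) + \langle g_t, y_{t+1}-x_{t+1}\rangle + \frac{\beta}{2}\|y_{t+1}-x_{t+1}\|^2$,
and substitute $y_{t+1}-x_{t+1} = -\eta_t g_t$, so $\langle g_t, y_{t+1}-x_{t+1}\rangle = -\eta_t\|g_t\|^2 = -\frac{1}{\eta_t}\|y_{t+1}-x_{t+1}\|^2$. This produces the term $\frac{1}{2}(\beta - \frac{1}{\eta_t})\|y_{t+1}-x_{t+1}\|^2$ with coefficient $1$; multiplying the whole estimate by $\alpha_t$ and then needing coefficient $\alpha_t^2$ on that term is the clue that the gradient inner product must be "spent twice" — once against $y_{t+1}$ and once against $z$. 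Concretely, I would next use convexity to bound $f(x_{t+1}) - f(z) \le \langle g_t, x_{t+1}-z\rangle$ and decompose $x_{t+1}-z$ via the coupling: since $x_{t+1} = \tau_t z_t + (1-\tau_t) y_t$, we get $x_{t+1} - z = \tau_t(z_t - z) + (1-\tau_t)(y_t - z)$, hence
$\langle g_t, x_{t+1}-z\rangle = \tau_t\langle g_t, z_t - z\rangle + (1-\tau_t)\langle g_t, y_t - x_{t+1}\rangle + (1-\tau_t)\langle g_t, x_{t+1} - z\rangle$ — actually cleaner: write $(1-\tau_t)(y_t-z) = (1-\tau_t)(y_t - x_{t+1}) + (1-\tau_t)(x_{t+1}-z)$ and use convexity again to turn $(1-\tau_t)\langle g_t, y_t - x_{t+1}\rangle \le (1-\tau_t)(f(y_t) - f(x_{t+1}))$, so that the $f(x_{t+1})$ contributions telescope into the $(\alpha_t^2-\alpha_t)(f(y_t)-f(y_{t+1}))$ term after multiplying by $\alpha_t$ and using $\tau_t = 1/\alpha_t$, i.e. $\alpha_t(1-\tau_t) = \alpha_t - 1$ and $\alpha_t \cdot \frac{\alpha_t-1}{\alpha_t}\cdot\alpha_t$-type bookkeeping gives the $\alpha_t^2 - \alpha_t$ coefficient.

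The remaining piece is the mirror-descent term: $\alpha_t \tau_t \langle g_t, z_t - z\rangle = \langle g_t, z_t - z\rangle$, and I would bound this using the standard projected-gradient/three-point identity. Since $z_{t+1} = \Pi_\K(z_t - \alpha_t\eta_t g_t)$ and $z \in \K$, firm non-expansiveness of the projection gives $\|z_{t+1}-z\|^2 \le \|z_t - \alpha_t\eta_t g_t - z\|^2 = \|z_t-z\|^2 - 2\alpha_t\eta_t\langle g_t, z_t-z\rangle + \alpha_t^2\eta_t^2\|g_t\|^2$, so
$\langle g_t, z_t - z\rangle \le \frac{1}{2\alpha_t\eta_t}(\|z_t-z\|^2 - \|z_{t+1}-z\|^2) + \frac{\alpha_t\eta_t}{2}\|g_t\|^2$. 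Multiplying by $\alpha_t$ — wait, this term already has the $\alpha_t\tau_t = 1$ factor absorbed — so it contributes exactly $\frac{1}{2\eta_t}(\|z_t-z\|^2-\|z_{t+1}-z\|^2) + \frac{\alpha_t^2\eta_t}{2}\|g_t\|^2$, and the last quantity equals $\frac{\alpha_t^2}{2\eta_t}\|y_{t+1}-x_{t+1}\|^2$ (using $\|y_{t+1}-x_{t+1}\|=\eta_t\|g_t\|$), which combines with the $\frac{\alpha_t^2\beta}{2}\|y_{t+1}-x_{t+1}\|^2$ piece coming from smoothness to give precisely $\frac{\alpha_t^2}{2}(\beta - \frac{1}{\eta_t})\|y_{t+1}-x_{t+1}\|^2$ — the sign works out because the smoothness step contributed $-\frac{\alpha_t^2}{\eta_t}$ and the mirror step adds back $+\frac{\alpha_t^2}{2\eta_t}$... so I must be careful: I expect the smoothness inner-product term (scaled to $\alpha_t^2$) to give $\frac{\alpha_t^2}{2}(\beta-\frac{2}{\eta_t})$-ish, and the mirror term's $+\frac{\alpha_t^2}{2\eta_t}$ to correct it to $\frac{\alpha_t^2}{2}(\beta-\frac{1}{\eta_t})$. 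The main obstacle is exactly this accounting — keeping the coefficients of $\|y_{t+1}-x_{t+1}\|^2$ consistent across the smoothness bound, the two uses of convexity, and the mirror-descent bound, since the gradient must be charged against both the $y$-progress and the $z$-progress with the $\tau_t=1/\alpha_t$ weighting. Once the coefficients line up, collecting terms and dividing through yields the stated inequality.
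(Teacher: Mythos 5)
Your plan is correct, and its skeleton is the same as the paper's: bound $\alpha_t(f(x_{t+1})-f(z))$ by convexity, split $x_{t+1}-z$ through the coupling into a $z_t$-part and a $y_t$-part (equivalently $x_{t+1}-z_t=(\alpha_t-1)(y_t-x_{t+1})$), turn the $y_t$-part into $(\alpha_t^2-\alpha_t)(f(y_t)-f(x_{t+1}))$ via the gradient inequality, and use $\beta$-smoothness to pass from $f(x_{t+1})$ to $f(y_{t+1})$; your coefficient accounting does close, since the smoothness bound $f(y_{t+1})-f(x_{t+1})\le \frac{1}{2}\bigl(\beta-\frac{2}{\eta_t}\bigr)\|y_{t+1}-x_{t+1}\|^2$ is charged with total weight $\alpha_t+(\alpha_t^2-\alpha_t)=\alpha_t^2$ and the projection term adds $\frac{\alpha_t^2}{2\eta_t}\|y_{t+1}-x_{t+1}\|^2$, giving exactly $\frac{\alpha_t^2}{2}\bigl(\beta-\frac{1}{\eta_t}\bigr)$. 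Where you differ is in the two sub-steps: the paper handles $\alpha_t g_t\cdot(z_t-z)$ through the first-order optimality condition of the projected step (Lemma~\ref{lemma:PGDconstrained}) and the auxiliary point $v=\tau_t z_{t+1}+(1-\tau_t)y_t\in\K$ together with the proximal characterization of $y_{t+1}$, keeping the inner product $\alpha_t^2 g_t\cdot(x_{t+1}-y_{t+1})$ to cancel against the smoothness inner product, whereas you use plain non-expansiveness of $\Pi_\K$ and substitute $y_{t+1}-x_{t+1}=-\eta_t g_t$ immediately; in the unconstrained setting these are numerically identical (indeed $\alpha_t^2 g_t\cdot(x_{t+1}-y_{t+1})-\frac{\alpha_t^2}{2\eta_t}\|x_{t+1}-y_{t+1}\|^2=\frac{\alpha_t^2\eta_t}{2}\|g_t\|^2$), so your version is a shorter, more elementary rendering, while the paper's detour only uses that $y_{t+1}$ minimizes $g_t\cdot x+\frac{1}{2\eta_t}\|x-x_{t+1}\|^2$ and therefore extends to a projected $y$-update (cf.\ the constrained-setting remark after Theorem~\ref{thm:MainNonSmooth}). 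Two small bookkeeping notes: after multiplying the convexity bound by $\alpha_t$, the $\langle g_t, z_t-z\rangle$ term carries weight $\alpha_t$ (not $\alpha_t\tau_t=1$), which is precisely what makes your stated contribution $\frac{1}{2\eta_t}(\|z_t-z\|^2-\|z_{t+1}-z\|^2)+\frac{\alpha_t^2\eta_t}{2}\|g_t\|^2$ correct; and your ``cleaner'' split that reintroduces $(1-\tau_t)(x_{t+1}-z)$ must be resolved as an algebraic identity (solve for $g_t\cdot(x_{t+1}-z)$), not by another use of convexity, since convexity bounds that term from the wrong side. Finally, like the paper's own proof, your argument implicitly uses $\alpha_t\ge 1$ so that the multipliers $\alpha_t-1$ and $\alpha_t^2-\alpha_t$ are non-negative.
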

The proof Lemma~\ref{lem:LemGen} appears in Appendix~\ref{app:Prooflem:LemGen}.
Next we prove Theorem~\ref{thm:Main}.
\newpage
\begin{proof}[Proof of Theorem~\ref{thm:Main}]
According to  Lemma~\ref{lem:LemGen},
\begin{align} \label{eq:LemmaBound}
\sum_{t=0}^{T-1}&\alpha_t (f(y_{t+1})- f(z))  \nonumber\\
& \leq
\underset{\rA}{\underbrace{ \sum_{t=0}^{T-1}\frac{1}{2\eta_t}\left( \|z_t-z\|^2 -\|z_{t+1}-z\|^2 \right)   }}
 +
 \underset{\rB}{\underbrace{ \sum_{t=0}^{T-1}(\alpha_t^2-\alpha_t) ( f(y_{t}) - f(y_{t+1}) ) }}
\nonumber\\
&\quad
 + 
  \underset{\rC}{\underbrace{  \sum_{t=0}^{T-1}\frac{\alpha_t^2}{2}\left( \beta - \frac{1}{\eta_t} \right)\| y_{t+1} - x_{t+1}\|^2  }}
 \end{align} 
It is natural to separately bound each of the sums above.
  
 \paragraph{(a) Bounding  $\rA$ :}
 Using the fact that $\{ 1/\eta_t\}_{t\in[T]}$ is monotonically increasing we may bound $\rA$  as follows,
\begin{align}\label{eq:etaSmooth}
\sum_{t=0}^{T-1} \frac{1}{2\eta_t}\left( \|z_t-z\|^2 -\|z_{t+1}-z\|^2 \right)  
&\leq
\frac{1}{2}\sum_{t=1}^{T-1} \|z_t-z\|^2\left(\frac{1}{\eta_t} -\frac{1}{\eta_{t-1}}  \right) +\frac{\|z_0-z\|^2}{2\eta_0} \nonumber \\
&\leq
\frac{D^2}{2\eta_{T-1}}
\end{align}
where we used  $\|z_t-z\|\leq D$.

 \paragraph{(b) Bounding  $\rB$ :}
  We will require the next lemma regarding the specific choice of the weights,
 \begin{lemma} \label{lem:Alphas1}
 The following holds for the $\alpha_t$'s which are described in  Eq.~\eqref{eq:WeightsLearningRate},
\begin{align*}
(\alpha_t^2-\alpha_t) - (\alpha_{t-1}^2-\alpha_{t-1})\leq \alpha_{t-1}/2
\end{align*}
\end{lemma}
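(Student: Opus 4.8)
This is a purely elementary verification that splits into cases according to the piecewise definition of $\alpha_t$ in Equation~\eqref{eq:WeightsLearningRate}. Writing $\phi(t):=\alpha_t^2-\alpha_t$, the claim is that the discrete increment $\phi(t)-\phi(t-1)$ is at most $\alpha_{t-1}/2$, so I would simply evaluate $\phi$ on each regime and compare. The only point requiring a little care is the transition index $t=3$, where $\alpha_t$ switches from the constant value $1$ to the affine value $\tfrac14(t+1)$; since $\tfrac14(3+1)=1$, the two pieces agree there, so nothing pathological happens, but it should be checked explicitly rather than folded into either neighboring case.

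\textbf{Key steps.} First, for $t\in\{1,2\}$ we have $\alpha_t=\alpha_{t-1}=1$, hence $\phi(t)=\phi(t-1)=0$ and the left-hand side is $0\le \tfrac12=\alpha_{t-1}/2$. Second, for $t=3$ we have $\alpha_3=\tfrac14(3+1)=1=\alpha_2$, so again $\phi(3)-\phi(2)=0\le\tfrac12$. Third, for $t\ge 4$ both $\alpha_t=\tfrac14(t+1)$ and $\alpha_{t-1}=\tfrac14 t$ lie in the affine regime; a direct computation gives
\[
\phi(t)-\phi(t-1)
=\frac{(t+1)^2-t^2}{16}-\frac{(t+1)-t}{4}
=\frac{2t+1}{16}-\frac{1}{4}
=\frac{2t-3}{16},
\]
while $\alpha_{t-1}/2=\tfrac{t}{8}=\tfrac{2t}{16}$, so the inequality reduces to $2t-3\le 2t$, which holds trivially. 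Collecting the three cases establishes the lemma for all $t\ge 1$.

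\textbf{Main obstacle.} There is essentially no obstacle: the computation is routine and the inequality is slack in every regime. The only thing to be vigilant about is bookkeeping at the boundary $t=3$ (and, implicitly, that the lemma is only asserted for $t\ge1$ so that $\alpha_{t-1}$ is defined), ensuring the case split is exhaustive and that one does not accidentally apply the affine formula at $t=3$ for $\alpha_{t-1}=\alpha_2$, which still lies in the constant regime.
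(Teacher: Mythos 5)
Your proof is correct and follows essentially the same route as the paper: handle the constant regime ($t\le 3$, where $\alpha_t^2-\alpha_t=0$) directly and compute the increment explicitly in the affine regime. In fact your arithmetic is the accurate one — for $t\ge 4$ the increment is $\frac{2t-3}{16}\le \frac{2t}{16}=\alpha_{t-1}/2$, whereas the paper's displayed intermediate value $\frac{2t-3}{8}$ contains a factor-of-two typo (with that value the stated comparison would not even hold), so your version is the cleaner record of the argument.
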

Its proof appears in Appendix~\ref{app:Prooflem:Alphas1}.

We are now ready to bound $\rB$.
Recall that $z:=\argmin_{x\in\reals^d}f(x)$, and 
let us denote the sub-optimality of $y_t$ by $\delta_t$, i.e. $\delta_t = f(y_t) - f(z)$. Noting that $\delta_t\geq 0$ we may show the following,
 \begin{align} \label{eq:xDiff}
\sum_{t=0}^{T-1} (\alpha_t^2-\alpha_t)&\left( f(y_{t}) - f(y_{t+1})  \right)  \nonumber  \\
&=
\sum_{t=0}^{T-1} (\alpha_t^2-\alpha_t)\left( \delta_t - \delta_{t+1}  \right)   \nonumber  \\
&=
\sum_{t=1}^{T-1} ( (\alpha_t^2-\alpha_t) - (\alpha_{t-1}^2-\alpha_{t-1}))  \delta_t + (\alpha_0^2-\alpha_0) \delta_0 - (\alpha_{T-1}^2-\alpha_{T-1}) \delta_{T} \nonumber  \\
&\leq
\frac{1}{2}\sum_{t=1}^{T-1} \alpha_{t-1} \delta_t    \nonumber  \\
&\leq
\frac{1}{2}\sum_{t=1}^{T-1} \alpha_{t-1} \delta_t + \frac{1}{2}\alpha_{T-1}\delta_{T} \nonumber  \\
& =
\frac{1}{2}\sum_{t=0}^{T-1} \alpha_t \delta_{t+1} \nonumber  \\
&=
\frac{1}{2}\sum_{t=0}^{T-1} \alpha_t \left(f(y_{t+1})-f(z) \right)
 \end{align}
 where in the fourth line  we use  Lemma~\ref{lem:Alphas1}, we also use $\alpha_0^2-\alpha_0=0$ and 
 $\alpha_{T-1}^2-\alpha_{T-1}\geq 0$.

 \paragraph{(c) Bounding  $\rC$ :}
Let us denote $\tau_\star$ as follows:~
$
\tau_\star = \max\left\{t\in\{0,\ldots,T-1\}:  2\beta \geq 1/\eta_t \right\}~.
$
We may now split the last term as follows,
\begin{align}\label{eq:KeySmooth}
\sum_{t=0}^{T-1}&\frac{\alpha_t^2}{2}\left( \beta - \frac{1}{\eta_t} \right)\| y_{t+1} - x_{t+1}\|^2    \nonumber \\
&=
\sum_{t=0}^{\tau_\star}\frac{\alpha_t^2}{2}\left( \beta - \frac{1}{\eta_t} \right)\| y_{t+1} - x_{t+1}\|^2 
+\sum_{t=\tau_\star+1}^{T-1}\frac{\alpha_t^2}{2}\left( \beta - \frac{1}{\eta_t} \right)\| y_{t+1} - x_{t+1}\|^2  \nonumber \\
&\leq
\frac{\beta}{2}\sum_{t=0}^{\tau_\star} \alpha_t^2\| y_{t+1} - x_{t+1}\|^2
 -\frac{1}{4}\sum_{t=\tau_\star+1}^{T-1} \frac{\alpha_t^2}{\eta_t}\| y_{t+1} - x_{t+1}\|^2  \nonumber \\
 &=
 \frac{\beta}{2}\sum_{t=0}^{\tau_\star}\eta_t^2 \alpha_t^2\|g_t \|^2
 -\frac{1}{4}\sum_{t=\tau_\star+1}^{T-1} \eta_t \alpha_t^2\|g_t \|^2  
\end{align}
where in the third line we use $2\beta\leq \frac{1}{\eta_t}$ which holds for $t> \tau_\star$, implying that $\beta-\frac{1}{\eta_t}\leq -\frac{1}{2\eta_t}$;
in the fourth line we use $\|y_{t+1}-x_{t+1}\| = \eta_t\|g_t\|$.

\paragraph{Final Bound :}
Combining the bounds in Eq.~\eqref{eq:etaSmooth}-\eqref{eq:KeySmooth} into Eq.~\eqref{eq:LemmaBound}, we obtain,
\begin{align*}
\sum_{t=0}^{T-1}\alpha_t (f(y_{t+1})- f(z))  
& \leq
\frac{D^2}{2\eta_{T-1}}
 +
\frac{1}{2}\sum_{t=0}^{T-1} \alpha_t \left(f(y_{t+1})-f(z) \right)
\nonumber\\
&\quad
 + 
 \frac{\beta}{2}\sum_{t=0}^{\tau_\star}\eta_t^2 \alpha_t^2\|g_t \|^2 
 -\frac{1}{4}\sum_{t=\tau_\star+1}^{T-1} \eta_t \alpha_t^2\|g_t \|^2 
 \end{align*} 
 Re-arranging we get,
 \begin{align}\label{eq:AlmostFinalSmooth}
\frac{1}{2}\sum_{t=0}^{T-1}\alpha_t (f(y_{t+1})- f(z)) 
& \leq
\underset{(*)}{\underbrace{ \frac{D^2}{2\eta_{T-1}}
 -\frac{1}{4}\sum_{t=\tau_\star+1}^{T-1} \eta_t \alpha_t^2\|g_t \|^2  
  }}
 + 
 \underset{(**)}{\underbrace{
 \frac{\beta}{2}\sum_{t=0}^{\tau_\star}\eta_t^2 \alpha_t^2\|g_t \|^2 
 }}
 \end{align} 
 
 This is the intricate part of the proof where we show that the above is bounded by a constant. This crucially depends on our choice of the learning rate, i.e., $\eta_t = 2D\left(G^2+ \sum_{\tau=0}^{t} \alpha_\tau^2\|g_\tau\|^2\right)^{-1/2}$. We  require the following lemma (proof is found in Appendix~\ref{app:Prooflem:SqrtSumReversed}) before we go on,
 \begin{lemma}\label{lem:SqrtSumReversed}
For any non-negative numbers $a_1,\ldots, a_n$ the following holds:
\begin{equation*}
\sqrt{\sum_{i=1}^n a_i} \leq \sum_{i=1}^n \frac{a_i}{\sqrt{\sum_{j=1}^i a_j}} \leq 2\sqrt{\sum_{i=1}^n a_i}~.
\end{equation*}
\end{lemma}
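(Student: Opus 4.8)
The plan is to work with the partial sums $S_i := \sum_{j=1}^i a_j$ (with $S_0 := 0$), so that $a_i = S_i - S_{i-1}$ and the middle quantity becomes $\sum_{i=1}^n (S_i - S_{i-1})/\sqrt{S_i}$. Both inequalities then reduce to elementary term-by-term estimates. Before starting I would observe that we may assume $a_i > 0$ for every $i$: an initial run of zeros can be dropped (it changes none of the three quantities), and under the convention $0/0 = 0$ any remaining zero term contributes $0$ to the middle sum while leaving the outer quantities untouched; if all $a_i$ vanish the statement is trivial. After this reduction every $S_i$ is strictly positive, so no division by zero occurs.

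For the lower bound I would just use monotonicity of the partial sums: since $S_i \leq S_n$ for all $i \leq n$, we have $1/\sqrt{S_i} \geq 1/\sqrt{S_n}$, hence
\[
\sum_{i=1}^n \frac{a_i}{\sqrt{S_i}} \;\geq\; \frac{1}{\sqrt{S_n}} \sum_{i=1}^n a_i \;=\; \frac{S_n}{\sqrt{S_n}} \;=\; \sqrt{\textstyle\sum_{i=1}^n a_i}.
\]

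For the upper bound, the key — and essentially the only substantive — step is the term-by-term inequality $\dfrac{a_i}{\sqrt{S_i}} \leq 2\big(\sqrt{S_i} - \sqrt{S_{i-1}}\big)$. I would prove it by factoring $a_i = S_i - S_{i-1} = \big(\sqrt{S_i} - \sqrt{S_{i-1}}\big)\big(\sqrt{S_i} + \sqrt{S_{i-1}}\big)$, so that the left-hand side equals $\big(\sqrt{S_i} - \sqrt{S_{i-1}}\big)\cdot \dfrac{\sqrt{S_i} + \sqrt{S_{i-1}}}{\sqrt{S_i}}$, and then bounding $\sqrt{S_i} + \sqrt{S_{i-1}} \leq 2\sqrt{S_i}$, which holds because $S_{i-1} \leq S_i$. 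Summing over $i = 1,\ldots,n$, the right-hand side telescopes to $2\big(\sqrt{S_n} - \sqrt{S_0}\big) = 2\sqrt{S_n}$, which is the desired bound.

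I do not expect a genuine obstacle here: the proof is short and self-contained. The only point requiring a line of care is the degenerate case in which an initial segment of the $a_i$ vanishes (so $S_i = 0$), which is exactly why I would dispose of it at the outset; once that is done, the factoring identity for the upper bound and the monotonicity/telescoping arguments are entirely routine.
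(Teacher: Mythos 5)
Your proof is correct, but it takes a different route from the paper's. The paper proves both inequalities by induction on $n$: for the lower bound it reduces the inductive step to showing $\sqrt{Z-x}+x/\sqrt{Z}\geq\sqrt{Z}$ (equivalently $x\leq Z$), and for the upper bound it follows Lemma~7 of McMahan and Streeter, maximizing $2\sqrt{Z-x}+x/\sqrt{Z}$ over $x\geq 0$ via a derivative argument. You instead work directly with the partial sums $S_i$ and prove both bounds non-inductively: the lower bound by the single monotonicity estimate $1/\sqrt{S_i}\geq 1/\sqrt{S_n}$, and the upper bound by the term-by-term inequality $a_i/\sqrt{S_i}\leq 2\bigl(\sqrt{S_i}-\sqrt{S_{i-1}}\bigr)$, obtained from the factorization $a_i=\bigl(\sqrt{S_i}-\sqrt{S_{i-1}}\bigr)\bigl(\sqrt{S_i}+\sqrt{S_{i-1}}\bigr)$, followed by telescoping. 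Your argument is arguably cleaner: it avoids induction and the calculus step entirely, treats both inequalities with one decomposition, and explicitly handles the degenerate case $S_i=0$ (which the paper passes over silently and which formally produces a $0/0$ term if some initial $a_i$ vanish). The paper's inductive route has the mild advantage of matching the form in which this inequality is usually cited in the adaptive-gradient literature, but mathematically the two are equivalent in strength and your version is self-contained.
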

Equipped with the above lemma and using $\eta_t$ explicitly enables to  bound $(*)$, 
\begin{align}\label{StarTermSmooth}
 (*) &~= ~
 \frac{D}{4}\left(G^2+\sum_{t=0}^{T-1}  \alpha_t^2  \|g_t\|^2 \right)^{1/2} -\frac{D}{2}\sum_{t=\tau_\star+1}^{T-1}  \frac{ \alpha_t^2\|g_t\|^2}{\left(G^2+\sum_{\tau=0}^{t}  \alpha_\tau^2  \|g_\tau\|^2 \right)^{1/2}} \nonumber \\
 &~\leq~
 \frac{D}{4}\left( 
 \frac{G^2}{(G^2)^{1/2}}+
 \sumtt \frac{ \alpha_t^2\|g_t\|^2}{\left(G^2+\sum_{\tau=0}^{t}  \alpha_\tau^2  \|g_\tau\|^2 \right)^{1/2}}
 \right)
-\frac{D}{2}\sum_{t=\tau_\star+1}^{T-1}  \frac{ \alpha_t^2\|g_t\|^2}{\left(G^2+\sum_{\tau=0}^{t}  \alpha_\tau^2  \|g_\tau\|^2 \right)^{1/2}} \nonumber \\
 &~\leq~
 \frac{DG}{4}+
\frac{D}{4}\sum_{t=0}^{\tau_\star} \frac{ \alpha_t^2\|g_t\|^2}{\left(G^2+\sum_{\tau=0}^{t}  \alpha_\tau^2  \|g_\tau\|^2 \right)^{1/2}}\nonumber \\
&~\leq~
\frac{DG}{4}+
\frac{D}{2}\left(\sum_{\tau=0}^{\tau_\star}  \alpha_\tau^2  \|g_\tau\|^2 \right)^{1/2}\nonumber \\
&~=~ 
\frac{DG}{4}+\frac{D^2}{\eta_{\tau_\star}} \nonumber \\
&~\leq~ 
{DG}/{4}+2\beta D^2
 \end{align}
where in the second line we use the left hand nequality of Lemma~\ref{lem:SqrtSumReversed};
 in the fourth line we use the right hand inequality of Lemma~\ref{lem:SqrtSumReversed}
 ; and  in the last line we have used the definition of $\tau_\star$ which implies that $1/\eta_{\tau_\star}\leq 2\beta$.

 We will also require the following lemma
 (proof is found in Appendix~\ref{app:Prooflem:Log_sum}),
  \begin{lemma} \label{lem:Log_sum}
For any non-negative real numbers $a_1,\ldots, a_n$,
\begin{align*}
\sum_{i=1}^n \frac{a_i}{1+\sum_{j=1}^i a_j} 
\le 
1+\log\left( 1+\sum_{i=1}^n a_i\right) ~.
\end{align*}
\end{lemma}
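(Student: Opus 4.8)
## Proof Proposal for Lemma~\ref{lem:Log_sum}

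The plan is to prove the inequality $\sum_{i=1}^n \frac{a_i}{1+\sum_{j=1}^i a_j} \le 1+\log(1+\sum_{i=1}^n a_i)$ by comparing the discrete sum to an integral, exploiting the telescoping structure of the partial sums. Denote $S_0 := 1$ and $S_i := 1 + \sum_{j=1}^i a_j$ for $i \ge 1$, so that $a_i = S_i - S_{i-1}$ and $S_i \ge S_{i-1} \ge 1$. Then the $i$-th term of the sum is $\frac{S_i - S_{i-1}}{S_i}$, and the goal becomes $\sum_{i=1}^n \frac{S_i - S_{i-1}}{S_i} \le 1 + \log S_n$.

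First I would handle each term by the elementary bound $\frac{S_i - S_{i-1}}{S_i} \le \int_{S_{i-1}}^{S_i} \frac{1}{u}\,du = \log S_i - \log S_{i-1}$, which holds because $\frac{1}{u} \ge \frac{1}{S_i}$ for all $u \in [S_{i-1}, S_i]$ and the interval has length $S_i - S_{i-1}$. Summing this telescopes cleanly to $\log S_n - \log S_0 = \log S_n$, which is even stronger than the claimed bound (the additive $1$ is not needed under this argument). If one instead wants the weaker looking $1 + \log S_n$, it follows a fortiori; alternatively, if the indexing convention in the intended application starts the partial sum without the ``$1+$'' offset inside the summand's denominator, one isolates the first term $\frac{a_1}{1+a_1} \le 1$ and applies the integral comparison from $i=2$ onward, giving $1 + (\log S_n - \log S_1) \le 1 + \log S_n$. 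Either route closes the argument.

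The main obstacle here is essentially cosmetic rather than mathematical: it is just making sure the index bookkeeping matches whatever exact form of the denominator appears (whether it is $1 + \sum_{j=1}^i a_j$ uniformly, or whether an off-by-one shift sneaks in from how the lemma is invoked in bounding term $(**)$ of Theorem~\ref{thm:Main}). Since $\frac{1}{u}$ is decreasing and positive on $[1,\infty)$, and all $a_i \ge 0$ so the partial sums $S_i$ are nondecreasing and stay $\ge 1$, the integral comparison is valid with no edge cases. One should only note that if some $a_i = 0$ the corresponding term contributes $0$ on both sides, so degenerate terms cause no trouble. Thus the entire proof is the one-line-per-term estimate $\frac{S_i - S_{i-1}}{S_i} \le \log S_i - \log S_{i-1}$ followed by telescoping.
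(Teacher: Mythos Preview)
Your proof is correct. Both your argument and the paper's rest on the same elementary inequality $\frac{x}{1+x}\le\log(1+x)$ for $x\ge 0$: writing $x=a_i/S_{i-1}$ one has $\frac{S_i-S_{i-1}}{S_i}=\frac{x}{1+x}$ and $\log S_i-\log S_{i-1}=\log(1+x)$, so your integral comparison is exactly this bound. The packaging differs, however. The paper proceeds by induction on $n$, anchoring the base case with $\frac{a_1}{1+a_1}\le 1$ and then, in the inductive step, reducing to precisely $\log(1+x)\ge\frac{x}{1+x}$. Your route applies the per-term bound directly and telescopes, which is shorter and, as you note, actually yields the sharper conclusion $\sum_{i=1}^n \frac{a_i}{1+\sum_{j\le i}a_j}\le \log\bigl(1+\sum_{i=1}^n a_i\bigr)$ without the additive $1$; the paper's extra $1$ is an artifact of how its induction base case is set up. Either way the lemma is established, and your observation that the $+1$ is superfluous is a small but genuine improvement.
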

 Equipped with the above lemma and using $\eta_t$ explicitly enables to  bound $(**)$, 
 \begin{align}\label{eq:SmoothStar2}
  \frac{\beta}{2}\sum_{t=0}^{\tau_\star}\eta_t^2 \alpha_t^2\|g_t \|^2  
  &=
  \frac{4\beta D^2}{2}\sum_{t=0}^{\tau_\star} \frac{\alpha_t^2\|g_t \|^2}{G^2+\sum_{\tau=0}^t  \alpha_\tau^2  \|g_\tau\|^2}  \nonumber\\
  &=
 {2\beta D^2}\sum_{t=0}^{\tau_\star} \frac{\alpha_t^2(\|g_t \|/G)^2}{1+\sum_{\tau=0}^t  \alpha_\tau^2  (\|g_\tau\|/G)^2} \nonumber\\
 &\leq 
 {2\beta D^2}
 \left( 
 1+ \log\left((G/G)^2+\sum_{\tau=0}^{\tau_\star}  \alpha_\tau^2  (\|g_\tau\|/G)^2 \right) 
 \right)\nonumber\\
 &= 
 {2\beta D^2}
  \left(
   1+ \log\left(\frac{4D^2/G^2}{\eta_{\tau_\star}^2}\right)
   \right) \nonumber\\
  &\leq
 {2\beta D^2}
  \left(
   1+ 2\log\left({4\beta D/G}\right)
   \right) 
 \end{align}
 where in the third line we
used Lemma~\ref{lem:Log_sum}, and
 in the last line we have used the definition of $\tau_\star$ which implies that $1/\eta_{\tau_\star}\leq 2\beta$. 
 Combining Equations~\eqref{StarTermSmooth}, \eqref{eq:SmoothStar2} back into Eq.~\eqref{eq:AlmostFinalSmooth} and 
using Jensen's inequality we are now ready to establish the final bound,
\begin{align*}
f(\bar{y}_T) - f(z) 
&\leq 
\frac{\sum_{t=0}^{T-1}\alpha_t (f(y_{t+1}) - f(z))}{\sumtt \alpha_t} \\
&\leq
\frac{{DG}/{2}
 +
8\beta D^2
  \left(
   1+ \log\left({4\beta D}/{G}\right)
   \right) 
 }{T^2/32} \\
&=
O\left(\frac{DG + \beta D^2\log(\beta D/G)}{T^2}\right)~.
\end{align*}
where we have used $\alpha_t\geq \frac{1}{4}(t+1)$ and therefore $\sum_{t=0}^{T-1}\alpha_t\geq  T^2/32$.
 
\end{proof}

\subsection{Proof of Lemma~\ref{lem:LemGen}}
\label{app:Prooflem:LemGen}
\begin{proof}
Our starting point is bounding $\alpha_t (f(x_{t+1})- f(z))$ which can be decomposed as follows,
\begin{align} \label{eq:XtBound}
\alpha_t (f(x_{t+1})- f(z))
&\leq
\alpha_t g_t\cdot(x_{t+1}-z)  \nonumber\\
&=
  \alpha_t g_t\cdot(z_{t}-z)  + \alpha_t g_t\cdot(x_{t+1}-z_{t})
\end{align} 
where we use $g_t = \nabla f(x_{t+1})$ in conjunction with the gradient inequality.
Let us now bound the terms in the above equation.
 
\paragraph{(a) Bounding $\alpha_t g_t\cdot(z_{t}-z)$:} 
 The next lemma enables to bound  this term,
 \begin{lemma}\label{lemma:PGDconstrained}
The following holds,
\begin{align*}
\alpha_t g_t\cdot(z_t-z) \leq 
\left(\alpha_t g_t\cdot (z_t-z_{t+1}) -\frac{1}{2\eta_t}\|z_t-z_{t+1}\|^2\right)+ \frac{1}{2\eta_t}\left( \|z_t-z\|^2 -\|z_{t+1}-z\|^2 \right)
\end{align*}
\end{lemma}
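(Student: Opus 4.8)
The final statement to prove is Lemma~\ref{lemma:PGDconstrained}: the standard ``projected gradient descent'' one-step inequality for the $z$-update $z_{t+1} = \Pi_{\K}(z_t - \alpha_t\eta_t g_t)$.

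\medskip

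\noindent\textbf{Plan.}
The plan is to derive the inequality from the optimality/nonexpansiveness properties of the Euclidean projection, exactly as in textbook mirror-descent/PGD analyses. First I would split the target expression using the identity
\[
\alpha_t g_t\cdot(z_t-z) = \alpha_t g_t\cdot(z_t-z_{t+1}) + \alpha_t g_t\cdot(z_{t+1}-z),
\]
so that it suffices to show
\[
\alpha_t g_t\cdot(z_{t+1}-z) \le \frac{1}{2\eta_t}\Big(\|z_t-z\|^2 - \|z_{t+1}-z\|^2 - \|z_t-z_{t+1}\|^2\Big).
\]
The first term on the right of the lemma's claim will then be matched, and the $-\frac{1}{2\eta_t}\|z_t - z_{t+1}\|^2$ term is exactly what is produced here.

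\medskip

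\noindent\textbf{Key steps.}
Set $w := z_t - \alpha_t\eta_t g_t$, so $z_{t+1} = \Pi_\K(w)$. The first-order optimality condition for the projection (a Euclidean projection onto a convex set) gives, for every $z\in\K$,
\[
(w - z_{t+1})\cdot(z - z_{t+1}) \le 0,
\]
i.e. $(z_t - \alpha_t\eta_t g_t - z_{t+1})\cdot(z - z_{t+1}) \le 0$. Rearranging,
\[
\alpha_t\eta_t g_t\cdot(z_{t+1} - z) \le (z_t - z_{t+1})\cdot(z_{t+1} - z).
\]
Then I would apply the three-point (law of cosines) identity
\[
(z_t - z_{t+1})\cdot(z_{t+1} - z) = \tfrac12\|z_t - z\|^2 - \tfrac12\|z_{t+1} - z\|^2 - \tfrac12\|z_t - z_{t+1}\|^2,
\]
which, after dividing by $\eta_t$, yields the displayed bound on $\alpha_t g_t\cdot(z_{t+1}-z)$. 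Adding back $\alpha_t g_t\cdot(z_t - z_{t+1})$ to both sides gives precisely the statement of the lemma.

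\medskip

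\noindent\textbf{Main obstacle.}
There is essentially no obstacle — this is a routine verification — so the only care needed is bookkeeping: making sure the $\alpha_t$ and $\eta_t$ factors land in the right places (the projection ``sees'' the step $\alpha_t\eta_t g_t$, but the final inequality is stated with $1/(2\eta_t)$ coefficients and a loose $\alpha_t g_t\cdot(z_t - z_{t+1})$ term that is \emph{not} cancelled here, since it will be controlled later in the proof of Lemma~\ref{lem:LemGen}), and invoking the correct optimality characterization of $\Pi_\K$ for a closed convex $\K$. I would state the projection optimality inequality as a one-line sublemma (or cite it as standard) and then the rest is the law of cosines.
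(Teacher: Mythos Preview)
Your proposal is correct and follows essentially the same approach as the paper: the paper also splits $\alpha_t g_t\cdot(z_t-z)$ into $\alpha_t g_t\cdot(z_t-z_{t+1})+\alpha_t g_t\cdot(z_{t+1}-z)$, bounds the second term via the first-order optimality condition for $z_{t+1}$, and then applies the three-point (law of cosines) identity --- the only cosmetic difference is that the paper phrases the optimality condition through the regularizer $\R_{z_t}(x)=\tfrac12\|x-z_t\|^2$ and states the three-point identity as a separate auxiliary lemma, whereas you invoke the projection characterization $(w-z_{t+1})\cdot(z-z_{t+1})\le 0$ directly.
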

The proof of Lemma~\ref{lemma:PGDconstrained} is provided in Appendix~\ref{app:Prooflemma:PGDconstrained}.

We can now relate the first term in the above lemma to $y_{t+1}$.
Define $v = \tau_t z_{t+1} + (1-\tau_t)y_t \in\K$, and notice that $x_{t+1}-v = \tau_t(z_{t}-z_{t+1})$. Using this we may write,
\begin{align}\label{eq:RelationZY}
\alpha_t g_t \cdot (z_t-z_{t+1})& -\frac{1}{2\eta_t}\|z_t-z_{t+1}\|^2   \nonumber\\
&=
\frac{\alpha_{t}}{\tau_t}g_t \cdot (x_{t+1}-v) - \frac{1}{2\eta_t \tau_t^2}\|x_{t+1}-v\|^2  \nonumber\\
&=
\alpha_t^2\left( g_t\cdot(x_{t+1}-v) - \frac{1}{2\eta_t}\|x_{t+1}-v \|^2  \right)  \nonumber\\
&=
\alpha_t^2 g_t\cdot x_{t+1}-\alpha_t^2\left( g_t \cdot v + \frac{1}{2\eta_t}\|x_{t+1}-v \|^2  \right) \nonumber\\
&\le
\alpha_t^2 g_t\cdot x_{t+1}-\alpha_t^2\left( g_t \cdot y_{t+1} + \frac{1}{2\eta_t}\|x_{t+1}-y_{t+1} \|^2  \right)  \nonumber\\
&=
\alpha_t^2 g_t\cdot (x_{t+1} - y_{t+1}) - \frac{\alpha_t^2}{2\eta_t}\|x_{t+1}-y_{t+1} \|^2
\end{align}
where we use $\tau_t =1/\alpha_t$; also in the inequality  we use the following equivalent form for the update rule of $y_{t+1}$,
\begin{align*} 
y_{t+1}= \argmin_{x\in\reals^d}g_t \cdot x + \frac{1}{2\eta_t}\|x-x_{t+1} \|^2~.
\end{align*}
this equivalence can be directly validated  by finding the global optimum of the above objective and showing that it is obtained by choosing $y_{t+1} = x_{t+1}-\eta_t g_t$.

Combining Eq.~\eqref{eq:RelationZY} with Lemma~\ref{lemma:PGDconstrained} gives,
\begin{align}\label{eq:ClassGrad}
\alpha_t g_t\cdot(z_t-z) \leq 
\alpha_t^2 g_t\cdot (x_{t+1} - y_{t+1}) - \frac{\alpha_t^2}{2\eta_t}\|x_{t+1}-y_{t+1} \|^2+ \frac{1}{2\eta_t}\left( \|z_t-z\|^2 -\|z_{t+1}-z\|^2 \right)
\end{align}

\paragraph{(b) Bounding $\alpha_t g_t\cdot(x_{t+1}-z_t)$:} 
 Notice that re-arranging the relation between $x_{t+1}, y_t, z_t$ (recall $x_{t+1} =  \tau_t z_t + (1-\tau_t) y_t$) gives,
\begin{align} \label{eq:relAverage}
x_{t+1}-z_t = r_t ( y_{t}-x_{t+1})
\end{align}
where we denote $r_t =  {(1-\tau_t)}/{\tau_t}$.
Also note that the smoothness of $f$ implies,
\begin{align} \label{eq:Smoothness}
f(y_{t+1}) - f(x_{t+1}) &\leq g_t \cdot(y_{t+1} - x_{t+1}) +\frac{\beta}{2} \| y_{t+1} - x_{t+1}\|^2 
\end{align}
Combining Eq.~\eqref{eq:relAverage} and \eqref{eq:Smoothness} we get,
\begin{align} \label{eq:XZrelation}
g_t &\cdot (x_{t+1}-z_t) \nonumber \\
&= 
r_t \nabla f(x_{t+1}) \cdot (y_{t}-x_{t+1}) \nonumber \\
&\leq
r_t \left( f(y_t) - f(x_{t+1}) \right) \nonumber \\
& = 
r_t \left( f(y_{t}) - f(y_{t+1}) \right) +(r_t+1) \left( f(y_{t+1}) - f(x_{t+1}) \right) -  \left( f(y_{t+1}) - f(x_{t+1}) \right)  \nonumber \\
&\leq
(\alpha_t-1) \left( f(y_{t}) - f(y_{t+1}) \right) 
+
\alpha_t \left(
 g_t \cdot(y_{t+1} - x_{t+1}) +\frac{\beta}{2} \| y_{t+1} - x_{t+1}\|^2  \right)  \nonumber \\
&\quad -  \left( f(y_{t+1}) - f(x_{t+1}) \right)
\end{align}
where second line uses the gradient inequality. We have also used $r_t = (1-\tau_t)/{\tau_t} = \alpha_t-1$ (see Alg.~\ref{algorithm:UniAccel}).

\paragraph{(c) Bounding $\alpha_t\cdot(f(y_{t+1})-f(z))$:}  
 Combining Equations~\eqref{eq:XtBound},~\eqref{eq:ClassGrad} and $\eqref{eq:XZrelation}$ we get,
\begin{align*}
\alpha_t &(f(x_{t+1})- f(z)) \\
& \leq
\alpha_t g_t \cdot (z_{t}-z)  + \alpha_t g_t \cdot (x_{t+1}-z_t) \\
&\leq
\left\{ 
\alpha_t^2 g_t\cdot (x_{t+1} - y_{t+1}) - \frac{\alpha_t^2}{2\eta_t}\|x_{t+1}-y_{t+1} \|^2+ \frac{1}{2\eta_t}\left( \|z_t-z\|^2 -\|z_{t+1}-z\|^2 \right)
\right\} 
\\
\qquad
&\quad+
(\alpha_t^2-\alpha_t) \left( f(y_{t}) - f(y_{t+1}) \right) 
+
\alpha_t^2\left( g_t \cdot(y_{t+1} - x_{t+1}) +\frac{\beta}{2} \| y_{t+1} - x_{t+1}\|^2  \right) \\
&\quad        -\alpha_t\left( f(y_{t+1}) - f(x_{t+1}) \right)
  \\
 &=
 (\alpha_t^2-\alpha_t) ( f(y_{t}) - f(y_{t+1}) )
 + \frac{\alpha_t^2}{2}\left( \beta - \frac{1}{\eta_t} \right)\| y_{t+1} - x_{t+1}\|^2 \\
&\quad    + \frac{1}{2\eta_t}\left( \|z_t-z\|^2 -\|z_{t+1}-z\|^2 \right)      -\alpha_t\left( f(y_{t+1}) - f(x_{t+1}) \right)
\end{align*} 

Re-arranging the above equation implies,
\begin{align*}
\alpha_t &(f(y_{t+1})- f(z)) \\
& \leq
 (\alpha_t^2-\alpha_t) ( f(y_{t}) - f(y_{t+1}) )
 + \frac{\alpha_t^2}{2}\left( \beta - \frac{1}{\eta_t} \right)\| y_{t+1} - x_{t+1}\|^2 \\
&\quad    + \frac{1}{2\eta_t}\left( \|z_t-z\|^2 -\|z_{t+1}-z\|^2 \right)   
\end{align*} 
which concludes the proof.
\end{proof}

\subsection{Proof of Lemma~\ref{lemma:PGDconstrained}} \label{app:Prooflemma:PGDconstrained}
\begin{proof}
Writing the update of the $z_t$'s explicitly we have, 
$$z_{t+1} \gets \argmin_{x\in\K}\| x - \left(z_t - \eta_t\alpha_t g_t \right)\|^2~.$$
Simplifying the above implies the following equivalent form,
$$
z_{t+1}\gets \argmin_{x\in\K} \alpha_t g_t\cdot x +\frac{1}{\eta_t}\R_{z_t}(x)~,
$$
where $\R_{z_t}(x) : = \|x-z_t\|^2/2$.
Since $z_{t+1}$ is a solution of the above minimization problem it satisfies the first order optimality conditions, i.e. $\forall z\in\K$,
\begin{align}\label{e:OptZ_t}
\alpha_t g_t\cdot(z-z_{t+1}) + \frac{1}{\eta_t}\nabla\R_{z_t}(z_{t+1})\cdot (z-z_{t+1})\ge 0
\end{align}
which follows by the first order optimality conditions for $z_{t+1}$.
We are now ready to complete the proof,
\begin{align*}
\alpha_t g_t\cdot(z_t-z)
&=
\alpha_t g_t\cdot (z_t-z_{t+1}) + \alpha_t g_t \cdot (z_{t+1}-z) \\
&\le
\alpha_t g_t\cdot (z_t-z_{t+1})  -\frac{1}{\eta_t}\nabla \R_{z_t}(z_{t+1}) \cdot (z_{t+1}-z) \\
&=
\alpha_t g_t\cdot (z_t-z_{t+1}) -\frac{1}{2\eta_t}\|z_t-z_{t+1}\|^2+ \frac{1}{2\eta_t}\left( \|z_t-z\|^2 -\|z_{t+1}-z\|^2 \right)
\end{align*}
where the second line follows due to Eq.~\eref{e:OptZ_t}, and the second line is due to following lemma (which may be easily extended to general Bergman divergences),
\begin{lemma}\label{lem:Triangle}
Let $u,v,z\in\reals^d$, and let $\R_v(x): = \frac{1}{2}\| x-v\|^2$, then
$$
-\nabla \R_v(u)\cdot (u-z) = \frac{1}{2}\|v-z \|^2 -\frac{1}{2}\|u-z\|^2 - \frac{1}{2}\|u-v \|^2 
$$
\end{lemma}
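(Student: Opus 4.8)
The plan is to treat this purely as an elementary identity about the Euclidean norm, with no inequality or optimality argument involved. First I would compute the gradient of the quadratic: since $\R_v(x) = \frac12\|x-v\|^2$, we have $\nabla\R_v(u) = u-v$, so the left-hand side is simply $-(u-v)\cdot(u-z)$, and the whole claim reduces to verifying $-(u-v)\cdot(u-z) = \frac12\|v-z\|^2 - \frac12\|u-z\|^2 - \frac12\|u-v\|^2$.

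Next I would invoke the polarization identity $2\,a\cdot b = \|a\|^2 + \|b\|^2 - \|a-b\|^2$ with the choice $a = u-v$ and $b = u-z$. Then $a-b = z-v$, so $\|a-b\|^2 = \|v-z\|^2$, and the identity gives $2(u-v)\cdot(u-z) = \|u-v\|^2 + \|u-z\|^2 - \|v-z\|^2$. Multiplying through by $-\frac12$ produces exactly the right-hand side of the lemma. Alternatively one can just expand all three squared norms in coordinates (using $\|a\|^2 = \|a\|^2$ termwise) and collect — the cross terms cancel and leave $2(v-u)\cdot(u-z)$, which is the same conclusion.

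There is essentially no obstacle here; the only care needed is sign bookkeeping and the substitution $a-b = z-v$. I would also remark (matching the comment appended to the statement) that the same three-term expansion works verbatim for a general Bregman divergence $\R_v(x) = R(x) - R(v) - \nabla R(v)\cdot(x-v)$: writing out $\R_v(z)$, $\R_u(z)$, and $\R_v(u)$ and summing with the appropriate signs, the $R(\cdot)$ and $\nabla R(\cdot)$ terms telescope, giving $-\nabla\R_v(u)\cdot(u-z) = \R_v(z) - \R_u(z) - \R_v(u)$, of which the stated Euclidean version is the special case $R(x) = \frac12\|x\|^2$.
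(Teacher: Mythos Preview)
Your proposal is correct and essentially matches the paper's proof: the paper also observes that $-\nabla\R_v(u) = v-u$ and then verifies the identity by direct expansion of both sides in terms of inner products. Your use of the polarization identity is just a slightly tidier packaging of the same coordinate expansion, and your Bregman remark aligns with the paper's parenthetical comment that the lemma extends to general Bregman divergences.
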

Below we provide the proof of this lemma.

\end{proof}

\subsubsection{Proof of Lemma~\ref{lem:Triangle}}
\begin{proof}
Noticing that $-\nabla \R_v(u) = v-u$ the lemma may be validated by a direct calculation.
Indeed, $-\nabla \R_v(u)\cdot (u-z)  = -v\cdot z + u\cdot z + u\cdot v -\|u\|^2$. Also,
$$
 \|v-z \|^2 -\|u-z\|^2 - \|u-v \|^2  = -2v\cdot z + 2u\cdot z + 2u\cdot v -2\|u\|^2
$$
\end{proof}

\subsection{Proof of Lemma~\ref{lem:Alphas1}}
\begin{proof}
For $t\leq 3$ we have  $\alpha_t^2-\alpha_t=0$ and the lemma immediately follows.
For $t>3$ we have,
\begin{align*}
(\alpha_t^2-\alpha_t) - (\alpha_{t-1}^2-\alpha_{t-1}) 
&= \frac{(t+1)^2 - 4(t+1)}{16} - \frac{t^2 - 4t}{16} =\frac{2t-3}{8}\leq \alpha_{t-1}/2
\end{align*}
\end{proof}
\label{app:Prooflem:Alphas1}

\subsection{Proof of Lemma~\ref{lem:SqrtSumReversed}}
\label{app:Prooflem:SqrtSumReversed}
\begin{proof}
\textbf{First direction:}
We will prove this part  by induction. The base case, $n=1$, immediately holds.
For the induction step assume that the lemma holds for $n-1$ and let us show it holds for $n$.
By the induction assumption,
\begin{align*}
\sum_{i=1}^n \frac{a_i}{\sqrt{\sum_{j=1}^i a_j}} 
&\geq
\sqrt{\sum_{i=1}^{n-1}a_i} + \frac{a_n}{\sqrt{\sum_{i=1}^n a_i}} = \sqrt{Z-x} + \frac{x}{\sqrt{Z}}
\end{align*}
where we denote $x:= a_n$ and $Z = \sum_{i=1}^n a_i$ (note that $x\leq Z$).
Thus, in order to prove the lemma it is sufficient to show that,
$$
\sqrt{Z-x} + \frac{x}{\sqrt{Z}} \geq \sqrt{Z}~,
$$
which we do next. 
Multiplying both sides by $\sqrt{Z}$ we get that the above is equivalent to,

$$
\sqrt{Z^2-xZ} \ge Z-x
$$
Taking the square of the above an re-ordering we get that the above is equivalent to,
$$
x\le Z
$$
Which holds in our case since $x = a_n\le \sum_{i=1}^n a_i =Z$. This concludes the first part of the proof.

\textbf{Second direction:}
The second inequality in the lemma is due to Lemma $7$ in \citep{mcmahan2010adaptive}.
For completeness we include their proof.

This part is also proved  by induction. The base case, $n=1$, immediately holds.
For the induction step assume that the lemma holds for $n-1$ and let us show it holds for $n$.
By the induction assumption,
\begin{align*}
\sum_{i=1}^n \frac{a_i}{\sqrt{\sum_{j=1}^i a_j}} 
&\leq
2\sqrt{\sum_{i=1}^{n-1}a_i} + \frac{a_n}{\sqrt{\sum_{i=1}^n a_i}} = 2\sqrt{Z-x} + \frac{x}{\sqrt{Z}}
\end{align*}
where we denote $x:= a_n$ and $Z = \sum_{i=1}^n a_i$ (note that $x\leq Z$).
The derivative of the right hand side with respect to $x$ is $-\frac{1}{\sqrt{Z-x}}+\frac{1}{\sqrt{Z}}$
, which is negative for $x\geq 0$. Thus, subject to the constraint $x\geq 0$, the right hand side is maximized at $x=0$, and is therefore at most $2\sqrt{Z}$. This concludes the second part of the proof.
\end{proof}

\subsection{Proof of Lemma~\ref{lem:Log_sum}}
\label{app:Prooflem:Log_sum}
\begin{proof}
We will prove the statement by induction over $n$. 
The base case $n=1$  holds since,
$$
\frac{a_1}{1+a_1} \leq 1 \leq 1+\log(1+a_1)~.
$$
 For the induction step, let us assume that the guarantee holds for $n-1$, which implies that for any $a_1,\ldots, a_n\geq 0$,
\begin{align*}
\sum_{i=1}^{n} \frac{a_i}{1+\sum_{j=1}^i a_j} 
\le 
1+\log(1+ \sum_{i=1}^{n-1} a_i) + \frac{a_n}{1+\sum_{i=1}^n a_i}~.
\end{align*}
The above suggests that establishing following inequality concludes the proof,
\begin{align} \label{eq:Induction}
1+\log( 1+\sum_{i=1}^{n-1} a_i) + \frac{a_n}{1+\sum_{i=1}^n a_i}
\le 
1+\log( 1+\sum_{i=1}^{n} a_i) ~.
\end{align}
Using the notation  $x = a_n/(1+\sum_{i=1}^{n-1}a_i)$, Equation~\eqref{eq:Induction} is equivalent to the following,
\begin{align*} 
 \log(x+1) -\frac{x}{1+x} \ge 0~.
\end{align*}
However, it is immediate to validate that the function $M(x) = \log(x+1) -\frac{x}{1+x}$, is non-negative for any $x\geq 0 $, which establishes the lemma.
\end{proof}

\newpage
\section{Proofs for the General Convex Case (Thm.~\ref{thm:MainNonSmooth})}
\label{app:MainNonSmooth}
Here we provide the complete  proof  of Theorem~\ref{thm:MainNonSmooth}, and of the related lemmas.
For brevity, we will use $z\in\K$ to denote a \emph{global mimimizer} of $f$ which belongs to $\K$. 

Recall that  Algorithm~\ref{algorithm:UniAccel} outputs a weighted average of the queries. Consequently,  we may  employ Jensen's inequality to bound its error as follow,
\begin{align} \label{eq:JensenProof}
f(\bar{y}_T) - f(z)
&\leq 
\frac{1}{\sumtt \alpha_t}\sum_{t=0}^{T-1} {\alpha_t}\left(  f(y_{t+1})-  f(z)\right) ~.
\end{align} 
Combining this with  $\sumtt \alpha_t \geq \Omega(T^2)$, implies that  in order to substantiate the proof it is sufficient to show that, $\sum_{t=0}^{T-1} {\alpha_t}\left(  f(y_{t+1})-  f(z)\right)$, is bounded by $\tO(T^{3/2})$. This is the bulk of the analysis.

We start with the following lemma which provides us with a bound on $\alpha_t \left(  f(y_{t+1})-f(z)\right) $,
\begin{lemma}\label{lem:LemGenNonSmooth}
Assume that $f$ is convex and $G$-Lipschitz. Then for any
 sequence of non-negative weights $\{\alpha_t\}_{t\geq0}$, and learning rates $\{\eta_t\}_{t\geq0}$, Algorithm~\ref{algorithm:UniAccel} ensures the following to hold,
\begin{align*}
\alpha_t &(f(y_{t+1})- f(z)) \\
&\leq
\eta_t \alpha_t^2 \|g_t\|^2+\eta_t \alpha_t^2 \|g_t\|G+ \frac{1}{2\eta_t}\left( \|z_t-z\|^2 -\|z_{t+1}-z\|^2 \right)
+
(\alpha_t^2-\alpha_t) \left( f(y_{t}) - f(y_{t+1}) \right) 
\end{align*} 
\end{lemma}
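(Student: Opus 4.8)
The plan is to recycle the proof of Lemma~\ref{lem:LemGen} almost line for line, since the only place $\beta$-smoothness entered there was in estimating $f(y_{t+1})-f(x_{t+1})$; replacing that estimate by the $G$-Lipschitz bound will produce exactly the new $\eta_t\alpha_t^2\|g_t\|G$ term and leave everything else intact. I would begin, as in that proof, from the gradient inequality $\alpha_t(f(x_{t+1})-f(z))\le \alpha_t g_t\cdot(x_{t+1}-z)$ and split the right-hand side as $\alpha_t g_t\cdot(z_t-z)+\alpha_t g_t\cdot(x_{t+1}-z_t)$.

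\textbf{First term.} The derivation of Equation~\eqref{eq:ClassGrad} uses only the projection optimality condition (Lemma~\ref{lemma:PGDconstrained}) and the auxiliary point $v=\tau_t z_{t+1}+(1-\tau_t)y_t$, and never uses smoothness, so it carries over verbatim: $\alpha_t g_t\cdot(z_t-z)\le \alpha_t^2 g_t\cdot(x_{t+1}-y_{t+1})-\frac{\alpha_t^2}{2\eta_t}\|x_{t+1}-y_{t+1}\|^2+\frac{1}{2\eta_t}(\|z_t-z\|^2-\|z_{t+1}-z\|^2)$. Plugging in $x_{t+1}-y_{t+1}=\eta_t g_t$, so that $\alpha_t^2 g_t\cdot(x_{t+1}-y_{t+1})=\eta_t\alpha_t^2\|g_t\|^2$, and discarding the non-positive quadratic term, this yields $\alpha_t g_t\cdot(z_t-z)\le \eta_t\alpha_t^2\|g_t\|^2+\frac{1}{2\eta_t}(\|z_t-z\|^2-\|z_{t+1}-z\|^2)$.

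\textbf{Second term.} Again following the smooth proof, the coupling identity $x_{t+1}-z_t=(\alpha_t-1)(y_t-x_{t+1})$ (with $\alpha_t-1=r_t\ge 0$ for the chosen weights) together with convexity gives $\alpha_t g_t\cdot(x_{t+1}-z_t)\le(\alpha_t^2-\alpha_t)(f(y_t)-f(x_{t+1}))$, which I would rewrite exactly as in Equation~\eqref{eq:XZrelation},
\[
(\alpha_t^2-\alpha_t)(f(y_t)-f(x_{t+1}))=(\alpha_t^2-\alpha_t)(f(y_t)-f(y_{t+1}))+\alpha_t^2(f(y_{t+1})-f(x_{t+1}))-\alpha_t(f(y_{t+1})-f(x_{t+1})).
\]
Here is the single new step: instead of bounding $\alpha_t^2(f(y_{t+1})-f(x_{t+1}))$ through $\alpha_t^2(g_t\cdot(y_{t+1}-x_{t+1})+\frac{\beta}{2}\|y_{t+1}-x_{t+1}\|^2)$, I invoke $G$-Lipschitzness, $f(y_{t+1})-f(x_{t+1})\le G\|y_{t+1}-x_{t+1}\|=\eta_t G\|g_t\|$, hence $\alpha_t^2(f(y_{t+1})-f(x_{t+1}))\le \eta_t\alpha_t^2\|g_t\|G$.

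\textbf{Combining.} Adding the two bounds gives an estimate of $\alpha_t(f(x_{t+1})-f(z))$ carrying a leftover $-\alpha_t(f(y_{t+1})-f(x_{t+1}))$; moving $\alpha_t(f(y_{t+1})-f(x_{t+1}))$ to the left converts the left-hand side into $\alpha_t(f(y_{t+1})-f(z))$ and cancels that term, leaving precisely the claimed inequality. I do not anticipate a genuine obstacle here — the whole argument is a transcription of the smooth proof with one substitution — only the routine verifications that $r_t=\alpha_t-1\ge 0$ and that the discarded quadratic term is indeed non-positive, both of which hold for the weights in Equation~\eqref{eq:WeightsLearningRate}.
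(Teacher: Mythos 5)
Your proposal is correct and follows essentially the same route as the paper's own proof: the same decomposition $\alpha_t g_t\cdot(x_{t+1}-z) = \alpha_t g_t\cdot(z_t-z)+\alpha_t g_t\cdot(x_{t+1}-z_t)$, reuse of Eq.~\eqref{eq:ClassGrad} (which indeed never invokes smoothness) with $x_{t+1}-y_{t+1}=\eta_t g_t$, the same splitting of $(\alpha_t^2-\alpha_t)(f(y_t)-f(x_{t+1}))$, the $G$-Lipschitz bound $f(y_{t+1})-f(x_{t+1})\le G\eta_t\|g_t\|$ in place of the smoothness estimate, and the same final rearrangement that converts the left-hand side to $\alpha_t(f(y_{t+1})-f(z))$. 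Your explicit remark that $r_t=\alpha_t-1\ge 0$ is needed (and holds for the weights in Eq.~\eqref{eq:WeightsLearningRate}) is a welcome precision the paper leaves implicit.
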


The proof of Lemma~\ref{lem:LemGenNonSmooth} is provided in Appendix~\ref{app:Prooflem:LemGenNonSmooth}.
We are now ready to prove Theorem~\ref{thm:MainNonSmooth}.

\begin{proof}[Proof  of  Theorem~\ref{thm:MainNonSmooth}]
According to  Lemma~\ref{lem:LemGenNonSmooth},
\begin{align} \label{eq:LemmaBoundNonSmooth}
\sum_{t=0}^{T-1}&\alpha_t (f(y_{t+1})- f(z))  \nonumber\\
& \leq
\underset{\rA}{\underbrace{ \sum_{t=0}^{T-1}\frac{1}{2\eta_t}\left( \|z_t-z\|^2 -\|z_{t+1}-z\|^2 \right)   }}
 +
 \underset{\rB}{\underbrace{ \sum_{t=0}^{T-1}(\alpha_t^2-\alpha_t) ( f(y_{t}) - f(y_{t+1}) ) }}
\nonumber\\
&\quad
 + 
  \underset{\rC}{\underbrace{  \sum_{t=0}^{T-1}\eta_t \alpha_t^2 \|g_t\|^2  }}
  +
    \underset{\rD}{\underbrace{  \sum_{t=0}^{T-1}\eta_t \alpha_t^2 \|g_t\|G  }}
 \end{align}

It is natural to separately bound each of the sums above.

\paragraph{(a) Bounding  $\rA$ :}
Similarly to part $\rm{(a)}$ in the proof of Theorem~\ref{thm:Main} we can show the following to hold,
\begin{align}\label{eq:etaNonSmooth}
\sum_{t=0}^{T-1} \frac{1}{2\eta_t}\left( \|z_t-z\|^2 -\|z_{t+1}-z\|^2 \right) 
&\leq
\frac{D^2}{\eta_{T-1}}
\end{align}

\paragraph{(b) Bounding  $\rB$ :}
Similarly to part $\rm{(b)}$ in the proof of Theorem~\ref{thm:Main} we can show the following to hold for $z=\argmin_{z\in\reals^d}f(x)$,
 \begin{align} \label{eq:xDiffNonSmooth}
\sum_{t=0}^{T-1} (\alpha_t^2-\alpha_t)\left( f(y_{t}) - f(y_{t+1})  \right) 
&\leq
\frac{1}{2}\sum_{t=0}^{T-1} \alpha_t \left(f(y_{t+1})-f(z) \right)
 \end{align}

\paragraph{(c) Bounding  $\rC$ :}
Note that by the definition of $\eta_t$ we have 
$$
\eta_t = \frac{2D}{\left(G^2+\sum_{\tau=1}^t\alpha_\tau^2\|g_\tau\|^2  \right)^{1/2}} 
\le
 \frac{2D}{\left(\sum_{\tau=1}^t\alpha_\tau^2\|g_\tau\|^2  \right)^{1/2}}~.
$$
Using the above ineuality we get,
\begin{align}\label{eq:TermCNonSmooth}
\sum_{t=0}^{T-1}\eta_t \alpha_t^2 \|g_t\|^2 
&\le
2D\sumtt \frac{ \alpha_t^2 \|g_t\|^2 }{\left( \sum_{\tau=0}^t \alpha_\tau^2\|g_\tau\|^2 \right)^{1/2}}
\le
4D\sqrt{\sum_{t=0}^{T-1}  \alpha_t^2  \|g_t\|^2}
 \end{align} 
 where the second inequality uses  Lemma~\ref{lem:SqrtSumReversed}. 

\paragraph{(d) Bounding  $\rD$ :} 
Writing down $\eta_t$ explicitly we get,
\begin{align}\label{eq:TermDNonSmooth}
\sum_{t=0}^{T-1}\eta_t \alpha_t^2 \|g_t\|G
&=
2DG\sumtt \frac{ \alpha_t^2 \|g_t\| }{\left( G^2+\sum_{\tau=0}^t \alpha_\tau^2\|g_\tau\|^2 \right)^{1/2}}  \nonumber\\
&\le
2DGT\sumtt \frac{ \alpha_t \|g_t\| }{\left( G^2+\sum_{\tau=0}^t \alpha_\tau^2\|g_\tau\|^2 \right)^{1/2}}  \nonumber\\
&=
2DGT\sumtt \frac{ \alpha_t (\|g_t\| /G)}{\left( 1+\sum_{\tau=0}^t \alpha_\tau^2(\|g_\tau\|/G)^2 \right)^{1/2}} \nonumber\\
&\leq
10DG\sqrt{\log T}\cdot T^{3/2}~.
 \end{align} 
 where we  used $\forall t\leq T;\; \alpha_t\leq T$. The last line uses the following lemma (see proof in
 Appendix~\ref{app:Prooflem:SqrtSum3}),
\begin{lemma}\label{lem:SqrtSum3}
Consider the $\alpha_t$'s used by our algorithm, i.e., 
\begin{equation}\nonumber
\alpha_t=
\begin{cases}

	1 	&\quad \text{$0\leq t \leq 2$ } \\ 
	\frac{1}{4}(t+1)            	&\quad \text{$t\geq 3$}\\ 

\end{cases}
\end{equation}
And assume a sequence of non-negative numbers, $b_0,b_1,\ldots,b_{T-1}\in[0,1]$. Then the following holds,
$$
\sum_{t=0}^{T-1}\frac{\alpha_t b_t}{\left(1+\sum_{\tau=0}^t \alpha_\tau^2 b_\tau^2\right)^{1/2}} \leq  5\sqrt{\log T}\sqrt{T}
$$
\end{lemma}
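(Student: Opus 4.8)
The plan is to split the sum at a threshold index and bound the two pieces by different devices. First I would observe that for $t \le 2$ the contributions are $O(1)$ since each numerator is at most $\alpha_t b_t \le 1$ and each denominator is at least $1$, so it suffices to handle $t \ge 3$, where $\alpha_t = (t+1)/4$. Fix a threshold $t_0$ (I expect $t_0 \asymp \sqrt{T}$ to be the right choice, or perhaps $t_0$ defined by $\sum_{\tau \le t_0}\alpha_\tau^2 b_\tau^2$ crossing some level — I would try the cleaner deterministic choice first). For the small-index range $3 \le t \le t_0$, I would bound each term crudely: the numerator $\alpha_t b_t \le \alpha_t \le (t_0+1)/4$ and the denominator is $\ge 1$, giving a contribution of $O(t_0^2)$; with $t_0 \asymp \sqrt{T}$ this is $O(T)$. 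Hmm — that is already $O(T)$, which is larger than the claimed $O(\sqrt{T\log T})$, so the crude bound on the small range is \emph{not} good enough and I need to be smarter.

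Let me reconsider: the correct approach is to \emph{not} throw away the denominator on the small range. Instead, write $\alpha_t b_t = \alpha_t^2 b_t^2 \cdot \frac{1}{\alpha_t b_t}$ when $\alpha_t b_t$ is not too small, and observe that on the range where $\alpha_t b_t \ge 1$ we have $\frac{\alpha_t b_t}{(1+\sum_{\tau\le t}\alpha_\tau^2 b_\tau^2)^{1/2}} \le \frac{\alpha_t^2 b_t^2}{(1+\sum_{\tau\le t}\alpha_\tau^2 b_\tau^2)^{1/2}}$, so Lemma~\ref{lem:SqrtSumReversed} bounds that part of the sum by $2(\sum_t \alpha_t^2 b_t^2)^{1/2} \le 2(\sum_t \alpha_t^2)^{1/2} = O(T^{3/2})$ — still too big. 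So the key must be a Cauchy–Schwarz split that produces the $\sqrt{\log T}$. The right move: apply Cauchy–Schwarz with weights,
\begin{align*}
\sum_{t=0}^{T-1}\frac{\alpha_t b_t}{(1+\sum_{\tau\le t}\alpha_\tau^2 b_\tau^2)^{1/2}}
&= \sum_{t=0}^{T-1} \frac{1}{\sqrt{t+1}} \cdot \frac{\sqrt{t+1}\,\alpha_t b_t}{(1+\sum_{\tau\le t}\alpha_\tau^2 b_\tau^2)^{1/2}} \\
&\le \Big(\sum_{t=0}^{T-1}\frac{1}{t+1}\Big)^{1/2}\Big(\sum_{t=0}^{T-1}\frac{(t+1)\,\alpha_t^2 b_t^2}{1+\sum_{\tau\le t}\alpha_\tau^2 b_\tau^2}\Big)^{1/2}.
\end{align*}
The first factor is $(O(\log T))^{1/2}$. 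For the second factor I would use $t+1 = O(\alpha_t)$ for $t \ge 3$ (since $\alpha_t = (t+1)/4$) and $\alpha_t^2 b_t^2 \le \alpha_t \cdot (\alpha_t b_t^2) \le \alpha_t$ — no wait, I need $\alpha_t^2 b_t^2 \le \alpha_\tau^2 b_\tau^2$-type telescoping. Better: bound $(t+1)\alpha_t^2 b_t^2 \le C\,\alpha_t^3 b_t^2$, and then I want $\sum_t \frac{\alpha_t^3 b_t^2}{1+\sum_{\tau\le t}\alpha_\tau^2 b_\tau^2} = O(T)$. Since $\alpha_t \le T/2$, $\alpha_t^3 b_t^2 \le (T/2)\alpha_t^2 b_t^2$, and then $\sum_t \frac{\alpha_t^2 b_t^2}{1+\sum_{\tau\le t}\alpha_\tau^2 b_\tau^2} = O(\log(1+\sum\alpha_\tau^2 b_\tau^2)) = O(\log(T^3)) = O(\log T)$ by Lemma~\ref{lem:Log_sum}. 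That gives the second factor $= (O(T)\cdot O(\log T))^{1/2} = O(\sqrt{T\log T})$, and multiplying by the first factor gives $O(\sqrt{\log T})\cdot O(\sqrt{T\log T}) = O(\log T \sqrt{T})$ — off by a $\sqrt{\log T}$.

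So the Cauchy–Schwarz weights need tuning: I would instead pull out $\frac{1}{\sqrt{t+1}}$ only up to the $\log$, i.e. split the index set dyadically into $O(\log T)$ blocks $[2^k, 2^{k+1})$ and on block $k$ bound $\sum_{t \in \text{block}} \frac{\alpha_t b_t}{(1+\sum_{\tau\le t}\alpha_\tau^2 b_\tau^2)^{1/2}}$ by applying Cauchy–Schwarz \emph{within the block} with uniform weight $\approx 2^k$: this gives $\le (2^k)^{1/2}\big(\sum_{t\in\text{block}}\frac{\alpha_t^2 b_t^2}{1+\sum_{\tau\le t}\alpha_\tau^2 b_\tau^2}\big)^{1/2} \le 2^{k/2}\cdot O(\sqrt{\log T})$ by Lemma~\ref{lem:Log_sum} applied to the block. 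Wait, that needs $\alpha_t \le C 2^k$ on the block (true, $\alpha_t \le (2^{k+1})/4$) and gives $\sum_k 2^{k/2}\sqrt{\log T}$, a geometric sum dominated by its last term $2^{(\log_2 T)/2}\sqrt{\log T} = \sqrt{T}\sqrt{\log T} = \sqrt{T\log T}$ — still an extra $\sqrt{\log T}$. The resolution that actually lands $5\sqrt{\log T}\sqrt{T}$: on block $k = [2^k, 2^{k+1})$, use numerator bound $\alpha_t b_t \le \alpha_t \le 2^{k-1}$ and Cauchy–Schwarz with the \emph{count} $|block| = 2^k$: $\sum_{t\in\text{block}} \alpha_t b_t \cdot D_t^{-1/2} \le (\sum \alpha_t^2 b_t^2)^{1/2}(\sum D_t^{-1})^{1/2}$ where $D_t = 1+\sum_{\tau\le t}\alpha_\tau^2 b_\tau^2$; then $\sum_{t\in\text{block}} D_t^{-1} \le |block| = 2^k$ trivially (each $D_t \ge 1$), and $\sum_{t\in\text{block}}\alpha_t^2 b_t^2 \le D_{2^{k+1}-1}$. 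So block $k$ contributes $\le \sqrt{2^k}\sqrt{D_{2^{k+1}}}$, and since $D_{2^{k+1}} \le 1+\sum_{\tau<2^{k+1}}\alpha_\tau^2 \le (2^{k+1})^3 = O(2^{3k})$, block $k$ contributes $O(2^{2k})$, summed over $k$ is $O(T^2)$. No good.

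I see the genuine difficulty now: the naive bounds are all off, and the $5\sqrt{\log T}\sqrt T$ rate is tight only because the denominator $1+\sum_{\tau\le t}\alpha_\tau^2 b_\tau^2$ grows \emph{as fast as the numerators accumulate}. The correct, clean argument is the weighted Cauchy–Schwarz above but with the observation that $\sum_{t=0}^{T-1}\frac{1}{t+1} \le 1+\log T$ and that the remaining sum telescopes to $O(\log T)$ \emph{without} the spurious $T$ factor if we are careful: I would write $\frac{(t+1)\alpha_t^2 b_t^2}{1+\sum_{\tau\le t}\alpha_\tau^2 b_\tau^2}$ and use $(t+1) \le 4\alpha_t$ and then — crucially — $4\alpha_t \alpha_t^2 b_t^2 = 4\alpha_t^3 b_t^2$, and here use $b_t \le 1$ so $\alpha_t^3 b_t^2 \le \alpha_t \cdot \alpha_t^2 b_t^2 \le \alpha_{T}\cdot\alpha_t^2 b_t^2$... the $T$ reappears. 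The only way $T$ does not appear: bound $(t+1)\alpha_t^2 b_t^2 \le (t+1)\alpha_t^2 = O(t^4)$ is hopeless. So I now believe the intended proof does \textbf{not} factor out $1/\sqrt{t+1}$; instead it factors out $\alpha_t/\sqrt{\text{(partial sum)}}$ differently. Let $S_t = 1+\sum_{\tau\le t}\alpha_\tau^2 b_\tau^2$. Then
\[
\sum_t \frac{\alpha_t b_t}{S_t^{1/2}} = \sum_t \frac{\alpha_t b_t}{S_t^{1/2}}\cdot 1 \le \Big(\sum_t \frac{\alpha_t^2 b_t^2}{S_t}\Big)^{1/2}\Big(\sum_t 1\Big)^{1/2} = \Big(\sum_t \frac{\alpha_t^2 b_t^2}{S_t}\Big)^{1/2}\sqrt{T} \le \sqrt{1+\log S_T}\cdot\sqrt{T}
\]
by Lemma~\ref{lem:Log_sum}, and $S_T \le 1+\sum_{\tau<T}\alpha_\tau^2 \le 1+\sum_{\tau<T}(\tau+1)^2/16 \le T^3$, so $1+\log S_T \le 1+3\log T \le 4\log T$ for $T\ge 3$, giving the bound $2\sqrt{\log T}\sqrt{T} \le 5\sqrt{\log T}\sqrt{T}$. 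That is it — plain Cauchy–Schwarz with weight $1$, then Lemma~\ref{lem:Log_sum}.

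So the clean plan is: \textbf{(1)} handle $t\le 2$ separately (contributes at most $3$); \textbf{(2)} for the main sum apply Cauchy–Schwarz in the form $\sum a_t = \sum a_t\cdot 1 \le (\sum a_t^2)^{1/2} T^{1/2}$ with $a_t = \alpha_t b_t / S_t^{1/2}$; \textbf{(3)} recognize $\sum_t a_t^2 = \sum_t \frac{\alpha_t^2 b_t^2}{1+\sum_{\tau\le t}\alpha_\tau^2 b_\tau^2}$ and apply Lemma~\ref{lem:Log_sum} with $a_i \leftarrow \alpha_i^2 b_i^2$ to bound it by $1+\log(1+\sum_t\alpha_t^2 b_t^2)$; \textbf{(4)} bound $\sum_t \alpha_t^2 b_t^2 \le \sum_t \alpha_t^2 \le \sum_{t<T}(t+1)^2/16 \le T^3$ using $b_t \le 1$; \textbf{(5)} conclude $1+\log(1+T^3) \le 5\log T$ for $T$ large enough, absorbing the small cases and the additive $t\le2$ contribution into the constant $5$. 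I do not anticipate a real obstacle — the only subtlety is checking the numerical constant $5$ works uniformly in $T$ (for small $T$ one uses that the whole sum has few terms each of size $\le 1$), and making sure the index conventions ($\alpha_t$ piecewise, sum from $\tau=0$) match Lemma~\ref{lem:Log_sum}'s $1$-indexed statement, which is a cosmetic reindexing.
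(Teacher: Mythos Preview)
Your final argument (after the exploratory detours) is correct and clean: global Cauchy--Schwarz
\[
\sum_{t=0}^{T-1}\frac{\alpha_t b_t}{S_t^{1/2}}
\le \Big(\sum_{t=0}^{T-1}\frac{\alpha_t^2 b_t^2}{S_t}\Big)^{1/2}\sqrt{T},
\qquad S_t := 1+\sum_{\tau=0}^t \alpha_\tau^2 b_\tau^2,
\]
then Lemma~\ref{lem:Log_sum} on the first factor, then $S_{T-1}\le 1+T^3$ via $b_t\le 1$ and $\alpha_t\le t+1$. This yields the bound with a comfortable constant (and the separate treatment of $t\le 2$ is in fact unnecessary, since the Cauchy--Schwarz step handles all indices uniformly). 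The only caveat is $T=1$, where the stated inequality is vacuously tight at $0$ on the right; the paper silently assumes $T\ge 2$, and you should too.

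This is a genuinely different route from the paper. The paper does \emph{not} use Lemma~\ref{lem:Log_sum} here; instead it partitions $\{0,\ldots,T-1\}$ dyadically according to the \emph{level sets of $S_t$} (defining $T_k$ as the last time $S_t$ sits in $(4^{k-1},4^k]$), bounds each block via the $\ell_1$--$\ell_2$ inequality $\sum_\tau \alpha_\tau b_\tau \le \sqrt{|\text{block}|}\cdot(\sum_\tau \alpha_\tau^2 b_\tau^2)^{1/2}$, and then applies Cauchy--Schwarz over the $O(\log T)$ blocks to turn $\sum_k \sqrt{T_k-T_{k-1}}$ into $\sqrt{k_{\max}}\sqrt{T}$. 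Your approach is shorter and more elementary, and it recycles a lemma already proved in the paper; the paper's approach is self-contained (no appeal to Lemma~\ref{lem:Log_sum}) but heavier in bookkeeping. Both land on the same $\sqrt{T\log T}$ rate with small absolute constants.
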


\paragraph{Final Bound :}
Combining the bounds on the different terms, Eq.~\eqref{eq:etaNonSmooth}-\eqref{eq:TermDNonSmooth},
 together with Eq.~\eqref{eq:LemmaBoundNonSmooth}, we have,
\begin{align*} 
\sum_{t=0}^{T-1}&\alpha_t (f(y_{t+1})- f(z))  \nonumber\\
& \leq
\frac{D^2}{\eta_{T-1}}
 +
\frac{1}{2}\sum_{t=0}^{T-1} \alpha_t \left(f(y_{t+1})-f(z) \right) \\
&\quad
 + 
4D\sqrt{\sum_{t=0}^{T-1}  \alpha_t^2  \|g_t\|^2}  
+
10DG\sqrt{\log T}\cdot T^{3/2}
 \end{align*} 
Re-arranging and using the explicit expression for $\eta_{T-1}$ we get,
\begin{align*} 
\frac{1}{2}\sum_{t=0}^{T-1}&\alpha_t (f(y_{t+1})- f(z)) \\
& \leq
5D\sqrt{G^2+\sum_{t=0}^{T-1}  \alpha_t^2  \|g_t\|^2}  
+
10DG\sqrt{\log T}\cdot T^{3/2}  \\
&\leq
5DG\sqrt{1+T^3  }  +
10DG\sqrt{\log T}\cdot T^{3/2}  \\
&\leq
20DG\sqrt{\log T}\cdot T^{3/2}~.
 \end{align*} 
 where we have used $\|g_t\|\leq G$, and also, $\alpha_t \leq t+1$ implying that $\sum_{t=0}^{T-1}\alpha_t^2\leq  T^3$.

Using Jensen's inequality we are now ready to establish the final bound,
\begin{align*}
f(\bar{y}_T) - f(z) 
&\leq 
\frac{\sum_{t=0}^{T-1}\alpha_t (f(y_{t+1}) - f(z))}{\sumtt \alpha_t} \\
&\leq
\frac{40\cdot DG\sqrt{\log T}\cdot T^{3/2}}{T^2/32} \\
&=
O\left( DG\sqrt{\log T}/{\sqrt{T} }\right) 
\end{align*}
where we have used $\alpha_t\geq \frac{1}{4}(t+1)$ and therefore $\sum_{t=0}^{T-1}\alpha_t\geq  T^2/32$.

\end{proof}

\subsection{Proof  of  Lemma~\ref{lem:LemGenNonSmooth}}
\label{app:Prooflem:LemGenNonSmooth}
\begin{proof}
Our starting point is bounding $\alpha_t (f(x_{t+1})- f(z))$ which can be decomposed as follows,
\begin{align} \label{eq:XtBoundNonSmooth}
\alpha_t (f(x_{t+1})- f(z))
&\leq
\alpha_t g_t\cdot(x_{t+1}-z)  \nonumber\\
&=
  \alpha_t g_t\cdot(z_{t}-z)  + \alpha_t g_t\cdot(x_{t+1}-z_{t})
\end{align} 
where we use $g_t = \nabla f(x_{t+1})$ in conjunction with the gradient inequality.
Let us now bound the terms in the above equation.
 
\paragraph{(a) Bounding $\alpha_t g_t\cdot(z_{t}-z)$:} 
Similarly to the proof of Lemma~\ref{lem:LemGen} we can show the following to hold (see Eq.~\eqref{eq:ClassGrad} in Lemma~\ref{lem:LemGen}),
\begin{align*}
\alpha_t g_t\cdot(z_t-z) \leq 
\alpha_t^2 g_t\cdot (x_{t+1} - y_{t+1}) - \frac{\alpha_t^2}{2\eta_t}\|x_{t+1}-y_{t+1} \|^2+ \frac{1}{2\eta_t}\left( \|z_t-z\|^2 -\|z_{t+1}-z\|^2 \right)
\end{align*}
Combining the above with $\|x_{t+1} - y_{t+1}\| = \eta_t\|g_t\|$ implies,
\begin{align}\label{eq:ClassGradNonSmooth}
\alpha_t g_t\cdot(z_t-z) \leq 
\eta_t \alpha_t^2 \|g_t\|^2+ \frac{1}{2\eta_t}\left( \|z_t-z\|^2 -\|z_{t+1}-z\|^2 \right)
\end{align}

\paragraph{(b) Bounding $\alpha_t g_t\cdot(x_{t+1}-z_t)$:} 
 Notice that re-arranging the relation between $x_{t+1}, y_t, z_t$ (recall $x_{t+1} =  \tau_t z_t + (1-\tau_t) y_t$) gives,
\begin{align*} 
x_{t+1}-z_t = r_t ( y_{t}-x_{t+1})
\end{align*}
where we denote $r_t =  {(1-\tau_t)}/{\tau_t} $.
Using the above we get,
\begin{align} \label{eq:XZrelationNonSmooth}
g_t &\cdot (x_{t+1}-z_t) \nonumber\\
&= 
r_t \nabla f(x_{t+1}) \cdot (y_{t}-x_{t+1}) \nonumber\\
&\leq
(\alpha_t-1) \left( f(y_t) - f(x_{t+1}) \right) \nonumber\\
&\leq
\alpha_t \left( f(y_{t+1}) - f(x_{t+1})\right) - \left( f(y_{t+1}) - f(x_{t+1})\right)
+(\alpha_t-1) \left( f(y_{t}) - f(y_{t+1})\right)    \nonumber\\
&\leq
\alpha_t G\eta_t\|g_t\|
- \left( f(y_{t+1}) - f(x_{t+1})\right)
+(\alpha_t-1) \left( f(y_{t}) - f(y_{t+1})\right)
\end{align}
where second line uses the gradient inequality,  in the third line we used $r_t = (1-\tau_t)/{\tau_t} = \alpha_t-1$ (see Alg.~\ref{algorithm:UniAccel});
and in the last line we used $|f(y_{t+1}) - f(x_{t+1})| \leq G\|y_{t+1}-x_{t+1}\| \leq G\eta_t\|g_t\|$, which follows by the $G$-Lipschitzness of $f$.

\paragraph{(c) Bounding $\alpha_t\cdot(f(y_{t+1})-f(z))$:}  
 Combining Equations~\eqref{eq:XtBoundNonSmooth},~\eqref{eq:ClassGradNonSmooth}, $\eqref{eq:XZrelationNonSmooth}$ we get,
\begin{align*}
\alpha_t &(f(x_{t+1})- f(z)) \\
& \leq
\alpha_t g_t \cdot (z_{t}-z)  + \alpha_t g_t \cdot (x_{t+1}-z_t) \\
&\leq
\left\{ 
\eta_t \alpha_t^2 \|g_t\|^2+ \frac{1}{2\eta_t}\left( \|z_t-z\|^2 -\|z_{t+1}-z\|^2 \right)
\right\} 
\\
\qquad
&\quad+
(\alpha_t^2-\alpha_t) \left( f(y_{t}) - f(y_{t+1}) \right) 
+
\eta_t \alpha_t^2 \|g_t\|G
- \alpha_t\left( f(y_{t+1}) - f(x_{t+1})\right)
\end{align*} 

Re-arranging the above equation and we get,
\begin{align*}
\alpha_t &(f(y_{t+1})- f(z)) \\
&\leq
\eta_t \alpha_t^2 \|g_t\|^2+\eta_t \alpha_t^2 \|g_t\|G+ \frac{1}{2\eta_t}\left( \|z_t-z\|^2 -\|z_{t+1}-z\|^2 \right)
+
(\alpha_t^2-\alpha_t) \left( f(y_{t}) - f(y_{t+1}) \right) 
\end{align*} 
which concludes the proof.
\end{proof}

\subsection{Proof of Lemma~\ref{lem:SqrtSum3}}
\label{app:Prooflem:SqrtSum3}
\begin{proof}
Let us define the following  time variables,
$$
T_0 = \max \left\{t\in\{0,\ldots,T-1\}: \sum_{\tau=0}^t\alpha_\tau^2 b_\tau^2 \leq 1 \right\}
$$
and for any $k\geq 1$
$$
T_k =\max \left\{t\in\{0,\ldots,T-1\}:   4^{k-1}< \sum_{\tau=0}^t\alpha_\tau^2 b_\tau^2 \leq 4^k  \right\}
$$
By the definition of $T_0$, the following applies,
\begin{align}\label{eq:T0Def}
\sum_{\tau=0}^{T_0} \alpha_\tau b_\tau 
&\leq 
\sqrt{T_0+1}\left(\sum_{\tau=0}^{T_0} \alpha_\tau^2 b_\tau^2 \right)^{1/2}
\leq 
\sqrt{T}~.
\end{align}
where in the first inequality we use $\|u\|_1\leq \sqrt{n}\|u\|_2,\; \forall u\in \reals^n$, in the second inequality we use the definition of $T_0$ together with $T_0\leq T-1$.

For the other time variables we can similarly show the following bounds, i.e., $\forall k\geq 1$,
\begin{align}\label{eq:TkDef}
\sum_{\tau=T_{k-1}+1}^{T_k}\alpha_\tau b_\tau 
&\leq 
\sqrt{T_k-T_{k-1}}\left(\sum_{\tau=T_{k-1}+1}^{T_k} \alpha_\tau^2 b_\tau^2 \right)^{1/2}
\leq 
\sqrt{T_k-T_{k-1}}\cdot 2^k
\end{align}
where in the first inequality we use $\|u\|_1\leq \sqrt{n}\|u\|_2,\; \forall u\in \reals^n$, in the second inequality we use the definition of $T_k$.

Using the definition of the time variables together with Equations~\eqref{eq:T0Def},\eqref{eq:TkDef} we get,
\begin{align*}
\sum_{t=0}^{T-1}&\frac{\alpha_t b_t}{\left(1+\sum_{\tau=0}^t \alpha_\tau^2 b_\tau^2\right)^{1/2}}  \\
&=\sum_{t=0}^{T_0}\frac{\alpha_t b_t}{\left(1+\sum_{\tau=0}^t \alpha_\tau^2 b_\tau^2\right)^{1/2}}
+
\sum_{k\geq 1}\sum_{t=T_{k-1}+1}^{T_k}\frac{\alpha_t b_t}{\left(1+\sum_{\tau=0}^t \alpha_\tau^2 b_\tau^2\right)^{1/2}} \\
&\leq
\sum_{t=0}^{T_0} \alpha_t b_t 
+
\sum_{k\geq 1}\sum_{t=T_{k-1}+1}^{T_k}\frac{\alpha_t b_t}{\left(1+4^{k-1}\right)^{1/2}} \\
&\leq
\sqrt{T}
+
\sum_{k\geq 1}\frac{ 1}{2^{k-1}} \sum_{t=T_{k-1}+1}^{T_k}\alpha_t b_t\\
&\leq
\sqrt{T}
+
2\sum_{k\geq 1}\sqrt{T_k-T_{k-1}}
\end{align*}
where in the third line we use $ \sum_{\tau=0}^t\alpha_\tau^2 b_\tau^2 > 4^{k-1}$ which by definition holds for any $T_{k-1}<t\leq T_{k}$.

Thus, we are left to show that $\sum_{k\geq 1}\sqrt{T_k-T_{k-1}}\leq 2\sqrt{\log T}\sqrt{T}$. To do so, first notice that the maximal value of $k$ is bounded as follows,
\begin{align*}
4^{k_{\max}-1}
&\leq
\sum_{t=0}^{T-1}\alpha_t^2 \\
&\leq
\sum_{t=0}^{T-1} (t+1)^2 \\
&\leq
T^3
\end{align*}
Thus, assuming $T\geq2$ we have $k_{\max}\leq 3\log_2 T $, and therefore,
\begin{align*}
\sum_{k\geq 1}\sqrt{T_k-T_{k-1}} 
&=
\sum_{k= 1}^{k_{\max}}\sqrt{T_k-T_{k-1}}  \\
&\leq
\sqrt{k_{\max}} \left( \sum_{k= 1}^{k_{\max}} (T_k-T_{k-1}) \right)^{1/2} \\
&\leq
\sqrt{3\log T} \left( T - T_0\right)^{1/2} \\
&\leq
\sqrt{3\log T} \sqrt{T}~.
\end{align*}
where we used $\|u\|_1\leq \sqrt{n}\|u\|_2,\; \forall u\in \reals^n$ and also $T_{k_{\max}} =T-1$. This established the lemma.
\end{proof}

\newpage
\section{Proof of Theorem~\ref{thm:MainNonSmoothStoch}}
\label{app:ProofMainNonSmoothStoch}
\begin{proof}
For brevity we will not rehearse all of the details  which are similar to the proof of the offline setting, but rather only emphasize the differences compared to the analysis of Theorem~\ref{thm:MainNonSmooth}.
First note the following  which is analogous to Lemma~\ref{lem:LemGenNonSmooth},
\begin{lemma}\label{lem:LemGenStochastic}
Assume that $f$ is convex and $G$-Lipschitz. 
Assume that we invoke  Algorithm~\ref{algorithm:UniAccel} but provide it with noisy gradient estimates (see Eq.~\eqref{eq:NoisyOracle}) rather then the exact ones.
 Then for any
 sequence of non-negative weights $\{\alpha_t\}_{t\geq0}$, and learning rates $\{\eta_t\}_{t\geq0}$,  the following holds,
\begin{align*}
\alpha_t &(f(y_{t+1})- f(z)) \\
&\leq
\eta_t \alpha_t^2 \|\tg_t\|^2+\eta_t \alpha_t^2 \|\tg_t\|G+ \frac{1}{2\eta_t}\left( \|z_t-z\|^2 -\|z_{t+1}-z\|^2 \right)
+
(\alpha_t^2-\alpha_t) \left( f(y_{t}) - f(y_{t+1}) \right) \\
\qquad
&\quad+
\alpha_t (g_t-\tg_t)\cdot(z_{t}-z)
\end{align*} 
\end{lemma}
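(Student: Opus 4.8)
The plan is to follow the proof of Lemma~\ref{lem:LemGenNonSmooth} in Appendix~\ref{app:Prooflem:LemGenNonSmooth} almost verbatim, while carefully tracking \emph{which} gradient appears where: the exact gradient $g_t=\nabla f(x_{t+1})$ is needed every time convexity of $f$ is invoked, whereas the noisy estimate $\tg_t$ is what the algorithm actually plugs into its updates for $z_{t+1}$ and $y_{t+1}$. As in the deterministic case I would start from the gradient inequality
\begin{align*}
\alpha_t\big(f(x_{t+1})-f(z)\big)\le \alpha_t g_t\cdot(x_{t+1}-z)=\alpha_t g_t\cdot(z_t-z)+\alpha_t g_t\cdot(x_{t+1}-z_t),
\end{align*}
and bound the two summands separately.

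For the first summand, split $g_t=\tg_t+(g_t-\tg_t)$, so that $\alpha_t g_t\cdot(z_t-z)=\alpha_t\tg_t\cdot(z_t-z)+\alpha_t(g_t-\tg_t)\cdot(z_t-z)$. The second of these is left untouched -- it is exactly the extra error term in the statement. For $\alpha_t\tg_t\cdot(z_t-z)$, observe that both the mirror step $z_{t+1}=\Pi_\K(z_t-\alpha_t\eta_t\tg_t)$ and the gradient step $y_{t+1}=x_{t+1}-\eta_t\tg_t$ use $\tg_t$, so Lemma~\ref{lemma:PGDconstrained} and the chain of identities in Eq.~\eqref{eq:RelationZY} apply word for word with $\tg_t$ in place of $g_t$, yielding
\begin{align*}
\alpha_t\tg_t\cdot(z_t-z)\le \alpha_t^2\tg_t\cdot(x_{t+1}-y_{t+1})-\frac{\alpha_t^2}{2\eta_t}\|x_{t+1}-y_{t+1}\|^2+\frac{1}{2\eta_t}\big(\|z_t-z\|^2-\|z_{t+1}-z\|^2\big).
\end{align*}
Dropping the negative term and using Cauchy--Schwarz together with $\|x_{t+1}-y_{t+1}\|=\eta_t\|\tg_t\|$ gives $\eta_t\alpha_t^2\|\tg_t\|^2+\frac{1}{2\eta_t}(\|z_t-z\|^2-\|z_{t+1}-z\|^2)$, i.e.\ the analogue of Eq.~\eqref{eq:ClassGradNonSmooth} with $\tg_t$.

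For the second summand, use the algorithmic identity $x_{t+1}-z_t=(\alpha_t-1)(y_t-x_{t+1})$ and then apply convexity \emph{with the exact gradient}: $g_t\cdot(y_t-x_{t+1})=\nabla f(x_{t+1})\cdot(y_t-x_{t+1})\le f(y_t)-f(x_{t+1})$; this is the one place the true gradient is essential. From here the argument of Eq.~\eqref{eq:XZrelationNonSmooth} carries over unchanged: write $(\alpha_t-1)(f(y_t)-f(x_{t+1}))=\alpha_t(f(y_{t+1})-f(x_{t+1}))-(f(y_{t+1})-f(x_{t+1}))+(\alpha_t-1)(f(y_t)-f(y_{t+1}))$ and bound $\alpha_t(f(y_{t+1})-f(x_{t+1}))\le \alpha_t G\|y_{t+1}-x_{t+1}\|=\alpha_t G\eta_t\|\tg_t\|$ via $G$-Lipschitzness and $y_{t+1}=x_{t+1}-\eta_t\tg_t$. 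Multiplying by $\alpha_t$, adding the two summand bounds, and rearranging -- the $-\alpha_t(f(y_{t+1})-f(x_{t+1}))$ term cancels against part of $\alpha_t f(x_{t+1})$ on the left to form $\alpha_t(f(y_{t+1})-f(z))$ -- yields precisely the claimed inequality.

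The main point to get right is this bookkeeping: one must verify that no spurious $(g_t-\tg_t)\cdot(x_{t+1}-z_t)$ term is produced (it is not, because in the second summand convexity is applied with $g_t$ directly and $\tg_t$ enters only through the purely algebraic identity $\|y_{t+1}-x_{t+1}\|=\eta_t\|\tg_t\|$), so that the single surviving stochastic term is $\alpha_t(g_t-\tg_t)\cdot(z_t-z)$ -- which is exactly the term that will vanish in conditional expectation in the proof of Theorem~\ref{thm:MainNonSmoothStoch}, since $z_t$ and $x_{t+1}$ are determined by the history up to round $t-1$ while $\E[\tg_t\mid x_{t+1}]=\nabla f(x_{t+1})=g_t$.
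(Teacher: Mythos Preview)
Your proposal is correct and follows essentially the same route as the paper's proof: the same three-way decomposition of $\alpha_t g_t\cdot(x_{t+1}-z)$ into $\alpha_t\tg_t\cdot(z_t-z)$, $\alpha_t g_t\cdot(x_{t+1}-z_t)$, and the residual $\alpha_t(g_t-\tg_t)\cdot(z_t-z)$, with parts (a) and (b) handled exactly as in Lemma~\ref{lem:LemGenNonSmooth} after replacing $g_t$ by $\tg_t$ in the algorithmic identities. The only cosmetic difference is that you invoke Cauchy--Schwarz on $\tg_t\cdot(x_{t+1}-y_{t+1})$, whereas the paper simply substitutes $x_{t+1}-y_{t+1}=\eta_t\tg_t$ directly; both yield the same bound $\eta_t\alpha_t^2\|\tg_t\|^2$.
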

We prove this lemma in Appendix~\ref{app:Prooflem:LemGenStochastic}.

Now, focusing on the term $\alpha_t (g_t-\tg_t)\cdot(z_{t}-z)$, the unbaisdness of $\tg_t$ immediately implies,
$$
\E[\alpha_t (g_t-\tg_t)\cdot(z_{t}-z)] = 0~.
$$
Ignoring this term and comparing the bound in the above lemma to Lemma~\ref{lem:LemGenNonSmooth}, one can see that the expression are identical up to replacing, $g_t \leftrightarrow \tg_t$.
This identity in the expressions applies also to the learning rate,  $\eta_t$ (again up to replacing, $g_t \leftrightarrow \tg_t$). 
Thus, the exact same analysis as of Lemma~\ref{lem:LemGenNonSmooth} shows that w.p.~$1$ we have,
$$
\sumtt \alpha_t (f(y_{t+1})- f(z)) - \sumtt \alpha_t (g_t-\tg_t)\cdot(z_{t}-z) \leq O(GD\sqrt{\log T}\cdot T^{3/2})~.
$$
Taking expectations and using the above in conjunction with the definition of $\bar{y}_T$ and Jensen's inequality concludes the proof.
\end{proof}

\subsection{Proof of Lemma~\ref{lem:LemGenStochastic}}
\label{app:Prooflem:LemGenStochastic}
\begin{proof}
The proof follows similar lines to the proof of Lemmas~\ref{lem:LemGenNonSmooth} and~\ref{lem:LemGen}. Here we will highlight the changes due to the stochastic setting.

Our starting point is bounding $\alpha_t (f(x_{t+1})- f(z))$ which can be decomposed as follows,
\begin{align} \label{eq:XtBoundStoch}
\alpha_t (f(x_{t+1})- f(z))
&\leq
\alpha_t g_t\cdot(x_{t+1}-z)  \nonumber\\
&=
  \alpha_t \tg_t\cdot(z_{t}-z)  + \alpha_t g_t\cdot(x_{t+1}-z_{t})+\alpha_t (g_t-\tg_t)\cdot(z_{t}-z)
\end{align} 
Due to the unbiasedness of $\tg_t$ then the expectation of the last term $\alpha_t (g_t-\tg_t)\cdot(z_{t}-z)$ is zero.
Let us now bound the remaining two terms in the above equation.
 
\paragraph{(a) Bounding $\alpha_t \tg_t\cdot(z_{t}-z)$:} 
Similarly to the proof of Lemma~\ref{lem:LemGen} we can show the following to hold (see Eq.~\eqref{eq:ClassGrad} in Lemma~\ref{lem:LemGen}),
\begin{align*}
\alpha_t \tg_t\cdot(z_t-z) \leq 
\alpha_t^2 \tg_t\cdot (x_{t+1} - y_{t+1}) - \frac{\alpha_t^2}{2\eta_t}\|x_{t+1}-y_{t+1} \|^2+ \frac{1}{2\eta_t}\left( \|z_t-z\|^2 -\|z_{t+1}-z\|^2 \right)
\end{align*}
Combining the above with $\|x_{t+1} - y_{t+1}\| = \eta_t\|\tg_t\|$ implies,
\begin{align}\label{eq:ClassGradNonSmoothStoc}
\alpha_t \tg_t\cdot(z_t-z) \leq 
\eta_t \alpha_t^2 \|\tg_t\|^2+ \frac{1}{2\eta_t}\left( \|z_t-z\|^2 -\|z_{t+1}-z\|^2 \right)
\end{align}

\paragraph{(b) Bounding $\alpha_t g_t\cdot(x_{t+1}-z_t)$:} 
Similarly to the proof of  Lemma~\ref{lem:LemGenNonSmooth} we can show the following to hold 
(see Eq.~\eqref{eq:XZrelationNonSmooth} therein),
\begin{align} \label{eq:XZrelationNonSmoothStoc}
g_t \cdot (x_{t+1}-z_t)
&\leq
\alpha_t G\eta_t\|\tg_t\|
- \left( f(y_{t+1}) - f(x_{t+1})\right)
+(\alpha_t-1) \left( f(y_{t}) - f(y_{t+1})\right)
\end{align}

\paragraph{(c) Bounding $\alpha_t\cdot(f(y_{t+1})-f(z))$:}  
 Combining Equations~\eqref{eq:ClassGradNonSmoothStoc},~\eqref{eq:XZrelationNonSmoothStoc} and $\eqref{eq:XtBoundStoch}$ we get,
\begin{align*}
\alpha_t &(f(x_{t+1})- f(z)) \\
&\leq
\left\{ 
\eta_t \alpha_t^2 \|\tg_t\|^2+ \frac{1}{2\eta_t}\left( \|z_t-z\|^2 -\|z_{t+1}-z\|^2 \right)
\right\} +\alpha_t (g_t-\tg_t)\cdot(z_{t}-z)
\\
\qquad
&\quad+
(\alpha_t^2-\alpha_t) \left( f(y_{t}) - f(y_{t+1}) \right) 
+
\eta_t \alpha_t^2 \|\tg_t\|G
- \alpha_t\left( f(y_{t+1}) - f(x_{t+1})\right)
\end{align*} 

Re-arranging the above equation and we get,
\begin{align*}
\alpha_t &(f(y_{t+1})- f(z)) \\
&\leq
\eta_t \alpha_t^2 \|\tg_t\|^2+\eta_t \alpha_t^2 \|\tg_t\|G+ \frac{1}{2\eta_t}\left( \|z_t-z\|^2 -\|z_{t+1}-z\|^2 \right)
+
(\alpha_t^2-\alpha_t) \left( f(y_{t}) - f(y_{t+1}) \right) \\
\qquad
&\quad+
\alpha_t (g_t-\tg_t)\cdot(z_{t}-z)
\end{align*} 
which concludes the proof.

\end{proof}

\newpage

\section{Proof of Lemma~\ref{lemma:GradIneqSmooth}}
\label{app:ProofLemmaGradIneqSmooth}
\begin{proof}
The $\beta$ smoothness of $F$ means the following to hold $\forall x,u\in\reals^d$,
$$F(x+u) \leq F(x) +\nabla F(x)^\top u+\frac{\beta}{2}\|u\|^2 ~.$$
Taking  $u=-\frac{1}{\beta}\nabla F(x)$ we get,
$$F(x+u) \le F(x) -\frac{1}{\beta}\|\nabla F(x)\|^2+\frac{1}{2\beta}\|\nabla F(x)\|^2~.$$
Thus:
\begin{align*}
\|\nabla F(x)\| &\le \sqrt{2\beta \big( F(x) -F(x+u)\big)}\\
&  \le  \sqrt{2\beta \big(F(x) -F(x^*)\big)}~,
\end{align*}
where in the last inequality we used $F(x^*) \leq F(x+u)$ which holds since $x^*$ is the \emph{global} minimum.
\end{proof}

\newpage
\section{Additional Numerical Experiments}
\label{app:Numerics}
Here, we present numerical experiments on the stochastic setting, and on a practical variant that neglects the projection steps. 

\subsection{Numerical Experiments on the Stochastic Setting}
We consider the same problem setup as in Section~\ref{sec:Exps}. 
Rather than using the exact gradients, we compute the unbiased estimates evaluated by a single data point (i.e. minibatch of size $1$) 
The results are shown in Figure~\ref{fig:stochastic}.

\begin{figure}[h]
\includegraphics[width=0.70\textwidth]{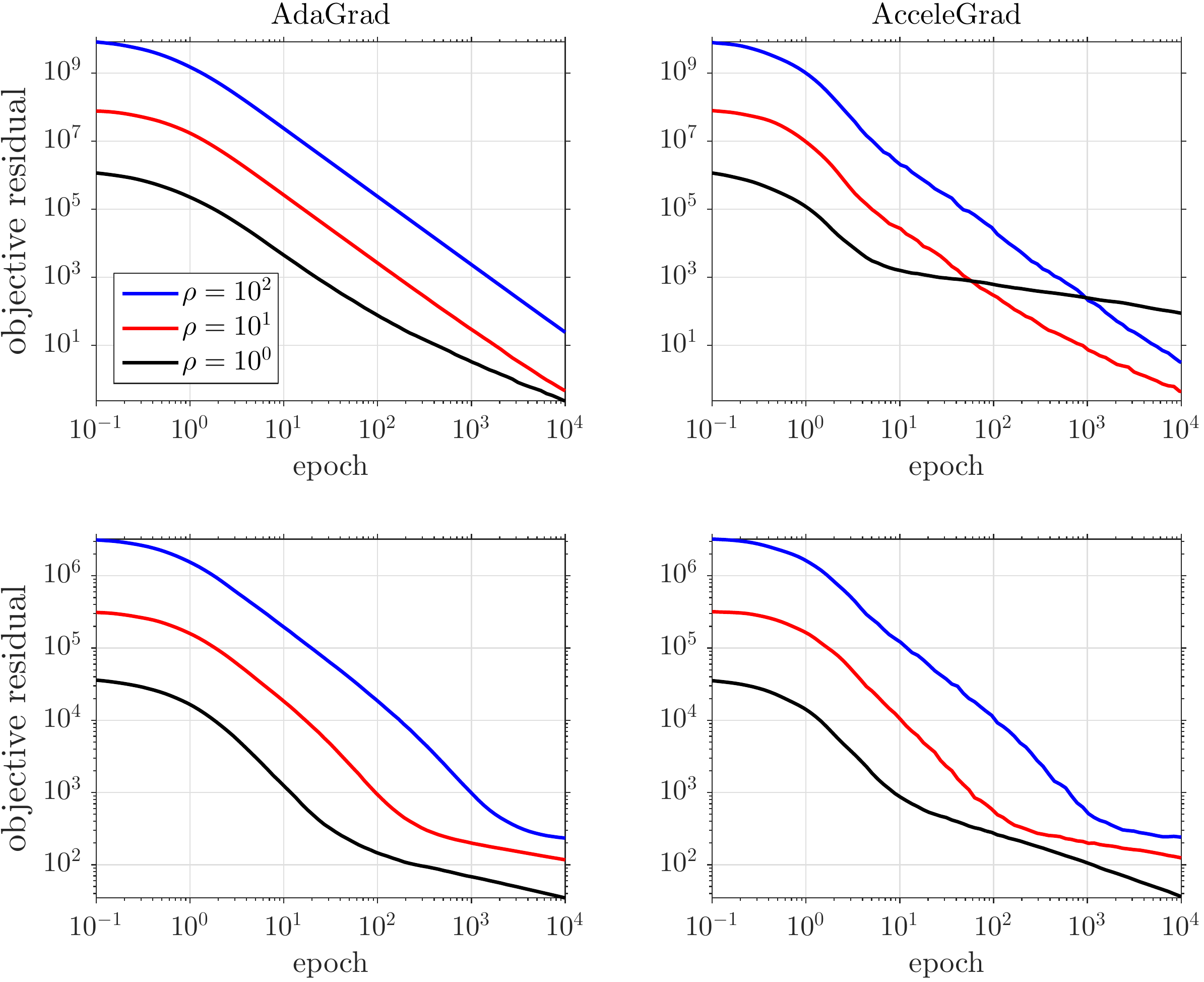}
\centering
\caption{Comparison of AdaGrad and AcceleGrad in stochastic setting for smooth \textit{(top)} and non-smooth \textit{(bottom)} problems. Epoch denotes one full data pass, hence $500$ iterations.}
\label{fig:stochastic}
\end{figure}

AdaGrad and AcceleGrad perform similar empirically for most of the parameter choices. 
AdaGrad overperforms AcceleGrad only for the smooth problem with $\rho = 1$. 
This bahavior is caused by the projection step, and slightly increasing $D$ cures the problem for AcceleGrad. 

Universal gradient methods \citep{nesterov2015universal} are based on a line-search technique that relies on the exact first order oracle information.
Thus, it is not so surprising that in practice these methods fail upon receiving stochastic feedback, and we therefore do not present their performance.

\subsection{Numerical Experiments Neglecting the Projections}

We observed that the methods work well in practice even if we ignore the projection step in the unconstrained setting. 
In some cases, this simple tweak may even improve the performance. 
We used the same test setup as in Section~\ref{sec:Exps}, and the results are shown in Figures~\ref{fig:l1l2min-wo-proj} and \ref{fig:stochastic-wo-proj} for the deterministic and stochastic settings respectively. 
Note that the method works also when we underestimate $D$. 

\begin{figure}[ht!]
\includegraphics[width=\textwidth]{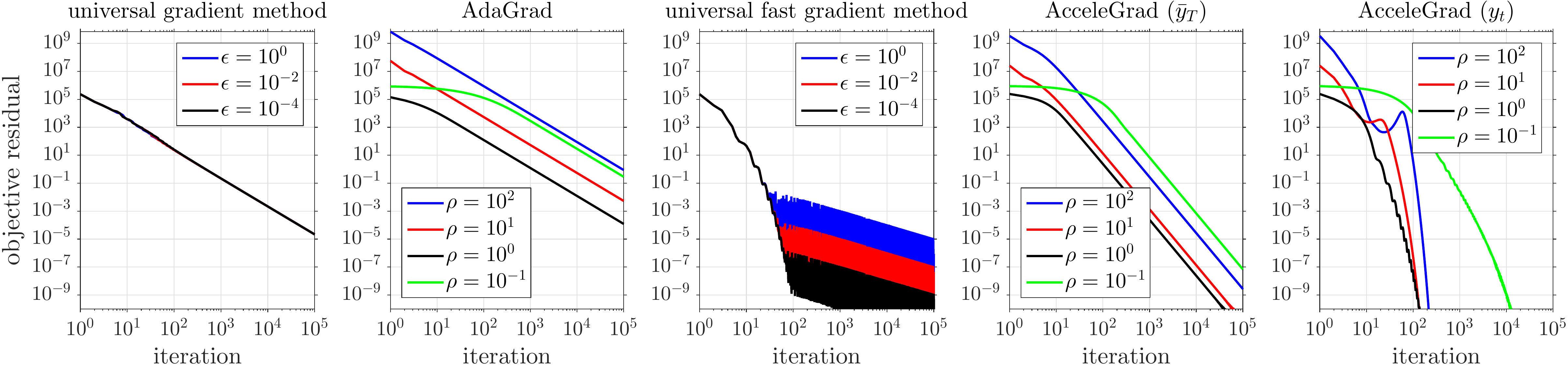} \\ 
\\
\includegraphics[width=\textwidth]{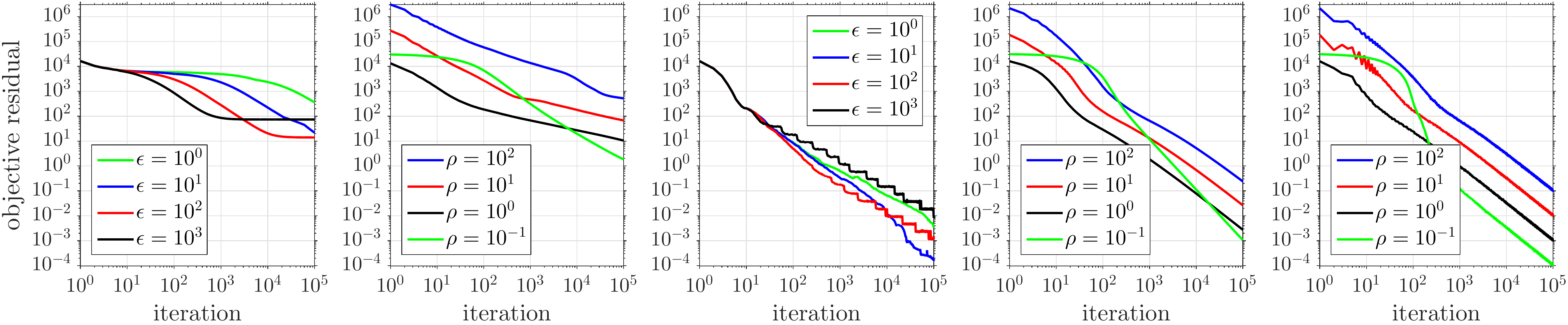}
\centering
\vspace{-3mm}
\caption{Comparison of universal methods at a smooth \textit{(top)} and a non-smooth \textit{(bottom)} problem. Adaptive methods are tweaked to ignore the projection.}
\label{fig:l1l2min-wo-proj}
\end{figure}

\begin{figure}[h]
\includegraphics[width=0.60\textwidth]{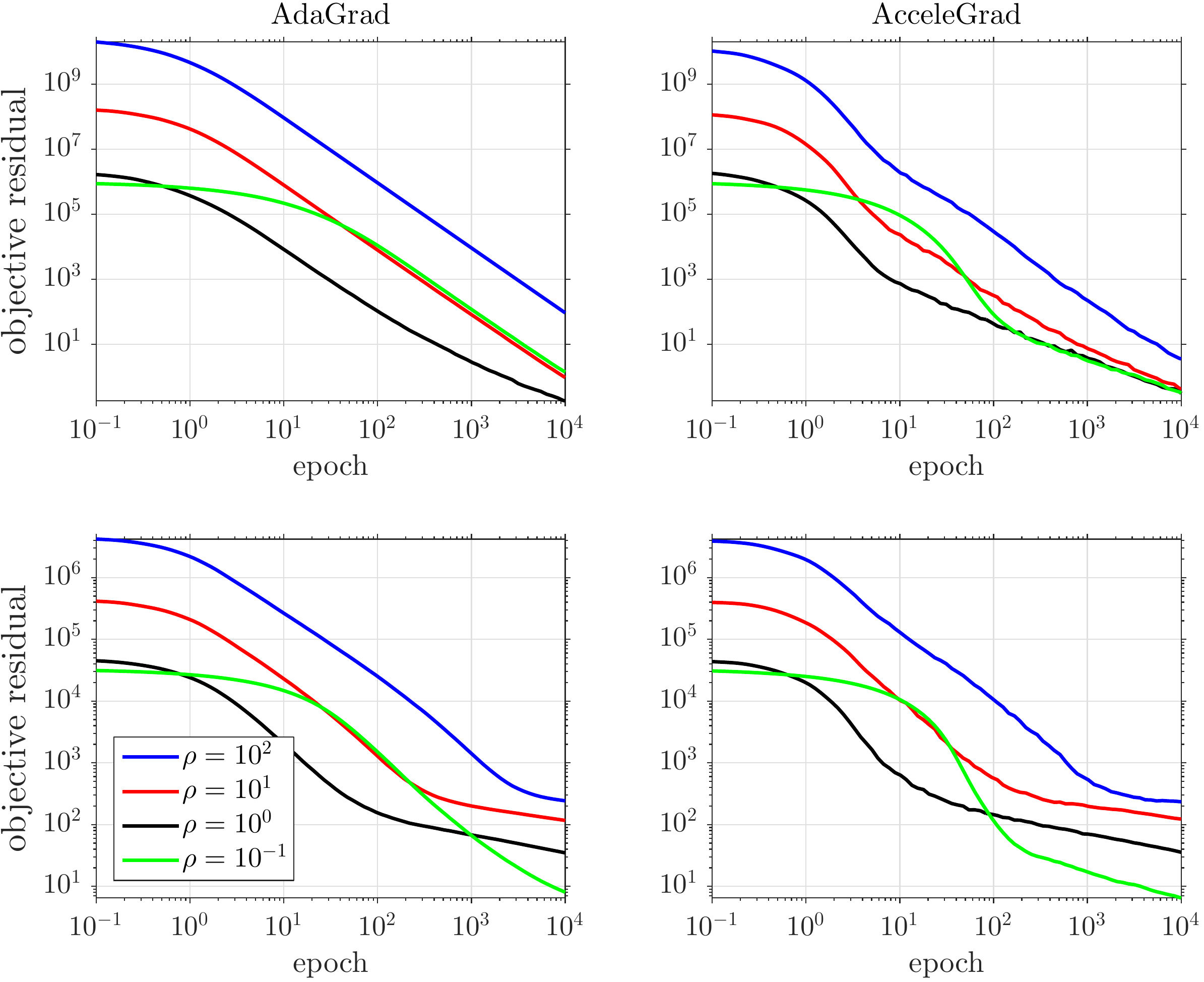}
\centering
\caption{Comparison of AdaGrad and AcceleGrad in stochastic setting for smooth \textit{(top)} and non-smooth \textit{(bottom)} problems. Methods are tweaked to ignore the projection. Epoch denotes one full data pass, hence $500$ iterations.}
\label{fig:stochastic-wo-proj}
\end{figure}

\newpage
\subsection{Experiments with Large Minibatch}
\label{app:ExpLargeBatch}
In this section we apply AcceleGrad to a real world stochastic optimization problem
and compare its performance with AdaGrad.
We examine the effect of  minibatch size verses performance.
The large minibatch regime is important when one likes to apply SGD using several machines in parallel. This is done by dividing the minibatch computation between the machines. 
Unfortunately, it is well known that the performance of SGD degrades with the increase of minibatch size $b$. Here, we show that  AcceleGrad might be more appropriate in this case.

Concretely we consider the RCV1\footnote{available in the UCI repository website (\url{https://archive.ics.uci.edu/ml/})} dataset which is a binary labeled set with $20424$ datapoints samples and  $47366$ features.  
We train a classifier for this dataset using logistic loss (smooth case) as well as using the hinge loss (SVM). 
We compare the performance of  AcceleGrad  with AdaGrad. For each method we examine several minibatch sizes, and observe the performance of each method verses the number of epochs (total number of gradients that we have computed).

The results for logistic regression appear in Figure~\ref{fig:stochastic_minibtch_smooth}.
For AdaGrad we see that the performance degrades as we increase the minibatch size beyond $b=1000$. This actually agrees with theory that predicts a degradation with the increase of $b$.

For AcceleGrad we observe an interesting phenomenon: 
if we aim for a very small error (in this case smaller than $10^{-2}$)
then as we increase the minibatch size  the performance actually improves.  
The intuition behind this is the following: upon using small $b$ the gradients are noisy and both AcceleGrad and AdaGrad will obtain the slow $\O(1/\sqrt{T})$ rate, where $T$ is the number of iterations. However, as  $b$ increases the gradients are becoming more accurate and AcceleGrad with obtain a rate approaching 
$\O(1/T^2)$ while AdaGrad will approach $\O(1/T)$ rate. 
Now note that the number of gradient calculations $S$, depends on $b$ and $T$ as follows, 
$
T = S/b~.
$\\
Thus, for small minibatch, both methods will ensure a rate of  $\O(\sqrt{b}/\sqrt{S})$, which clearly degrades with $b$. As $b$ increases AcceleGrad will obtain a rate approaching 
$\O(b^2/S^2)$ while AdaGrad will approach $\O(b/S)$ rate.

We have observed similar behaviour when train an SVM (i.e., using hinge loss). This can be seen in Figure~\ref{fig:stochastic_minibtch_svm}.

Note that we have performed several other experiments with different $D$ parameters, and also different $\ell
_3$ regularization parameters. In all experiments we have seen the same qualitative behaviour that we describe above.

\begin{figure}[t]
\includegraphics[width=0.60\textwidth]{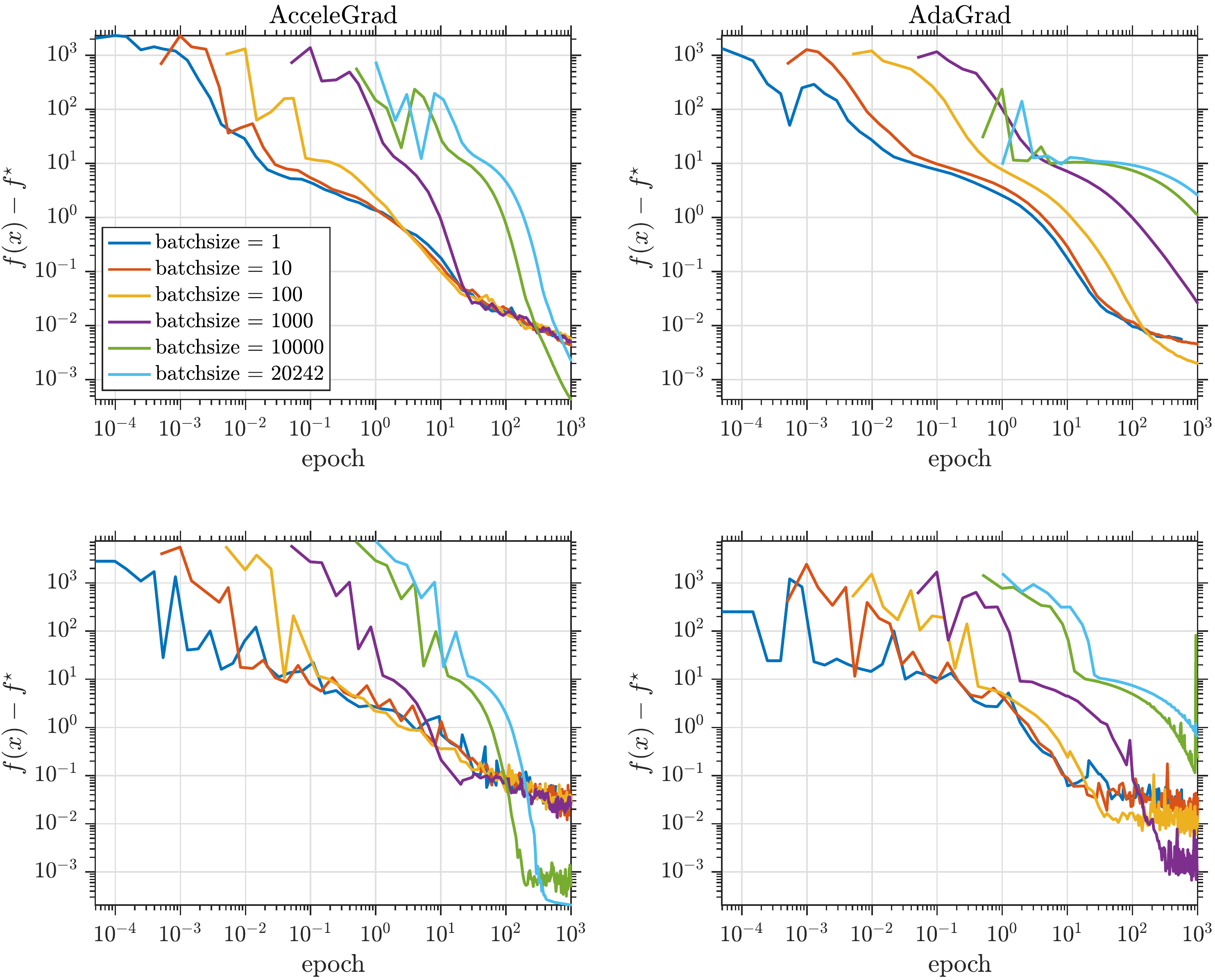}
\centering
\caption{Comparison of AdaGrad and AcceleGrad for logistic regression task using different minibatch sizes. We display the averaged iterates, $\bar{y}_T$ \textit{(top)},  as well as the  non-averaged iterates, $y_t$ \textit{(bottom)}. Both methods use the same parameter $D=10^4$.}
\label{fig:stochastic_minibtch_svm}
\end{figure}

\begin{figure}[h]
\includegraphics[width=0.60\textwidth]{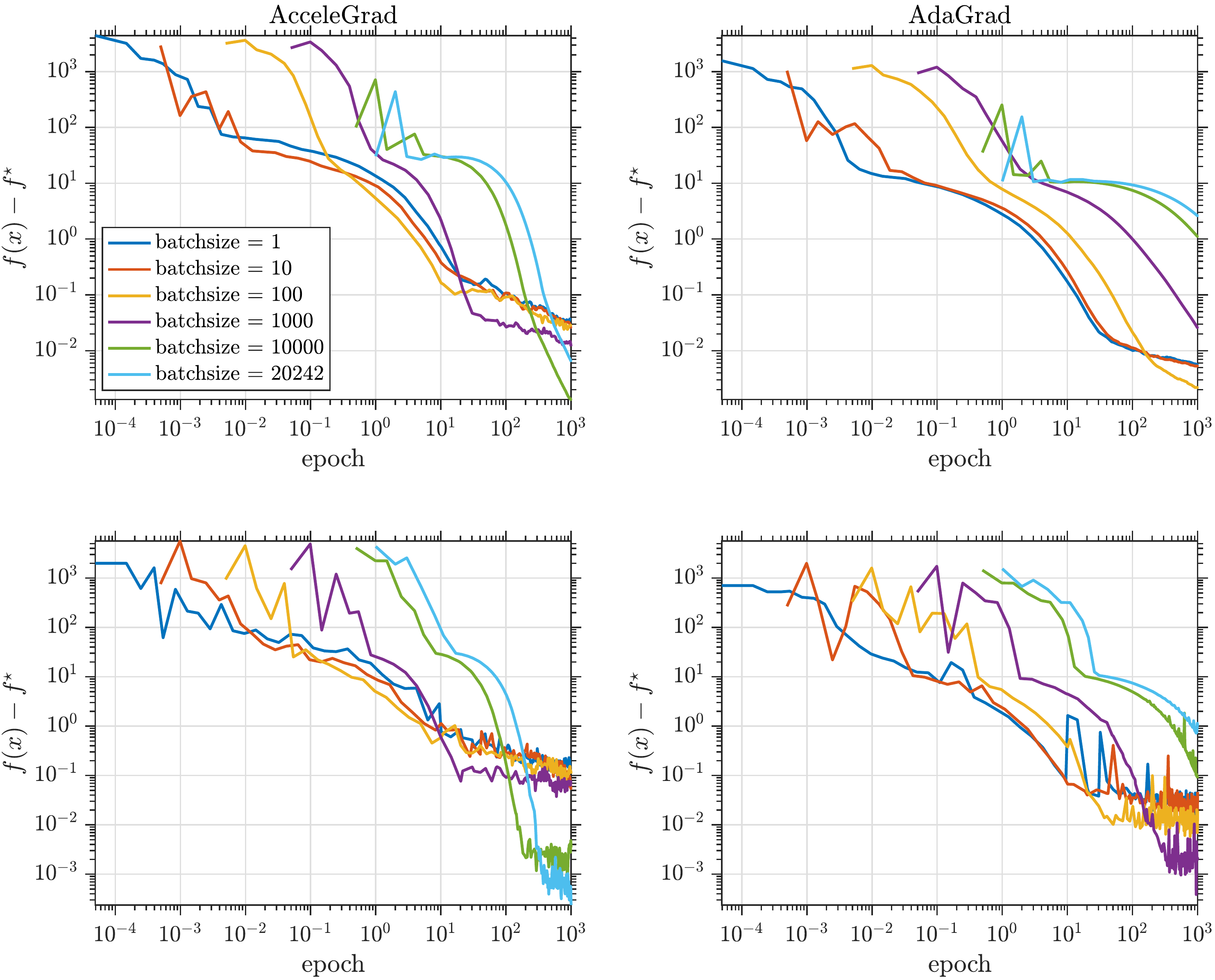}
\centering
\caption{Comparison of AdaGrad and AcceleGrad for training SVM using different minibatch sizes. We display the averaged iterates, $\bar{y}_T$ \textit{(top)},  as well as the  non-averaged iterates, $y_t$ \textit{(bottom)}. Both methods use the same parameter $D=10^4$.}
\label{fig:stochastic_minibtch_smooth}
\end{figure}

\end{document}